%% file: main-arxiv.tex
\documentclass{article} 
\usepackage{iclr2026_conference,times}

\input{math_commands.tex}

\usepackage{macros}

\usepackage{hyperref}
\usepackage{url}

\usepackage{amsthm}
\usepackage{algorithm}
\usepackage{algpseudocode}
\usepackage{graphicx}
\usepackage[export]{adjustbox} 
\usepackage{booktabs}       
\usepackage{subcaption} 
\usepackage{placeins}
\usepackage{mathtools}

\algrenewcommand\textproc{\texttt}

\newtheorem{proposition}{Proposition}

\newtheorem*{proposition*}{Proposition}

\title{Horizon Imagination: Efficient On-Policy \\ Rollout in Diffusion World Models}

\author{Lior Cohen\thanks{Email: liorcohen5@campus.technion.ac.il}\\
Technion
\And
Ofir Nabati\\
Technion
\And
Kaixin Wang\\
Microsoft Research
\And
Navdeep Kumar\\
Technion
\And
Shie Mannor\\
Technion
}

\iclrfinalcopy 
\begin{document}

\maketitle

\begin{abstract}
We study diffusion-based world models for reinforcement learning, which offer high generative fidelity but face critical efficiency challenges in control. 
Current methods either require heavyweight models at inference or rely on highly sequential imagination, both of which impose prohibitive computational costs. 
We propose Horizon Imagination (HI), an on-policy imagination process for discrete stochastic policies that denoises multiple future observations in parallel. HI incorporates a stabilization mechanism and a novel sampling schedule that decouples the denoising budget from the effective horizon over which denoising is applied while also supporting sub-frame budgets.
Experiments on Atari 100K and Craftium show that our approach maintains control performance with a sub-frame budget of half the denoising steps and achieves superior generation quality under varied schedules. 
Code is available at \url{https://github.com/leor-c/horizon-imagination}. 
\end{abstract}

\section{Introduction}
World models \citep{ha2018worldmodels} have emerged as a powerful paradigm for achieving sample-efficient reinforcement learning (RL). 
By learning a generative model of environment dynamics, world models enable agents to produce large quantities of simulated experience that can be used to train controllers, thereby reducing reliance on costly real environment interactions. 
Recently, diffusion world models have become particularly attractive, owing to their unmatched generative fidelity across modalities such as images and video \citep{esser2024scaling,Blattmann_2023_CVPR}. 

Despite these advantages, current approaches face key limitations when applied to control.
Practical deployment often requires controllers that are lightweight and highly computationally efficient, enabling real-time inference with low power consumption.
In contrast, the prevailing trend in world models, particularly diffusion-based models, is toward ever-larger architectures \citep{agarwal2025cosmos, parkerholder2024genie2, genie3, assran2025vjepa2}.
This poses a significant barrier to their direct use at test time, as generation becomes increasingly time- and compute-intensive.
Furthermore, methods that rely on repeated interaction with the world model during controller inference are therefore impractical in many real-world settings.

Alternatively, approaches that train an independent controller via world model imagination \citep{Alonso2024DIAMOND} encounter prohibitive computational overhead with diffusion-based models. 
Since each observation must be generated through a costly multi-step denoising process, and imagination requires sequentially interleaving policy decisions with world model predictions, the resulting imagination process is inherently sequential and computationally intensive.

In this work, we propose a more efficient approach for training discrete stochastic policies with diffusion world models. 
We introduce Horizon Imagination, an on-policy imagination process that denoises multiple future observations simultaneously, reducing the sequential burden of diffusion generation. 
We devise (i) a mechanism to mitigate policy-induced instability during horizon imagination, and (ii) a novel sampling schedule that disentangles the denoising budget from the schedule's decay horizon, enabling independent control over both parameters, supporting sub-frame budgets, and yielding superior generation quality at higher budgets. 
Our approach is training-agnostic and applies to any pre-trained world model with observation-level time conditioning.

We validate our approach on a subset of Atari 100K and Craftium environments, showing that the agent maintains control performance with only half the denoising budget. 
We further analyze world model generation quality under different configurations of the proposed Horizon schedule, spanning fully autoregressive to highly parallel regimes and a range of denoising budgets.
Parallel generation consistently proves advantageous, achieving strong performance even under sub-frame budgets.

\section{Related Work}
\paragraph{Large Diffusion World Models}
Diffusion-based methods have emerged as the state of the art in generative modeling, delivering unmatched fidelity across modalities such as images and video \citep{ho2020DDPM, Rombach_2022_latentDiffusion, ho2022imagen}. 
Building on this success, recent large-scale world models adopt diffusion backbones for their superior generative capabilities \citep{genie3, agarwal2025cosmos, oasis2024, assran2025vjepa2}. 
While motivated by applications such as agent training, these works are limited to conditional video generation and do not address control.

\paragraph{Concurrent Multi-step Generation}
The idea of generating multiple steps in parallel with diffusion world models has been explored in several recent works \citep{chen2024diffusionForcing, rigter2024world, ding2024DWM, jackson2024policyguided}.
In particular, Diffusion Forcing \citep{chen2024diffusionForcing} introduces a multi-step simultaneous generation scheme together with a control framework in which the world model serves as both policy and planner, akin to model predictive control \citep{mpc1989}.
Although effective for learning control, this approach is computationally intensive and often impractical for deployment.
In contrast, our work addresses this limitation by enabling efficient training in imagination, allowing a lightweight policy to be extracted from the world model for deployment.

\citet{rigter2024world} and \citet{jackson2024policyguided} proposed policy-guided imagination methods where during the denoising process the actions are updated in the direction of the score of the policy action distribution $\nabla_a \policy(a | s)$, as in Langevin dynamics \citep{song2019generative} and classifier guidance \citep{Dhariwal2021classifierGuidance}. 
However, these approaches are limited to continuous action spaces.

\citet{ding2024DWM} proposed an off-policy multi-step generation approach for the offline RL setting.
Importantly, none of the above approaches examined how world model generation quality is affected by sampling configurations, including sequential versus parallel generation and the influence of the generation budget.

\paragraph{Diffusion World Model Agents for Control} \  {} 
\citet{Alonso2024DIAMOND} and \citet{yang2024learning} proposed diffusion-based world model methods that follow a traditional step-by-step sequential imagination.
Both reported significant computational overhead. 
In particular, \citet{Alonso2024DIAMOND} provided a detailed runtime analysis, showing that sequential imagination dominates the overall cost.

\section{Preliminaries}
\paragraph{Reinforcement Learning Setup}
Consider a Partially Observable Markov Decision Process (POMDP) environment.
Here, we consider a practical state-agnostic formulation, as the agent has no knowledge about the POMDP hidden state space.
At each step $t=1,2,\ldots$, starting from some initial observation $\obs_{1}$, the agent picks an action $\action_{t}$ and the environment evolves according to $\obs_{t+1}, \reward_{t}, \doneSgnl_{t} \sim p(\obs_{t+1}, \reward_{t}, \doneSgnl_{t} | \obs_{\leq t}, \actions_{\leq t})$, where $\reward_{t} \in \mathbb{R}, \doneSgnl_{t} \in \{0,1\}$ are the reward and termination signals, respectively.
This process repeats until a non-zero termination signal is observed.
The agent's goal is to maximize its expected return $\E [ \sum_{t=0}^{\infty}\discountF^{t} \reward_{t+1}]$.

\paragraph{World Model Agents}
In reinforcement learning (RL), “world models” \citep{ha2018worldmodels} denote a class of model-based methods where the controller is trained entirely on simulated rollouts generated by a learned dynamics model, whereas real environment interactions are used for training the model. 
World-model agents typically consist of four components: a representation model for encoding and decoding observations, a dynamics model, a controller, and a replay buffer.

World model agents are trained either in an online or an offline setting.
Online training learns the components jointly from scratch, fitting the representation and dynamics to real interactions while optimizing the controller via model-generated rollouts, whereas offline training uses pre-collected data to train the components sequentially: representation, dynamics, and controller.
Our method is applicable in both settings.


\paragraph{Diffusion Framework}
In this work, we employ the 1-Rectified Flow (RF) framework \citep{liu2023flow} for its simplicity and robustness, though our method remains compatible with other diffusion variants. 
Formally, given samples from an unknown target distribution $\dataDist$, RF learns a time-dependent vector field $\RFVF$ parameterized by $\theta$ that transports samples $\FMSamples^{0} \sim \noiseDist$ from a chosen prior distribution $\noiseDist$ toward data samples $\FMSamples^{1} \sim p^1 \approx \dataDist$. 
The evolution is governed by  
\begin{equation*}
    \diff x^{\RFTime} = \RFVF(x^{\RFTime}, \RFTime) \diff \RFTime, 
    \quad \RFTime \in [0,1],
\end{equation*}
where superscripts denote denoising time.
The vector field $\RFVF$ is trained to estimate 
$\E[\FMSamples^{1} - \FMSamples^{0} \mid \FMSamples^{\RFTime}]$, with $x^{\RFTime} = \RFTime x^1 + (1-\RFTime) x^0$, by minimizing the regression objective  
\begin{equation*}
    L(\theta) = \E_{x^0, x^1, \RFTime} 
    \left\Vert \RFVF(x^{\RFTime}, \RFTime) - (x^1 - x^0) \right\Vert^2 .
\end{equation*}


%
%
%
%

\section{Method}  

Multi-step trajectory generation naturally alternates between producing an observation $\obs_t$ and sampling an action $\action_t \sim \policy(\cdot  \mid  \obs_{\leq t}, \actions_{<t})$, as each step depends on the output of the previous one.
However, since the generative process of diffusion methods involves multiple (costly) forward passes for generating each observation, the above sequential generation of a trajectory becomes highly sequential and expensive, both in time and in compute.

We propose a more efficient approach where the denoiser $\RFVF$ denoises multiple observations in parallel, conditioned on actions that are updated to track the policy as it co-evolves with the world model and incorporates newly available information.
However, because action changes alter subsequent observations through different dynamics, they may in turn trigger further updates to both the policy and observations and destabilize the denoising process (Figure \ref{fig:naive-issue-example}). Our approach therefore aims to minimize unnecessary action changes due to stochasticity, while ensuring that actions remain aligned with the evolving policy.

\begin{figure}
    \centering
    \includegraphics[width=\linewidth]{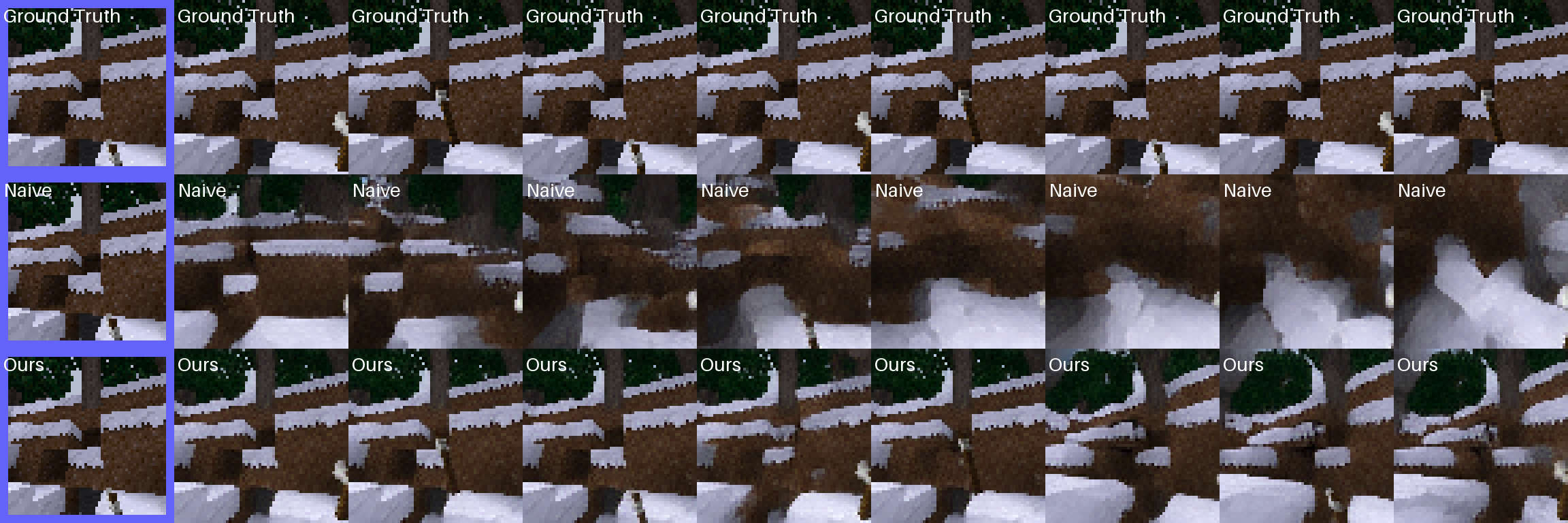}
    \caption{Example of generation instabilities observed in \texttt{Craftium/ChopTree-v0} under naive action sampling during horizon imagination. In contrast, our stable sampling method produces robust, high-quality generations. The first context frame is highlighted with a blue border. 
    }
    \label{fig:naive-issue-example}
\end{figure}

\subsection{World Model Training}

The world model is a learned generative model of the environment dynamics
\[
p(\obsLatent_{t+1}, \reward_t, \doneSgnl_t \mid \obsLatent_{\le t}, \actions_{\le t}),
\]
operating over sequences of compact latent representations $\obsLatent_t$ of observations $\obs_t$.
These latents are produced by the encoder of the representation model, whose output is constrained to $\obsLatent_t \in [-1,1]^{\obsShape}$ via a $\tanh$ activation.
The world model comprises a denoiser $\RFVF$ for generating future observations and a separate, lightweight reward–termination predictor.
This design cleanly separates a large, general-purpose dynamics module from small, task-specific predictors.

During training, $\horizon$-step trajectory segments comprising observations $\obs_{1:\horizon}$, actions $\actions_{1:\horizon}$, rewards, and terminations are sampled from the replay buffer. 
The observations $\obs_{1:\horizon}$ are encoded into latent representations $\obsLatent_{1:\horizon}$, serving as the clean targets $\obsLatent^{1} \sim \dataDist$.
A uniform noise sample $\obsLatent^{0} \sim \noiseDist = \mathcal{U}([-1,1])^{\horizon \times \obsShape}$ of matching shape is also drawn.
Following \citet{chen2024diffusionForcing}, an independent denoising time is sampled for each observation, i.e., $\RFTime \sim \mathcal{U}([0,1])^{\horizon}$.
With probability $0.2$, a clean prefix is provided by setting $\RFTime_{\le \cleanCtxLen} = 1$, where $\cleanCtxLen \sim \mathcal{U}(\{1,\ldots,\lfloor 0.7\horizon \rfloor\})$. 
We empirically found that it improves generation quality by better matching inference-time conditions, where the initial context is noise-free.

Given $\obsLatent^{1} \sim \dataDist$, $\obsLatent^{0} \sim \noiseDist$, and $\RFTime$, the denoiser $\RFVF$ is trained to approximate
\begin{equation*}
\E_{\obsLatent^{0}, \obsLatent^{1}, \RFTime} [ \obsLatent^{1}_{t} - \obsLatent^{0}_{t} \mid \obsLatent^{\RFTime_1}_{1}, \ldots, \obsLatent^{\RFTime_t}_{t}, \actions_{<t}, \RFTime_{\leq t} ], \quad\forall \, 1 \le t \le \horizon ,
\end{equation*}
by minimizing the rectified flow regression objective
\begin{equation}
\label{eq:wm-objective}
L(\theta) = \frac{1}{\horizon} \sum_{t=1}^{\horizon}
\E_{\obsLatent^{0}, \obsLatent^{1}, \RFTime}
\Vert
\RFVF( \obsLatent^{\RFTime_1}_{1}, \ldots, \obsLatent^{\RFTime_t}_{t}, \actions_{<t}, \RFTime_{\leq t})
- (\obsLatent^{1}_{t} - \obsLatent^{0}_{t})
\Vert^2 ,
\end{equation}
where $\obsLatent^{\RFTime_t}_{t} = \RFTime_t \, \obsLatent^{1}_{t} + (1-\RFTime_t) \, \obsLatent^{0}_{t}$.
All denoiser outputs are computed in parallel in a single forward pass.
Each output corresponds to a particular observation at timestep $t$, and uses only inputs up to its timestep, preserving causality.
Further details, including reward–termination modeling, are provided in Appendix~\ref{sec:appendix-wm-desc}. 
Pseudocode is given in Appendix \ref{sec:appendix-pseudocode}.


\subsection{Horizon Imagination (World Model Inference)}

Denoising multiple observations in parallel during imagination implies denoising far-future observations before near-future ones are fully clean.
Moreover, denoising such far-future observations requires access to future actions up to that timestep (see Eq. \ref{eq:wm-objective}).
These actions therefore must be conditioned on intermediate, still-noisy observations.
We therefore query the policy before each denoising step, yielding an increasingly informed categorical action distribution for each timestep.
Later, the policy is optimized across all noise levels based on its imagined interactions.

However, naively sampling from these distributions can lead to frequent action changes across denoising steps, particularly when the policy outputs high-entropy distributions, even when the policy itself remains fixed.
To address this, we introduce a stable sampling method that reduces unnecessary action changes throughout the denoising process.
For clarity, we present the method at the level of a single action at some arbitrary time $t$.

\paragraph{Minimizing Action Changes Over the Denoising Process}
Consider a set of $\numActions$ discrete actions $\mathcal{A} = [\numActions]$.
At denoising time $\tau$, the policy induces a categorical distribution $\pi^{\RFTime} = \pi(\cdot \vert \obsLatent_{\leq t}^{\RFTime}, \actions_{< t}) \in \Delta^{\numActions - 1}$ where $\Delta^{\numActions-1}$ is the standard $(\numActions-1)$-simplex.
Inspired by inverse transform sampling, our method leverages a single multivariate uniform sample that is consistently mapped to an action under the evolving distributions $\pi^{\RFTime}$.


At the beginning of the denoising process, we draw two random objects:
(i) a vector $\omega \sim \mathcal{U}([0,1))^{\numActions-1}$, and
(ii) a random permutation $\actionPermutation: [\numActions] \rightarrow [\numActions]$ that fixes an order over the actions.
%
%
At each denoising time $\RFTime$, the action is given by  $\action^\RFTime = \action(\policy^\RFTime, \omega)$, ensuring that the same uniform sample $\omega$ yields consistent decisions across the evolving distributions $\policy^\RFTime$. 
The action $\action(\policy, \omega)$ is determined by scanning $\omega$'s coordinates:  
\begin{equation}
\label{eq:stable-actions-fn}
\action(\policy, \omega) =
\begin{cases}
\actionPermutation(i) & \text{for the smallest $i$ with }\; \omega_i < \actionThreshold_i(\policy), \\[6pt]
\actionPermutation(\numActions) & \text{if no such $i$ exists,}
\end{cases}
\end{equation}
where  
\begin{equation*}
\actionThreshold_i(\policy) = 
\begin{cases}
    \frac{\policy_{\actionPermutation(i)}}{S_{i}(\policy)}    &    S_{i}(\policy) > 0 \\
    \policy_{\actionPermutation(i)} & S_{i}(\policy) = 0 
\end{cases} 
,
\qquad 
S_i(\policy) = \sum_{j=i}^{\numActions} \policy_{\actionPermutation(j)} .
\end{equation*}
Intuitively, $\actionThreshold_i(\policy)$ is the conditional probability of choosing the $i$-th action in the order $\actionPermutation$, given that all earlier actions were skipped.

\begin{samepage}
\begin{proposition}
    \label{prop:action-sampling-scheme}
The sampling scheme $\action(\cdot, \cdot)$ satisfies the following properties:
\begin{enumerate}
    \item For any distribution $\vp \in \Delta^{\numActions - 1}$, the random variable $\action(\vp, \omega)$ with $\omega \sim \mathcal{U}([0,1))^{\numActions-1}$ is distributed according to $\vp$, i.e., $\action(\vp, \cdot) \sim \vp $.

    \item For any $\vp, \vq \in \Delta^{\numActions-1}$ (e.g., $\vp = \policy^{\RFTime_1}$ and $\vq = \policy^{\RFTime_2}$ with $\RFTime_1 < \RFTime_2$), consider the event
    \[
    A \;=\; \{\omega \ | \ \action(\vp, \omega) \neq \action(\vq, \omega)\}.
    \]
    Let $\delta(\vp,\vq) = \tfrac{1}{2}\,\Vert \vp - \vq \Vert_1$ denote the total variation distance between $\vp$ and $\vq$. Then
    \[
    \delta(\vp,\vq) \;\leq\; \Pr(A) \;\leq\; \Vert \alpha(\vp) - \alpha(\vq) \Vert_1 .
    \]
\end{enumerate}
\end{proposition}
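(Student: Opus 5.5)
We prove the first item by induction along the scan order $\actionPermutation$, and the second by a coupling lower bound plus a union bound over the coordinates of $\omega$. Throughout we condition on $\actionPermutation$; both claims hold for every fixed permutation, hence also after averaging over it. We write $\alpha_i$ for $\actionThreshold_i$, so that $\alpha(\vp) = (\alpha_1(\vp),\dots,\alpha_{\numActions-1}(\vp))$.

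For the first item, let $E_i$ be the event that coordinates $1,\dots,i-1$ were all skipped, i.e.\ $\omega_j \ge \alpha_j(\vp)$ for every $j<i$. We show by induction that $\Pr(E_i) = S_i(\vp)$. The base case is $S_1 = 1$. For the step, independence of the coordinates of $\omega$ gives
\[
\Pr(E_{i+1}) = \Pr(E_i)\,\bigl(1-\alpha_i(\vp)\bigr) = S_i(\vp) - \vp_{\actionPermutation(i)} = S_{i+1}(\vp)
\]
whenever $S_i(\vp)>0$. When $S_i(\vp)=0$, we have $\vp_{\actionPermutation(i)} = 0$, so $\alpha_i(\vp) = 0$ and both sides are $0$. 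Consequently
\[
\Pr\bigl(\action(\vp,\omega) = \actionPermutation(i)\bigr) = \Pr(E_i)\,\alpha_i(\vp) = \vp_{\actionPermutation(i)}
\]
for $i<\numActions$. The last action receives the remaining mass $S_{\numActions}(\vp) = \vp_{\actionPermutation(\numActions)}$.

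For the lower bound in the second item, note that $(\action(\vp,\omega), \action(\vq,\omega))$ is a coupling of $\vp$ and $\vq$ by the first item. The standard coupling inequality then gives $\Pr(A) \ge \Pr(\action(\vp,\omega)\neq\action(\vq,\omega)) \ge \delta(\vp,\vq)$. Concretely, for each $a$,
\[
\Pr\bigl(\action(\vp,\omega)=a=\action(\vq,\omega)\bigr) \le \min(\vp_a,\vq_a),
\]
and summing over $a$ gives $\Pr(A^c) \le 1-\delta(\vp,\vq)$.

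For the upper bound, we show that $A$ is contained in $\bigcup_{i=1}^{\numActions-1} B_i$, where $B_i$ is the event that $\omega_i$ lies between $\alpha_i(\vp)$ and $\alpha_i(\vq)$, i.e.\ $\min(\alpha_i(\vp),\alpha_i(\vq)) \le \omega_i < \max(\alpha_i(\vp),\alpha_i(\vq))$. Suppose no $B_i$ occurs. Then for every $i$, the test $\omega_i < \alpha_i(\cdot)$ has the same outcome under $\vp$ and under $\vq$. Both scans therefore stop at the same smallest index, or both fall through to $\actionPermutation(\numActions)$, so the two actions agree. Since $\omega_i$ is uniform, $\Pr(B_i) = |\alpha_i(\vp)-\alpha_i(\vq)|$. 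A union bound yields $\Pr(A) \le \sum_i |\alpha_i(\vp)-\alpha_i(\vq)| = \Vert\alpha(\vp)-\alpha(\vq)\Vert_1$.

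The main obstacle is the bookkeeping in the degenerate case $S_i=0$ in the first item. There the definition of $\alpha_i$ switches branches, and we must check that this causes no inconsistency: once the remaining mass is zero, the event $E_i$ already has probability zero. The upper bound requires no such care, since it is purely a pointwise argument on $\omega$.
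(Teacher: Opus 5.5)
Your proof is correct. For the first item and the lower bound you follow the paper. Your induction $\Pr(E_i)=S_i(\vp)$ is the paper's telescoping product written recursively, and you treat the $S_i=0$ case more carefully than the paper does; the paper just asserts $\Pr(\action(\vp,\cdot)=i)=\Pr(\omega_i<\alpha_i(\vp))$ there, which holds only because both sides vanish. The lower bound is the same coupling inequality, which you prove inline via $\sum_a\min(\vp_a,\vq_a)=1-\delta(\vp,\vq)$ instead of citing it.

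The upper bound is where the routes differ. The paper sets $u_i=\max(\alpha_i(\vp),\alpha_i(\vq))$ and $l_i=\min(\alpha_i(\vp),\alpha_i(\vq))$. It partitions $[0,1)^{\numActions-1}$ by the first index $i$ with $\omega_i<u_i$ into three kinds of set: both scans stop at $i$, exactly one stops at $i$, or both fall through. It then shows that $A$ is exactly the union of the sets where exactly one scan stops, which gives the identity
\[
\Pr(A)=\sum_{i=1}^{\numActions-1}\bigl|\alpha_i(\vp)-\alpha_i(\vq)\bigr|\prod_{j<i}\bigl(1-u_j\bigr),
\]
and only then drops the products. You prove just the one-sided containment $A\subseteq\bigcup_i B_i$, with each $B_i$ depending on a single coordinate, and apply a union bound.

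Your version is shorter. It does not need the reverse containment, namely that a straddle at the first relevant index really produces different actions, which uses injectivity of $\actionPermutation$. It also uses only that each $\omega_i$ is marginally uniform, not that the coordinates are independent. The paper's version costs a little more bookkeeping but gives an exact expression for $\Pr(A)$. That makes the slack in the bound $\Vert\alpha(\vp)-\alpha(\vq)\Vert_1$ explicit: it is precisely the discarded factors $\prod_{j<i}(1-u_j)$.
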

\end{samepage}
We defer the proof to Appendix \ref{sec:appendix-proofs}.
As an immediate corollary, if $p = q$ then $\Pr(A) = 0$, meaning that when the distribution does not vary between denoising steps, the actions necessarily remain unchanged.
In contrast, naively sampling a fresh action at each step is likely to induce frequent action changes under sufficiently high-entropy policies.




\begin{figure}[t]
\includegraphics[ width=\linewidth]{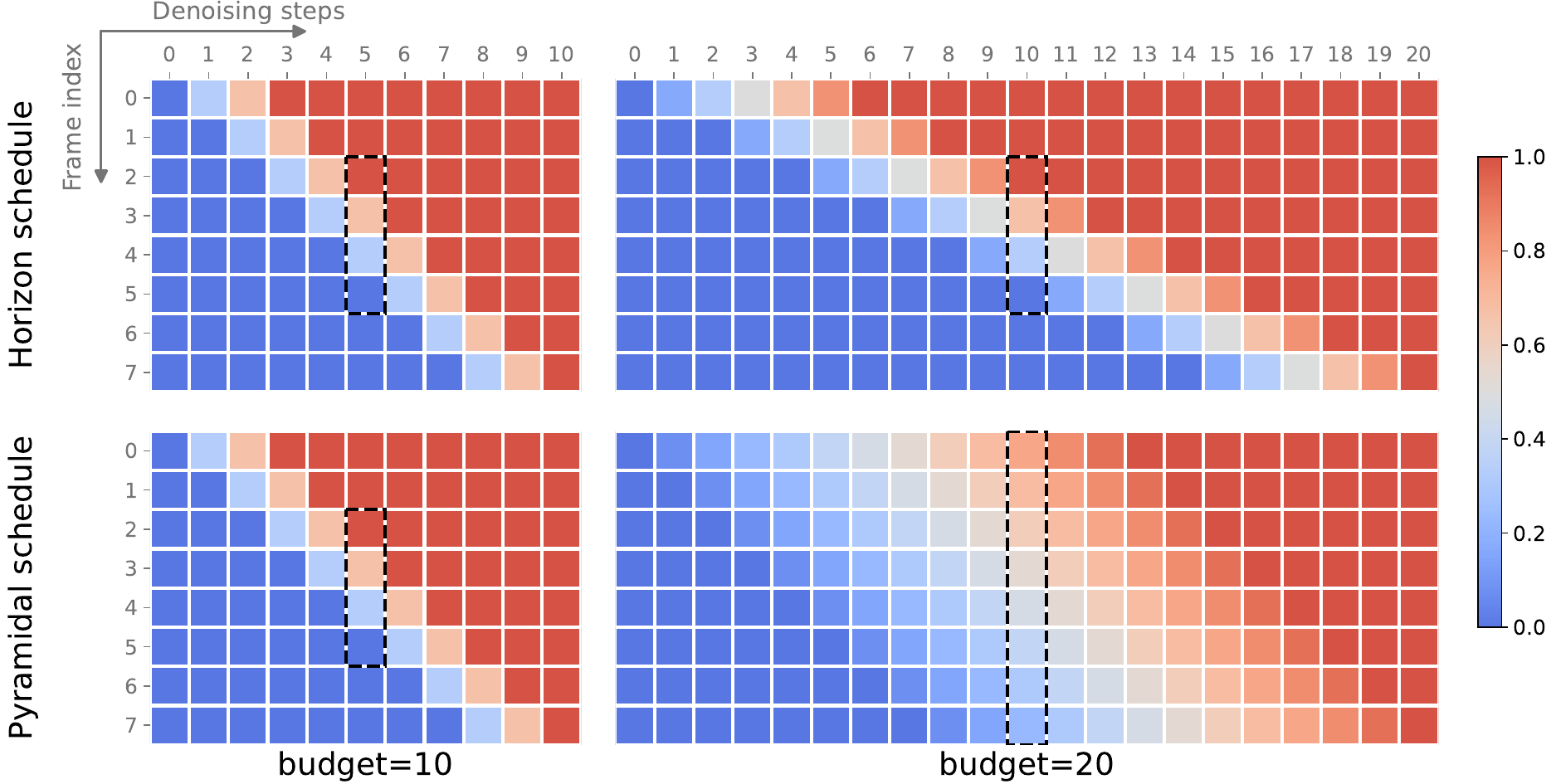}
\caption{Comparison of the Pyramidal schedule \citep{chen2024diffusionForcing} and the proposed Horizon schedule (transposed). Horizon fixes the decay horizon ($\scheduleSlope=3$), yielding consistent schedules across budgets, whereas in the Pyramidal schedule the decay horizon drifts with budget, as the two are entangled, leading to degraded generation quality at higher budgets.}
\end{figure}

\paragraph{The Horizon Schedule} 
We propose a novel time schedule inspired by the pyramidal schedule of \citet{chen2024diffusionForcing}, which denoises near future observations before far future ones.
Unlike prior approaches, our schedule \emph{disentangles} the rate at which denoising progresses across time, i.e., the \emph{decay horizon}, from the \emph{denoising budget}, i.e., how many denoising steps are performed in total. 
This design provides precise and independent control over both parameters.

Formally, let $\scheduleSlope \in [1, \horizon]$ specify the decay horizon, i.e., the number of steps over which the denoising schedule decays from $1$ to $0$, and let $\scheduleBudget \in \mathbb{N}$ denote the total budget of denoising steps.
We define the sampling schedule as a matrix $\scheduleMatrix \in [0,1]^{(\scheduleBudget + 1) \times \horizon}$, where each of the first $\scheduleBudget$ rows specifies the denoising times $\RFTime$ assigned to all $\horizon$ observations at a given step. 
The last row ensures that $\RFTime_t = 1$ for all $1 \leq t \leq \horizon$.
To determine the values of the entries of $\scheduleMatrix$, consider the lines 
\[
\scheduleEntryFn(t, b) = -\frac{1}{\scheduleSlope} t + \frac{b}{\scheduleBudget}(1 + \frac{\horizon - 1}{\scheduleSlope})  ,
\]
with slope $-\frac{1}{\scheduleSlope}$, sequence time index $0 \leq t < \horizon$, and denoising step index $0 \leq b \leq \scheduleBudget$. 
The entries of $\scheduleMatrix$ are given by
\begin{equation}
\label{eq:slope-schedule}
    \scheduleMatrixElement_{i, j} = \clampOp ( \scheduleEntryFn(j-1, i-1) , 0, 1).
\end{equation}

Importantly, the proposed schedule allows any combination of $\scheduleBudget$ and $\scheduleSlope$, and in particular $\scheduleBudget < \horizon$.
Thus, while standard auto-regressive generation requires $\scheduleBudget \geq \horizon$, and is limited to multiples of $\horizon$, our schedule can use arbitrary positive integer budget values, including sub-frame budgets.


\subsection{Actor-Critic Training}
\label{sec:actor-critic-training}
In the previous sections, we described how the interaction between the policy and the world model unfolds.
These interactions generate the data used to optimize the controller. 
Unlike standard imagination, our parallel generation requires the policy to produce outputs at every denoising step, from pure noise to fully denoised samples.  
Consequently, the policy must be trained across all noise levels to ensure meaningful outputs.


We build on an actor-critic method introduced in prior works \citep{cohen2025uncovering, Hafner2025dreamerV3}.
The actor and critic are implemented as separate networks with identical backbones and distinct linear heads for their respective outputs.
The critic operates only on fully denoised inputs, providing bootstrapped value estimates for the actor.
The actor is trained using the standard REINFORCE objective \citep{sutton1999REINFORCE} applied at every trajectory step before termination, leveraging interactions generated across all noise levels.
Formally, conditioned on a context $\obsLatent_{\leq k}^{1}, \actions_{< k}$ of $k$ clean frames, the policy objective for trajectory time $k+t$ and denoising time $\RFTime_{t}$ is given by
\begin{equation*}
    \RLAdvantage_{k+t} \log \policy(\action_{k + t}^{\RFTime_{t}} | \obsLatent_{\leq k}^{1},  \obsLatent_{k + 1}^{\RFTime_{1}}, \ldots, \obsLatent_{k + t}^{\RFTime_{t}}, \actions_{<k + t}, \RFTime)   + \eta \mathcal{H}(\policy_{k+t}^{\RFTime_{t}}) ,
\end{equation*}
where $\RLAdvantage_{t}$ is the advantage at step $t$ and $\mathcal{H}$ is the policy entropy.
To ensure balanced policy updates over denoising times, we restrict updates to steps in which the denoising time of the next observation increases.
Full actor–critic details are given in Appendix \ref{sec:appendix-actor-critic}.

\begin{algorithm}[t]
\caption{Horizon Imagination}
\label{alg:horizon-imagination}
\begin{algorithmic}[1]
    \Require World model denoiser $\RFVF$, policy $\policy$, generation horizon $\horizon$, denoising budget $\scheduleBudget$, decay horizon $\scheduleSlope$, reward and termination models $\rewardModel, \doneModel$, optional context segment $(\obsLatent_{\leq\cleanCtxLen}, \actions_{<\cleanCtxLen})$ with $\cleanCtxLen \geq 0$
    \State Compute the Horizon schedule $\scheduleMatrix$ given $\horizon, \scheduleBudget, \scheduleSlope$    \Comment{(Eq. \ref{eq:slope-schedule})}
    \State Sample $\obsLatent^{0}_{\cleanCtxLen + 1}, \ldots, \obsLatent^{0}_{\cleanCtxLen + \horizon} \sim \noiseDist$
    \State Sample $\omega_{t}, \actionPermutation_{t}$ for every $\cleanCtxLen + 1 \leq t \leq \cleanCtxLen + \horizon - 1$
    \If{$\cleanCtxLen > 0$}
        \State Sample first action $\action_{\cleanCtxLen} \sim \policy(\cdot \, | \, \obsLatent_{\leq \cleanCtxLen}, \actions_{< \cleanCtxLen})$
    \EndIf
    \For{each denoising iteration $b$ from 1 to $\scheduleBudget$}
        \State Set $\RFTime \leftarrow \mK_{b, :}$ and $\diffD\RFTime \leftarrow \mK_{b + 1, :} - \mK_{b, :}$
        \State Denote $\obsLatent_{\leq \cleanCtxLen + t}^{\RFTime} = (\obsLatent^{1}_{\leq\cleanCtxLen}, \obsLatent_{\cleanCtxLen+1}^{\RFTime_{1}}, \ldots, \obsLatent_{\cleanCtxLen + t}^{\RFTime_{t}} )$
        \For{$1 \leq t \leq \horizon - 1$}
            \State Compute action distributions $\policy_{\cleanCtxLen + t}^{\RFTime} \gets \policy( \cdot \, | \, \obsLatent_{\leq \cleanCtxLen + t}^{\RFTime}, \actions_{< \cleanCtxLen + t}^{\RFTime}, \RFTime_{\leq t})$ 
            \State Generate actions $\action_{\cleanCtxLen + t}^{\RFTime_{t}} = \action\left( \policy_{\cleanCtxLen + t}^{\RFTime}, \omega_{t}\right)$    \Comment{(Eq. \ref{eq:stable-actions-fn})}
        \EndFor
        \State Perform a denoising step: $\obsLatent^{\RFTime_{t} + \diffD\RFTime_t}_{\cleanCtxLen + t} \gets \obsLatent^{\RFTime_{t}}_{\cleanCtxLen + t} + \RFVF(\obsLatent_{\leq \cleanCtxLen + t}^{\RFTime}, \actions_{<\cleanCtxLen + t}^{\RFTime}, \RFTime_{\leq t}) \,\diffD\RFTime_{t} $  
        \Statex for all $1\leq t \leq \horizon$ with $\diffD\RFTime_{t} > 0$   \Comment{(single forward pass)}
    \EndFor
    \State Compute rewards and terminations $\bar{\reward}_{\leq \cleanCtxLen + \horizon}, \doneSgnl_{\leq \cleanCtxLen + \horizon}$ with $\rewardModel, \doneModel$
    \State \Return imagined trajectory $\obsLatent_{\leq \cleanCtxLen + \horizon}^{1}, \actions_{< \cleanCtxLen + \horizon}, \bar{\reward}_{\leq \cleanCtxLen + \horizon}, \doneSgnl_{\leq \cleanCtxLen + \horizon}$
\end{algorithmic}
\end{algorithm}

\section{Experiments}
\subsection{Empirical Analysis of Action Consistency}
We conduct two controlled experiments to evaluate the proposed stable action sampling mechanism. 

In the first experiment (Figure~\ref{fig:action-changes-study-distributions}), we examine the empirical distribution of average action changes between random source and target distributions $\vp, \vq$ using our method.
We sample $10^4$ pairs of categorical distributions from a uniform Dirichlet prior with $\numActions$ actions.  
For each pair, we draw $10^6$ $(\omega, \actionPermutation)$ samples, compute the induced actions, and record the average number of changes.  
This is repeated for various $\numActions$ values.  
Empirically, our method performs very close to the tight theoretical lower bound, the total variation $\delta(p,q)$, across all $\numActions$ values, demonstrating near-optimal performance.

In the second experiment (Figure~\ref{fig:action-changes-comparison}), we compare naive sampling with the proposed stable mechanism in a simulated denoising process.
We interpolate between a source and target categorical distribution with $\numActions = 10$ actions across $8$ steps, then fix the target distribution for $8$ additional steps.
Both source and target are sampled from a Dirichlet distribution under three entropy settings: low $(0.2, \ldots, 0.2)$, uniform, and high $(5, \ldots, 5)$.
For each entropy setting, we simulate the process for $1{,}000$ source-target pairs and estimate the average number of action changes across all 16 steps over $10^6$ simulations per pair.
Our results show that while the proposed method yields at most one action change on average across all 16 steps, naive sampling incurs changes in most steps, altering more than half overall.
For high-entropy distributions, the number of action changes decreases under our method but increases with naive sampling, consistent with our theoretical results.

\begin{figure}[t]

\begin{subfigure}{0.45\linewidth}
        \centering
        \includegraphics[width=\linewidth]{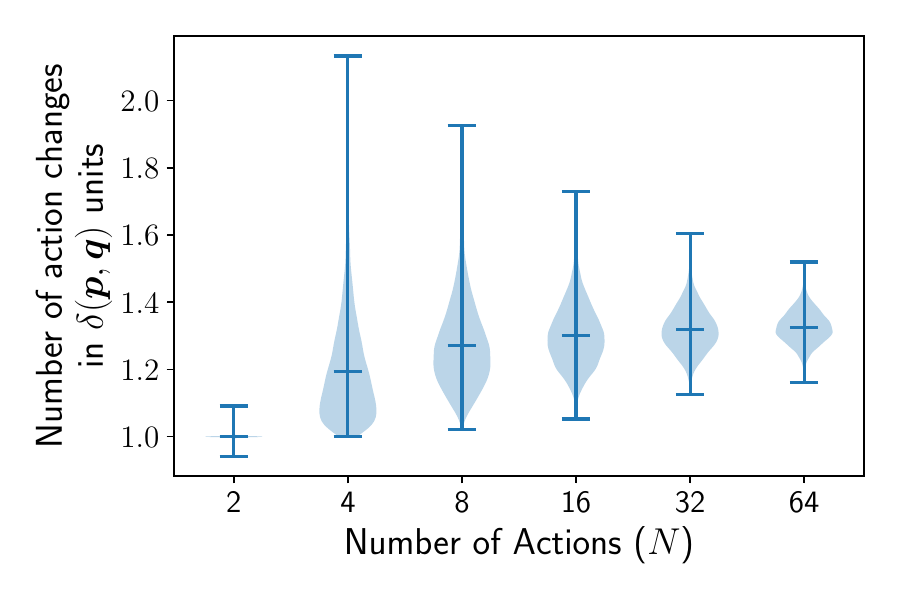}
        \caption{Distributions of the average number of action changes, in $\delta(\vp, \vq)$ units, for various $N$ values. Minimum, mean, and maximum are indicated.}
        \label{fig:action-changes-study-distributions}
    \end{subfigure}
    \hfill
    \begin{subfigure}{0.45\linewidth}
        \centering
        \includegraphics[width=\linewidth]{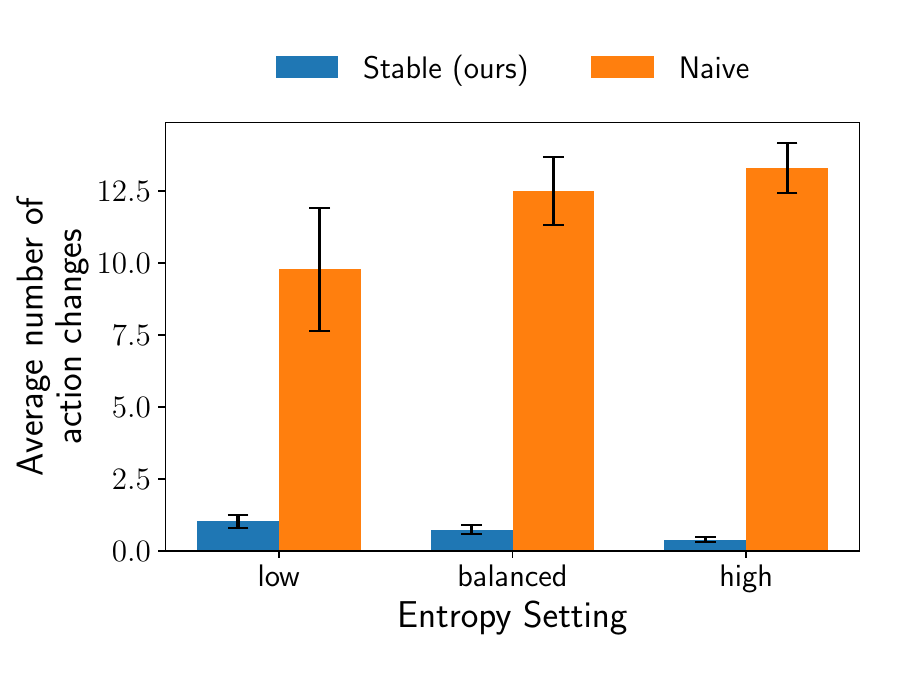}
        \caption{Comparing the number of action changes between naive sampling and the proposed method (mean $\pm$ std).}
        \label{fig:action-changes-comparison}
    \end{subfigure}

\caption{Empirical study of the average number of action changes under various settings.}
\label{fig:action-changes}
\end{figure}

\subsection{Control Performance}
\label{sec:control-perf}
To study whether Horizon Imagination enables a more efficient training, we benchmark our world model agent in an online RL setting.
The agent's training process involves a repeated cycle of four stages: data collection through environment interactions, representation learning, world model learning, and actor-critic learning in imagination. 
For comparison, we consider three baselines that differ only in their imagination configuration: a standard autoregressive baseline with 32 denoising steps in total (one step per observation), denoted $(\scheduleSlope=1, \scheduleBudget=32)$, and two variants with decay horizon $\scheduleSlope=4$ and budgets of $\scheduleBudget=16$ and $\scheduleBudget=32$.

\subsubsection{Setup}
\paragraph{Agent Implementation}
For latent representations, we employ the continuous autoencoder image tokenizer of \citet{agarwal2025cosmos}, and apply a final $\tanh$ activation to the encoder output to constrain values to $[-1,1]$. 
The denoiser network $\RFVF$ is implemented as an action-conditioned causal Diffusion Transformer (DiT) \citep{Peebles_2023_ICCV} architecture. 
This model operates on sequences of latent observations, where action and denoising time conditioning are carried via the adaptive layer normalization (AdaLN) layers of the DiT architecture.
Rewards and termination signals are modeled using a lightweight recurrent neural network (RNN) with a compact convolutional neural network (CNN) feature extractor and two output heads, operating on encoded trajectories. 
The same RNN architecture is used for both the actor and critic networks. 
For simplicity, we omit action conditioning in the actor, critic, and reward-termination models, as in all benchmarks considered the observations alone provide sufficient information.
The agent uses fixed hyperparameters across all experiments and comprises a tokenizer ($22.5$M), world model ($67$M), and actor-critic ($7.5$M), totaling $97$M parameters.
We defer the full implementation details to Appendix \ref{sec:appendix-experimental-setup}.

\paragraph{Benchmarks}
We evaluate on four Atari100K \citep{Kaiser2020Atari100K} games and four Craftium \citep{malagon2025craftium} games, both benchmarks involving visual inputs and discrete actions.
All environments were run for 100K interaction steps, except \texttt{Craftium/SmallRoom-v0}, which was capped at 30K due to its lower difficulty.
The well-established Atari benchmark serves as a solid control evaluation suite and is visually simpler.
In practice, we found that even a single denoising step can yield satisfactory generation quality in Atari.
By contrast, Craftium presents richer and more complex observations, providing a more challenging test for the world model.

\paragraph{Training Time and Hardware}
All experiments were conducted on NVIDIA A100 GPUs, except for \texttt{Craftium/SmallRoom-v0}, which was run on RTX 5090, RTX 4090, and A40 GPUs.
On Atari, training with a denoising budget of $\scheduleBudget = 32$ required approximately $27$ hours per run, while reducing the budget to $\scheduleBudget = 16$ shortened training to about $19$ hours. 
Because the tokenizer and world model training stages are unaffected by this parameter, the overall speed-up is smaller than a full $2\times$. 
On Craftium, runs take longer due to slower environment interactions.

\subsubsection{Results}
As shown in Figure \ref{fig:actor-critic-performance}, both baselines with decay horizon $\scheduleSlope=4$ maintain the performance of the autoregressive baseline across all environments, achieving comparable results at reduced cost.
Notably, our approach sustains full performance with a sub-frame budget of $\scheduleBudget=16$, requiring only half the denoising steps of the autoregressive baseline.

Moreover, our results suggest that a single denoising step per observation generally suffices, with baselines at $\scheduleBudget=32$ performing comparably across environments.
In contrast, in \texttt{Craftium/\-ChopTree-v0}, the most visually complex environment, the $(\scheduleSlope=4, \scheduleBudget=32)$ baseline achieves superior performance, suggesting improved effectiveness under the $\scheduleBudget=32$ budget.


\begin{figure}[h]
\begin{center}
\includegraphics[width=0.65\linewidth]{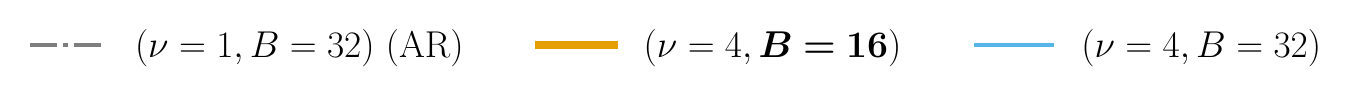} 
\\
\includegraphics[width=0.245\linewidth]{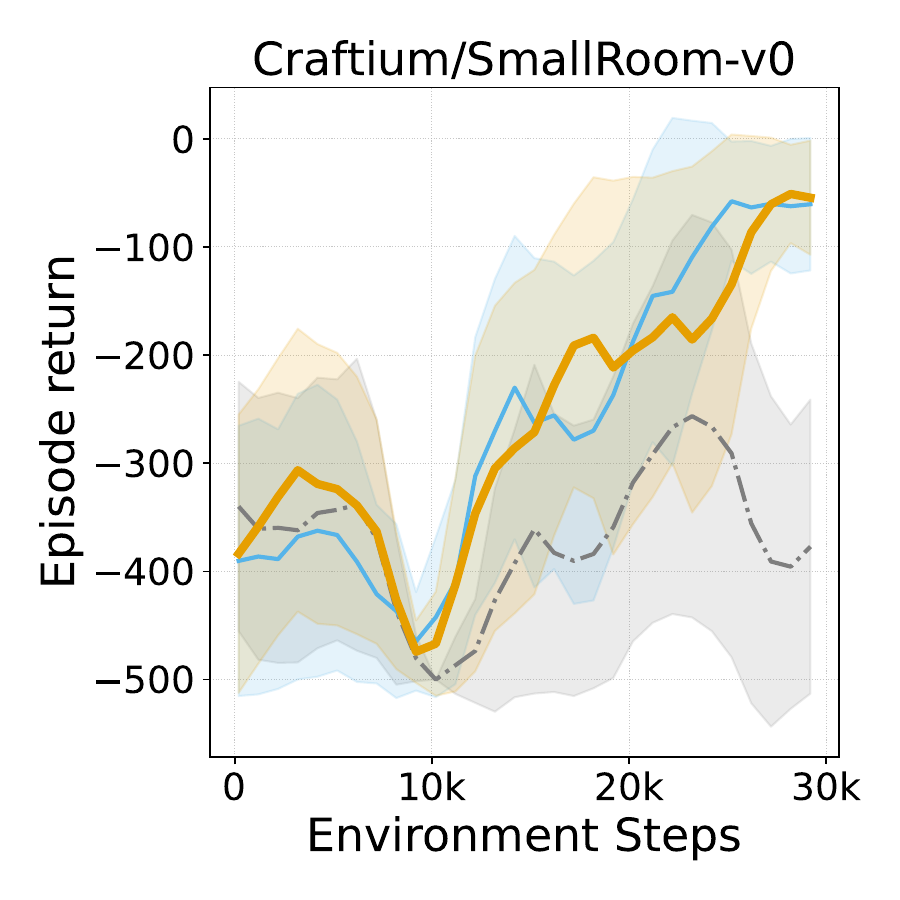}
\includegraphics[width=0.245\linewidth]{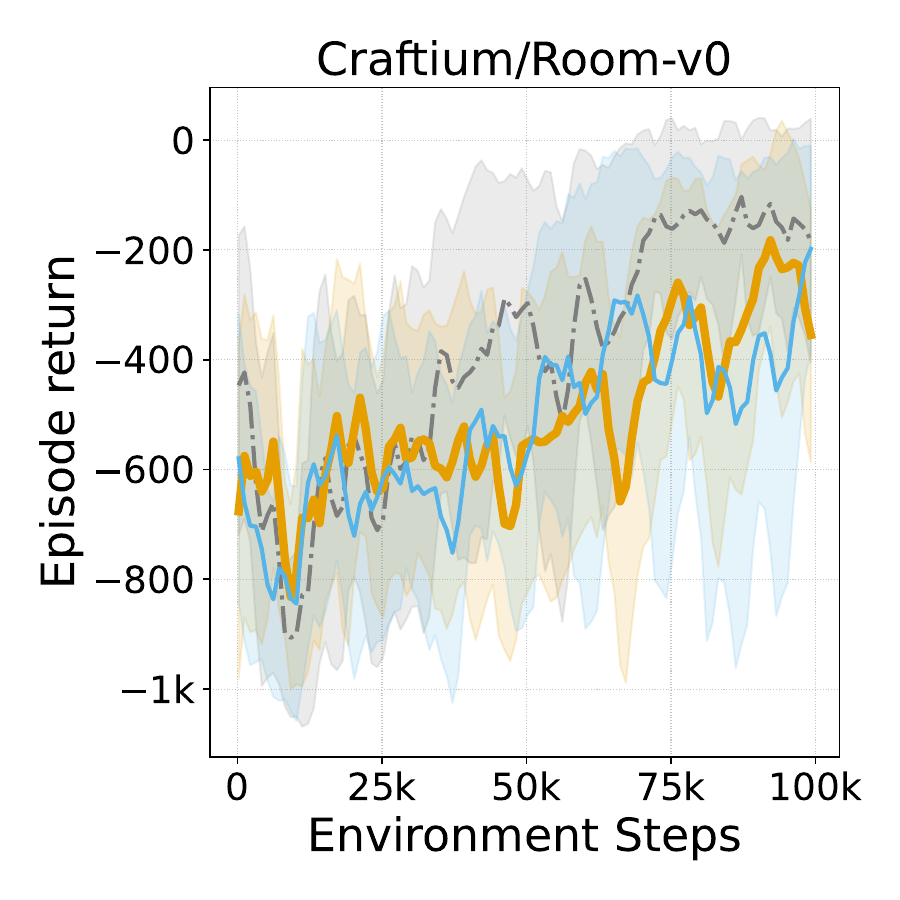}
\includegraphics[width=0.245\linewidth]{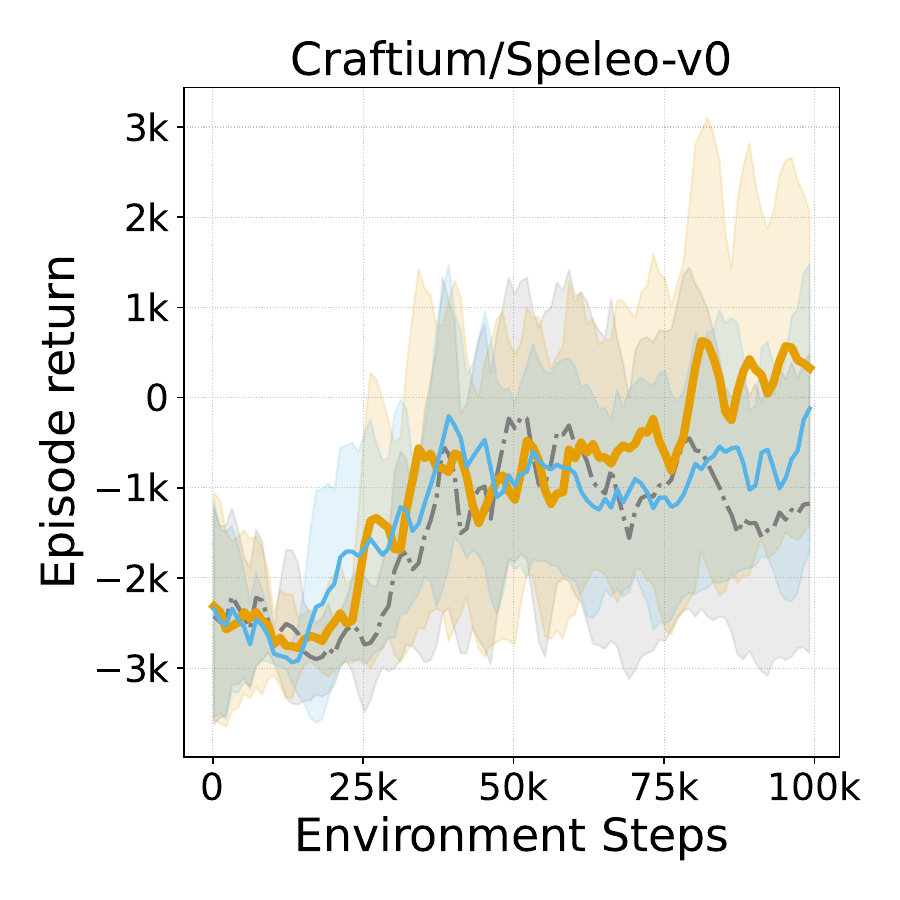}
\includegraphics[width=0.245\linewidth]{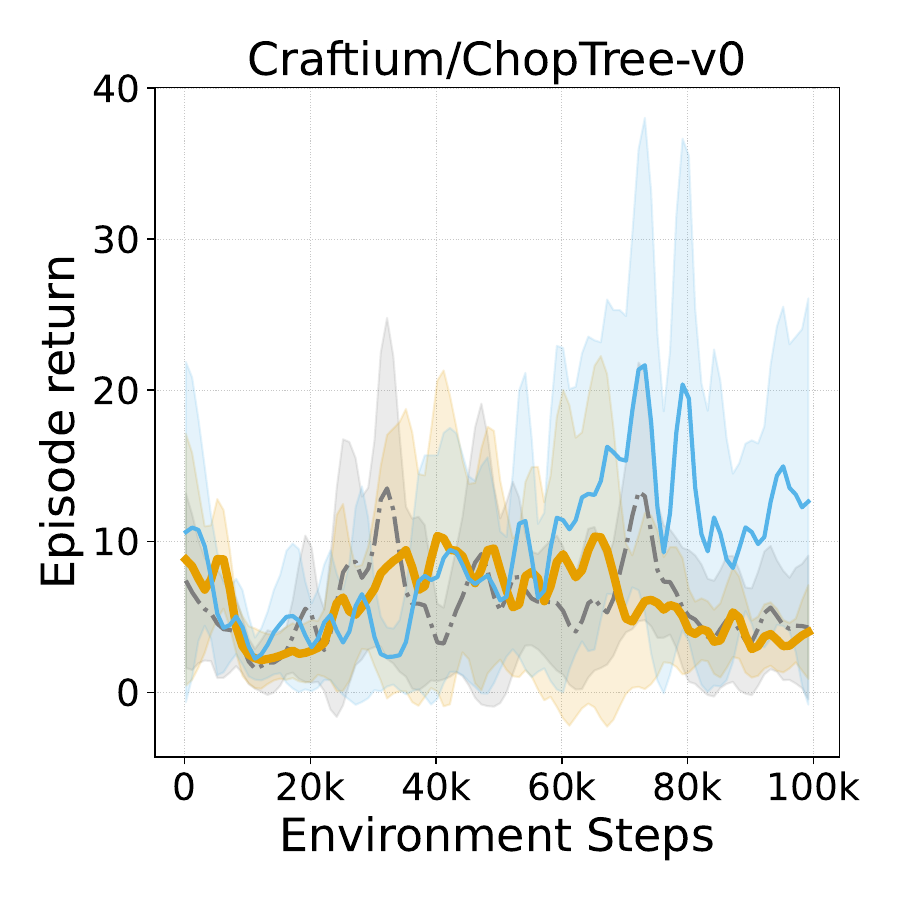}
\\
\includegraphics[width=0.245\linewidth]{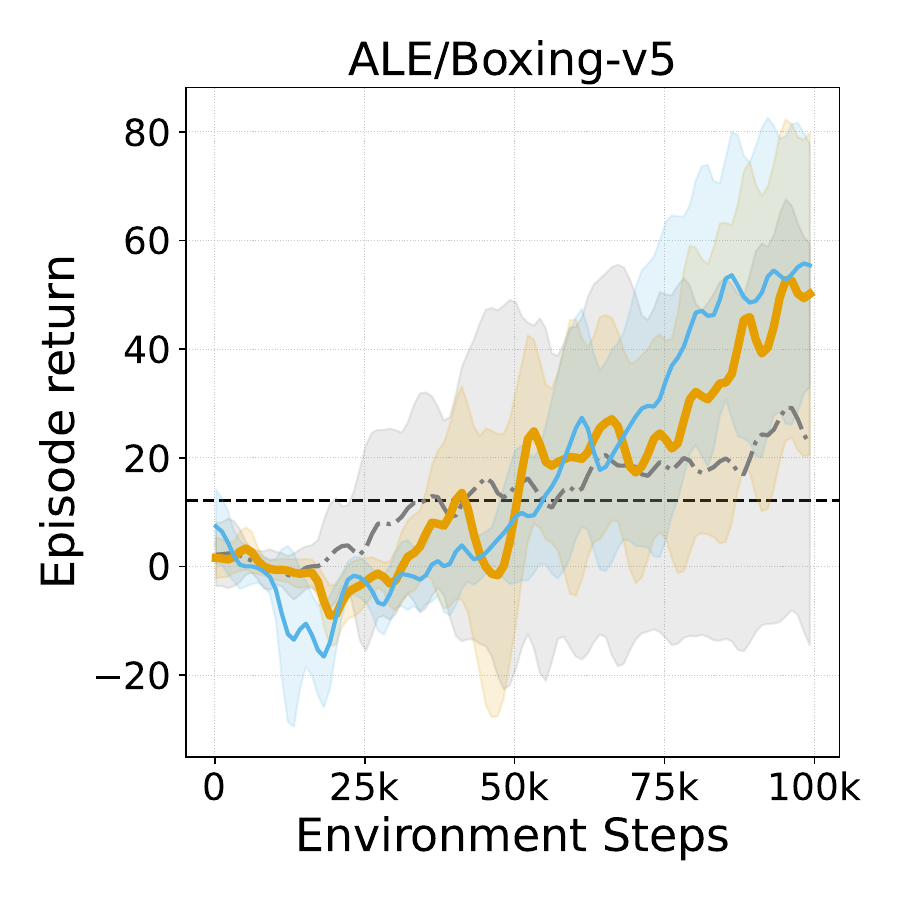}
\includegraphics[width=0.245\linewidth]{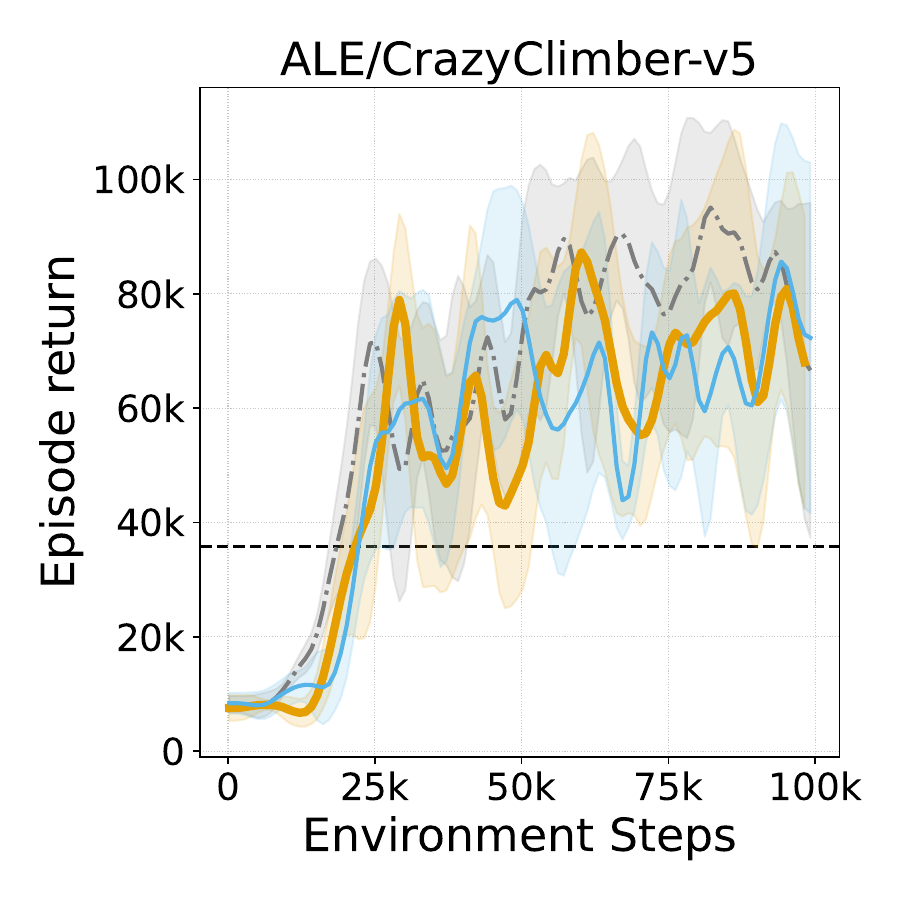}
\includegraphics[width=0.245\linewidth]{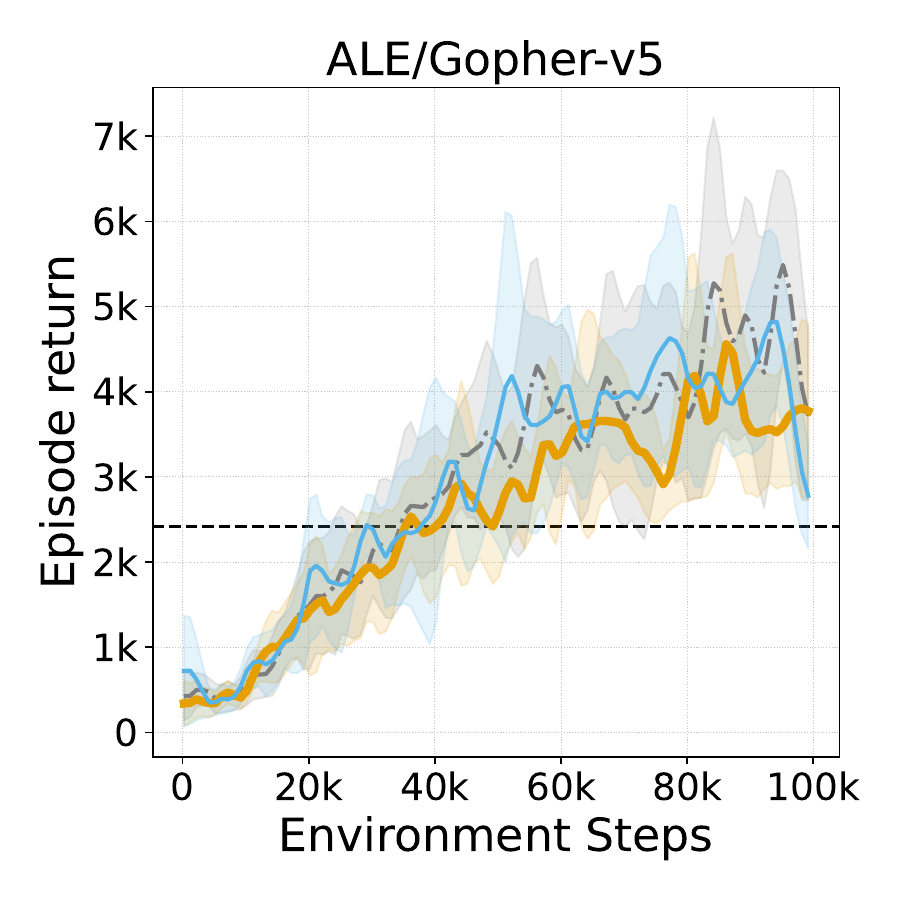}
\includegraphics[width=0.245\linewidth]{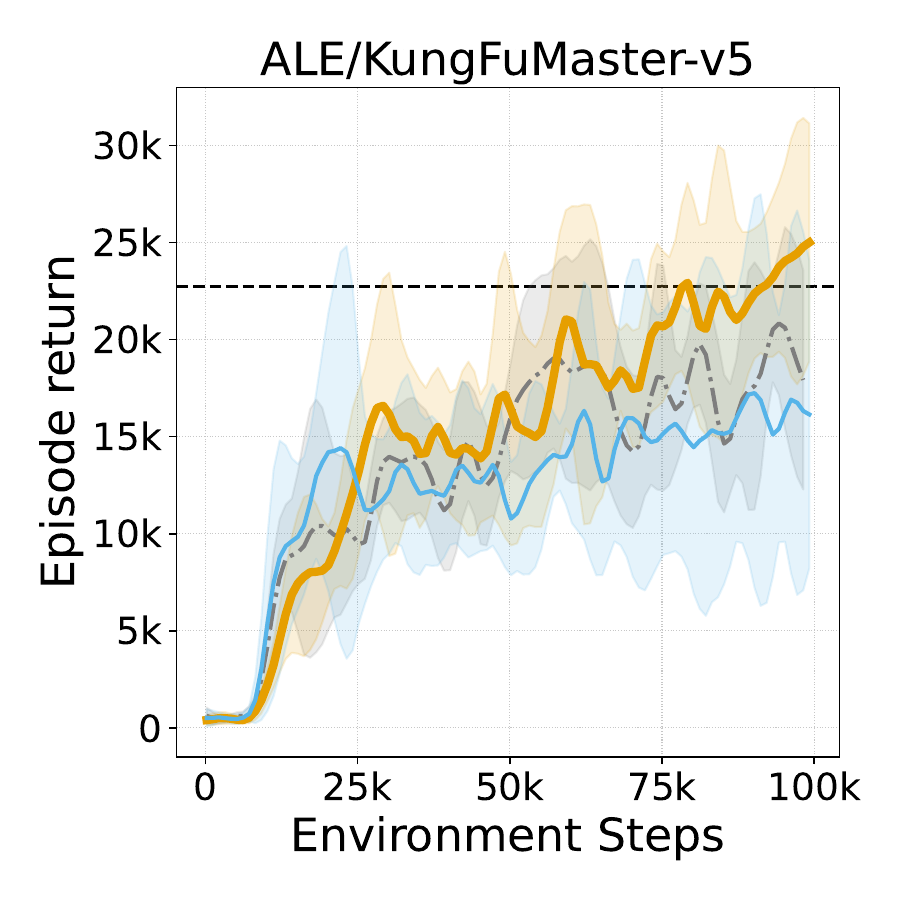}
\end{center}
\caption{Actor-Critic Performance. Average episodic return curves of key baselines during training. Each baseline is evaluated over 5 seeds. Curves show the mean and standard deviation, smoothed by a moving average (window size $15$). A dashed horizontal line denotes Atari human-level performance. 
}
\label{fig:actor-critic-performance}
\end{figure}

\subsubsection{Ablation Study: Stable Action Sampling}
To assess the influence of the proposed stable action sampling mechanism on control performance, we perform an ablation study by reverting to naive action sampling. In this baseline, a fresh action is independently drawn from the policy distribution before each denoising step.

The results in Figure \ref{fig:actor-critic-performance-ablation} show that replacing the stable action sampling mechanism with naive sampling leads to a substantial drop in control performance (returns). 
Notably, while the naive baseline is consistently weaker, we observe a particularly prominent collapse on Atari Boxing and Gopher.
These findings further highlight the importance of preventing unnecessary action fluctuations during the denoising process.

\begin{figure}[h]
\begin{center}
\includegraphics[width=0.65\linewidth]{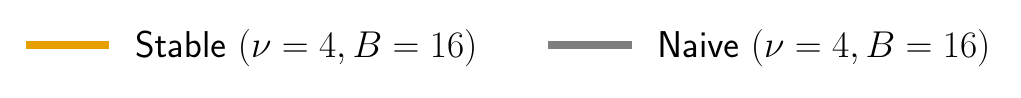} 
\\
\includegraphics[width=0.245\linewidth]{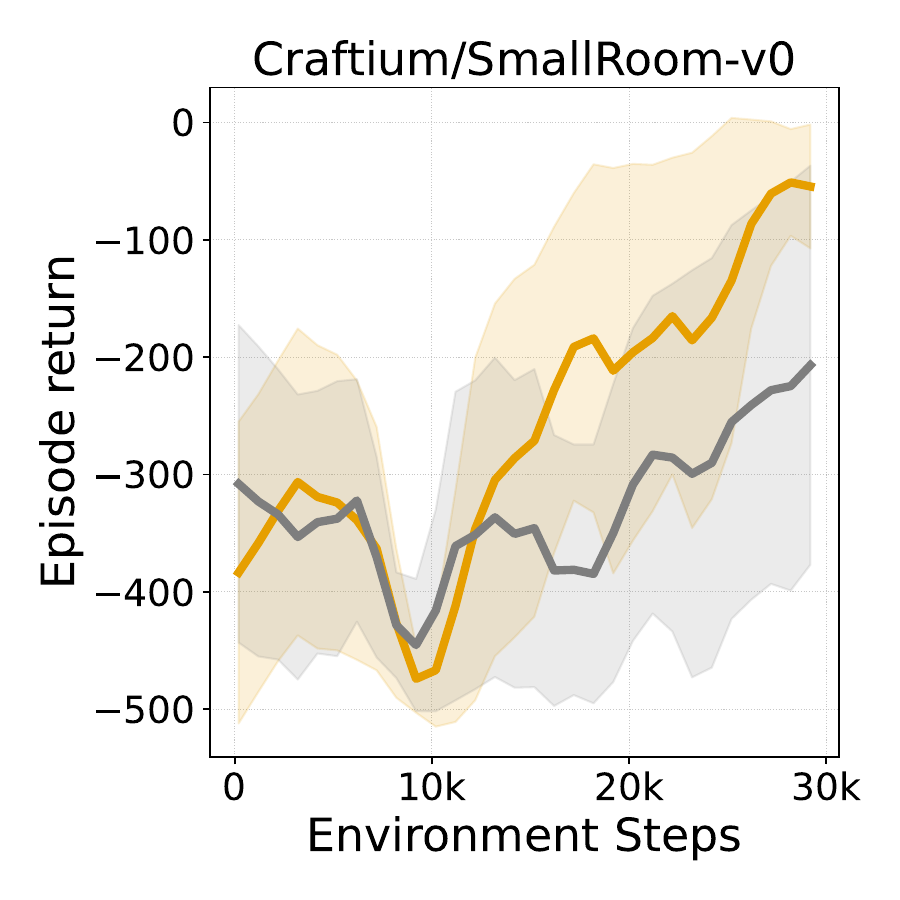}
\includegraphics[width=0.245\linewidth]{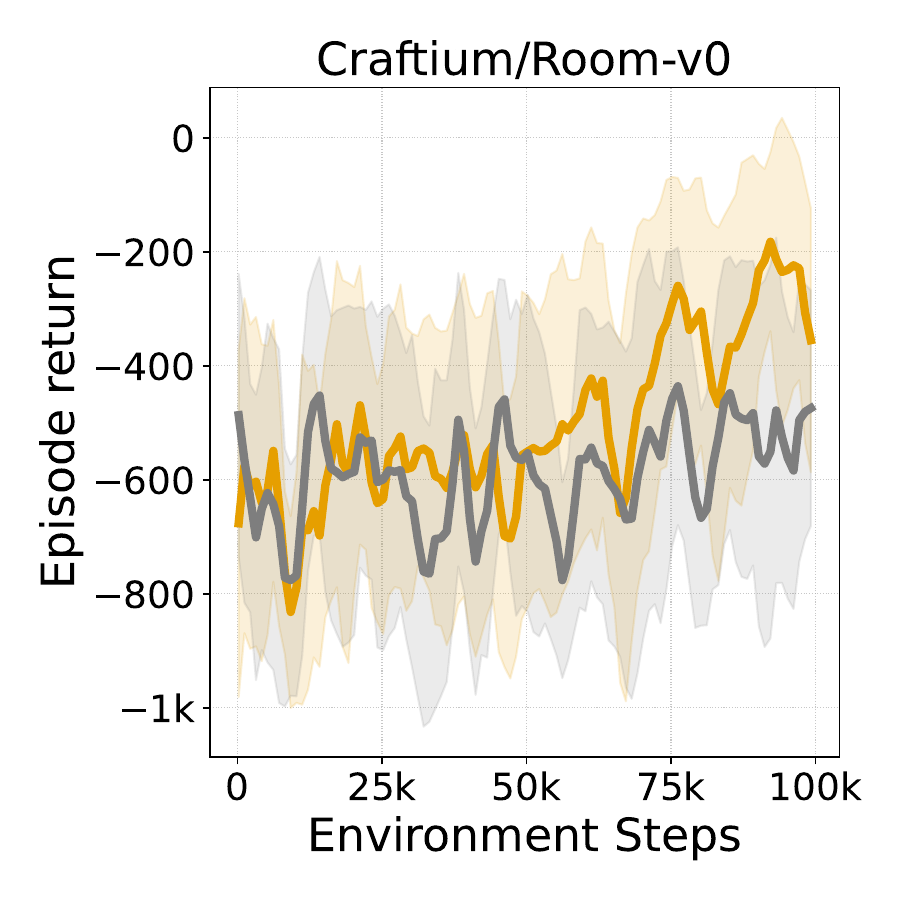}
\includegraphics[width=0.245\linewidth]{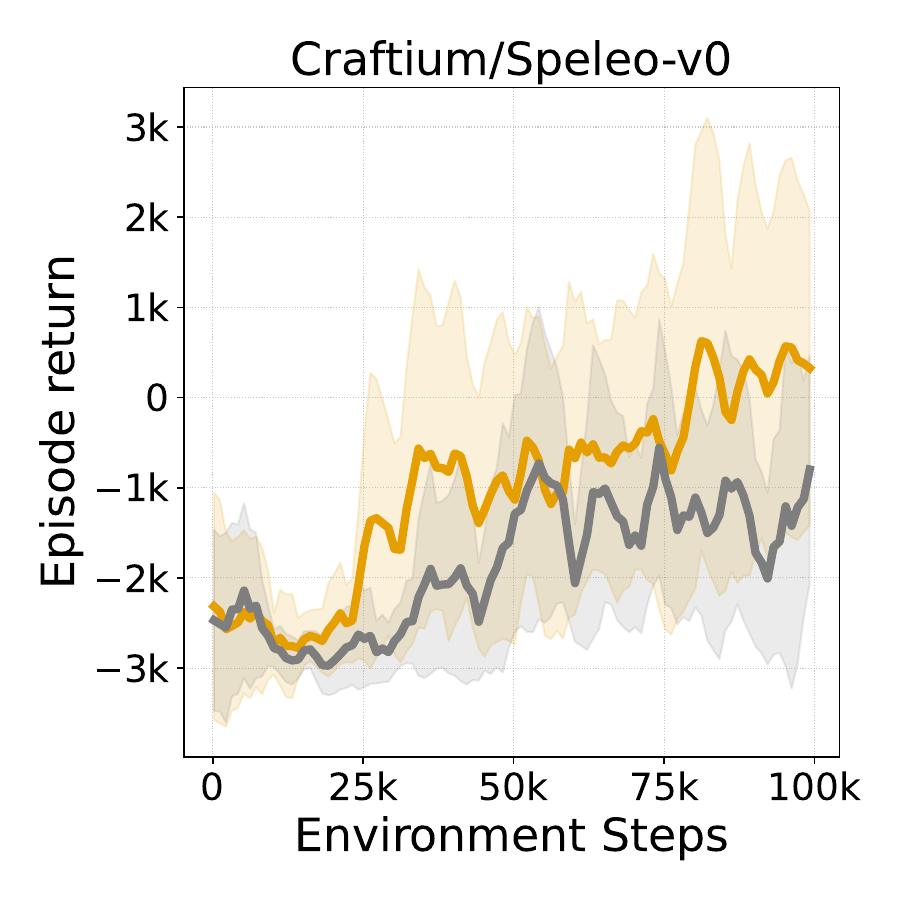}
\includegraphics[width=0.245\linewidth]{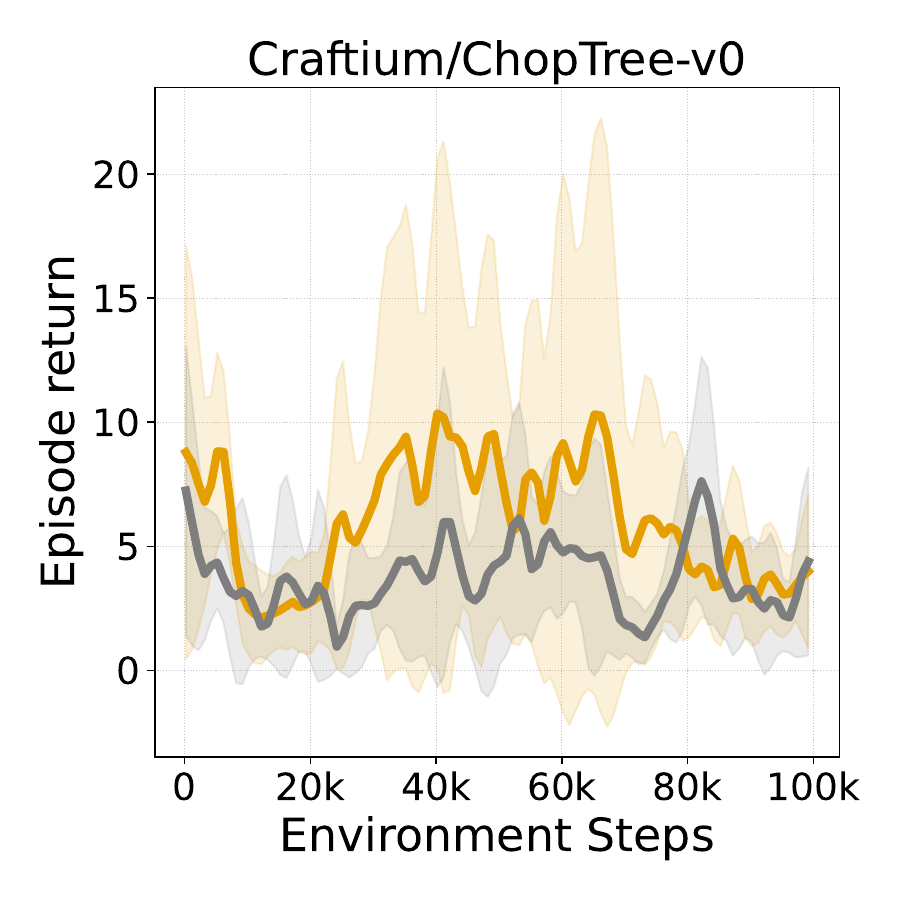}
\\
\includegraphics[width=0.245\linewidth]{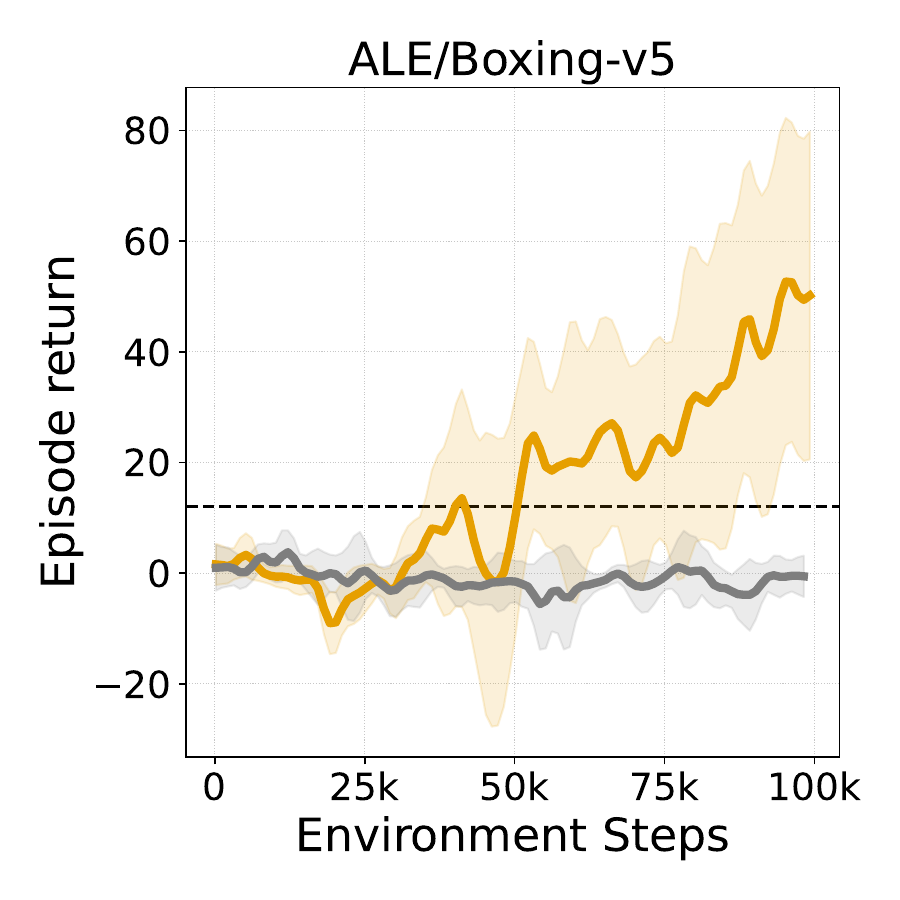}
\includegraphics[width=0.245\linewidth]{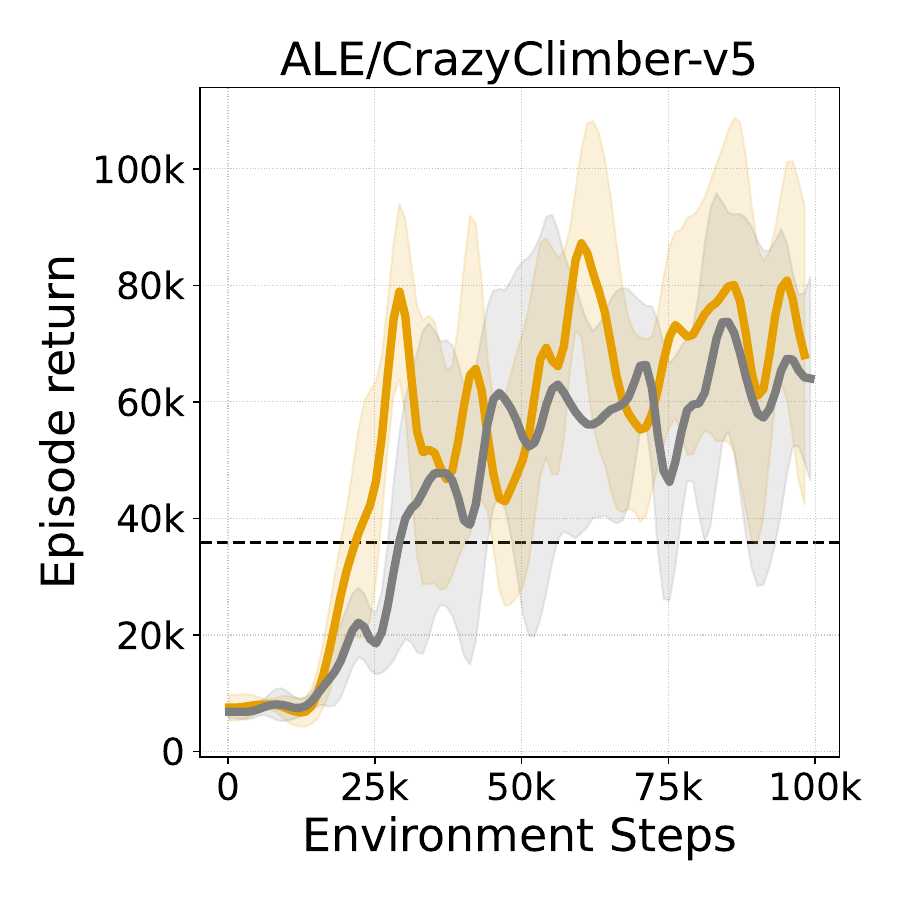}
\includegraphics[width=0.245\linewidth]{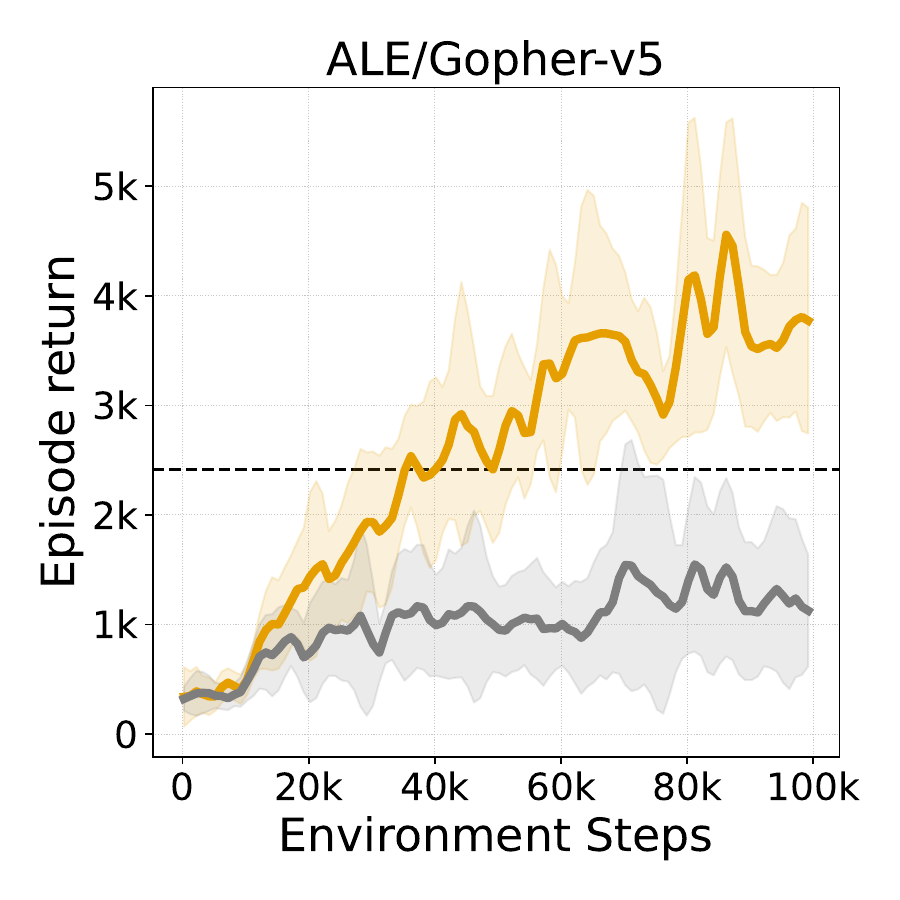}
\includegraphics[width=0.245\linewidth]{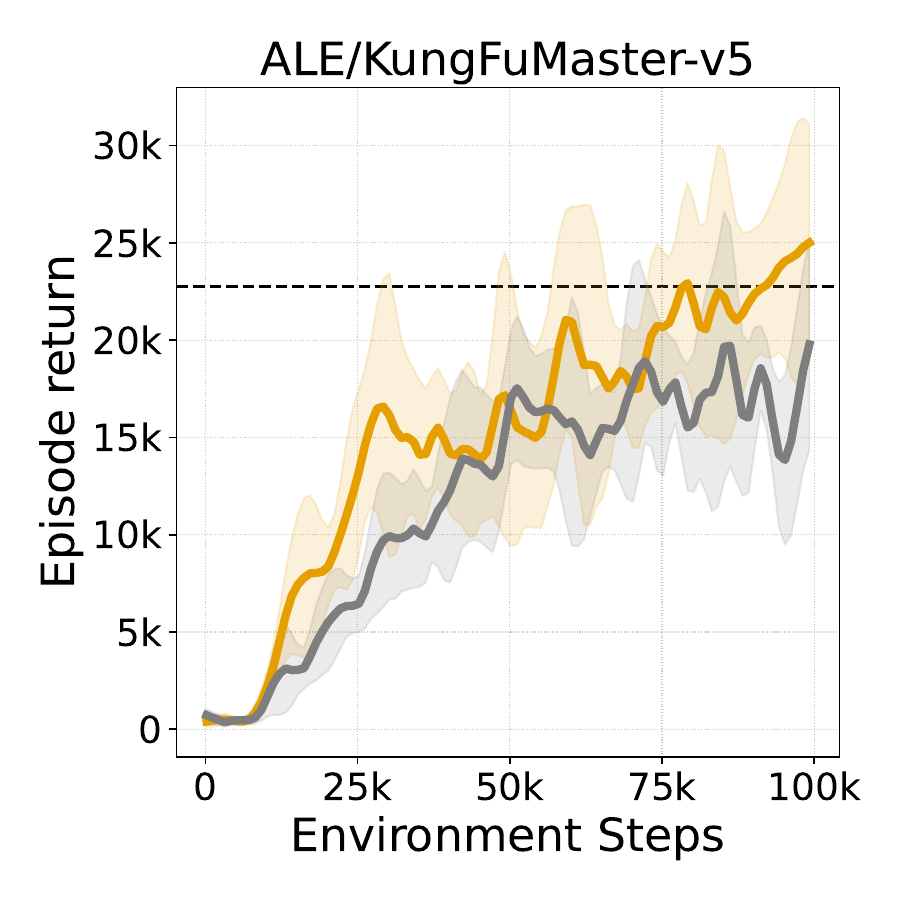}
\end{center}
\caption{Actor–critic performance comparison between the proposed stable action sampling method and the naive baseline. Each baseline is evaluated over 5 seeds. Curves show the mean and standard deviation, smoothed by a moving average (window size $15$). A dashed horizontal line denotes Atari human-level performance. 
}
\label{fig:actor-critic-performance-ablation}
\end{figure}

\subsection{Parallel vs. Sequential Generation Quality}
\label{sec:parallel-vs-seq-generation-quality-study}
To study the impact of parallel multi-step denoising on generation quality, we evaluate the trained world models from Section \ref{sec:control-perf} under varying decay horizon ($\scheduleSlope$) and denoising budget ($\scheduleBudget$) values.
Our central question is how generation quality differs between parallel and sequential configurations, and which settings prove most effective.


In our evaluation, 512 episode segments of 33 frames are sampled from the training data.
For each combination of $\scheduleSlope$ and $\scheduleBudget$, 512 corresponding 32-frame continuations are generated using the first frame as context. 
To isolate the impact of $\scheduleSlope$ and $\scheduleBudget$ from other factors such as the action sampling scheme, the original recorded actions are provided to the model at every denoising step. 
Generation quality is assessed using Fréchet Video Distance (FVD) \citep{unterthiner2019FVD} and mean squared error (MSE) against the ground-truth segments.
FVD better reflects perceptual quality, while MSE is more indicative of deviations from the ground truth sequence.

Our results (Figure \ref{fig:parallel-vs-sequantial-gen-quality}) show a consistent FVD trend across environments, indicating a clear advantage for parallel configurations ($4 \leq \scheduleSlope \leq 16$) under low to medium budgets. 
Remarkably, in several cases, parallel variants with sub-frame budgets achieve quality comparable to baselines that rely on the maximal budget, 16 times larger.
At extreme budgets ($\scheduleBudget \geq 128$), performance tends to degrade slightly as $\scheduleSlope$ increases, with the autoregressive baseline consistently ranking at the top. 
Interestingly, the MSE results (bottom row of Figure \ref{fig:parallel-vs-sequantial-gen-quality}) suggest that increasing the denoising budget may cause generated sequences to drift further from the ground truth, even as perceptual quality improves.

Finally, in Appendix \ref{sec:appendix-additional-results}, we present the corresponding Atari results and further repeat this experiment under the Pyramidal schedule of \citet{chen2024diffusionForcing}, which suffers a pronounced collapse in performance as budgets grow.




\begin{figure}[t]
\begin{center}
\includegraphics[width=0.6\linewidth]{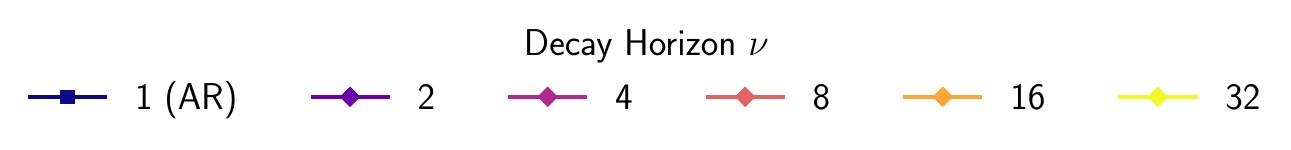} 
\\
\includegraphics[width=0.245\linewidth]{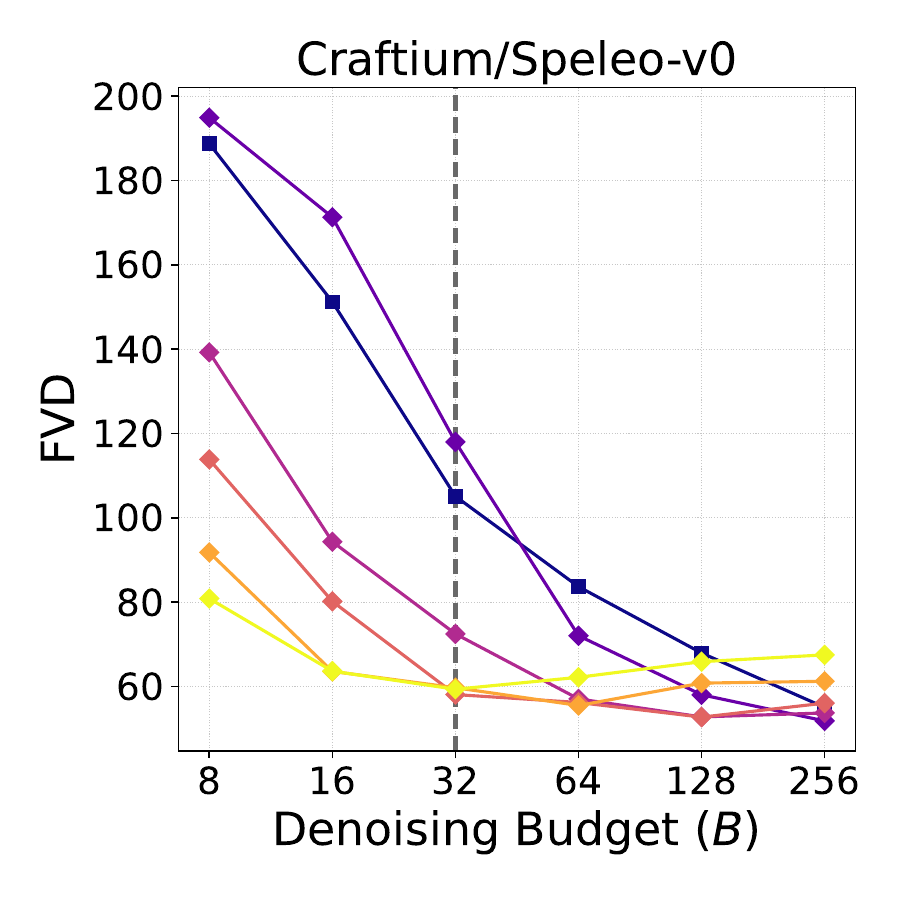}
\includegraphics[width=0.245\linewidth]{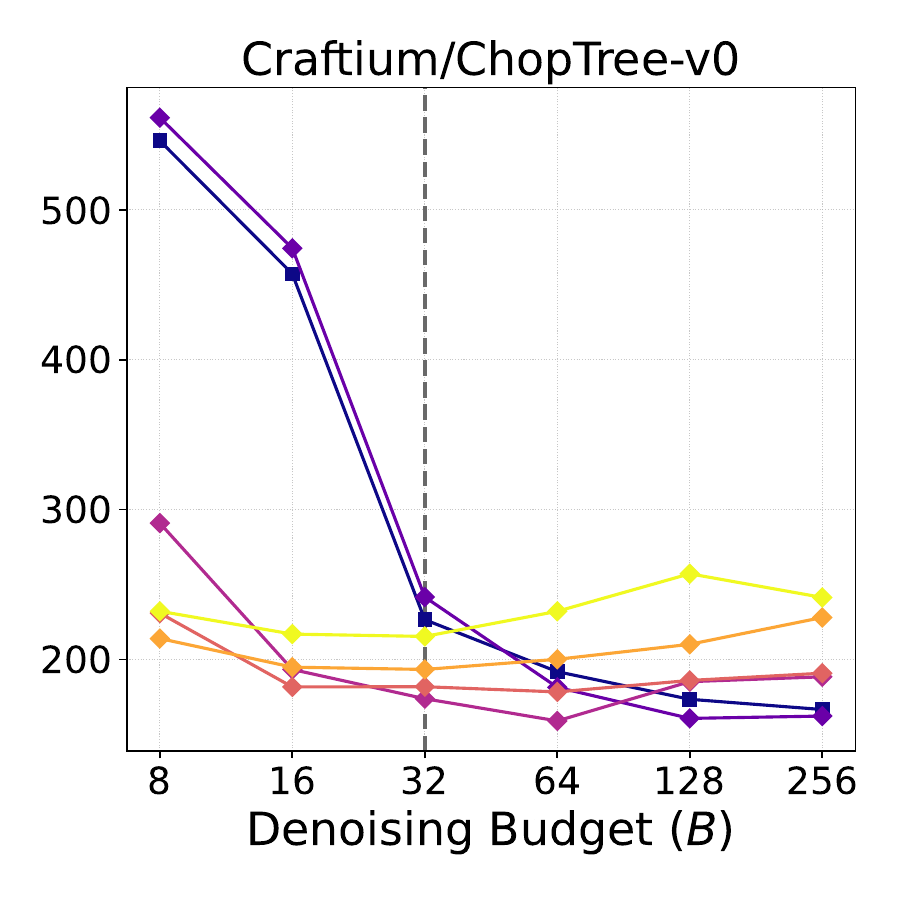}
\includegraphics[width=0.245\linewidth]{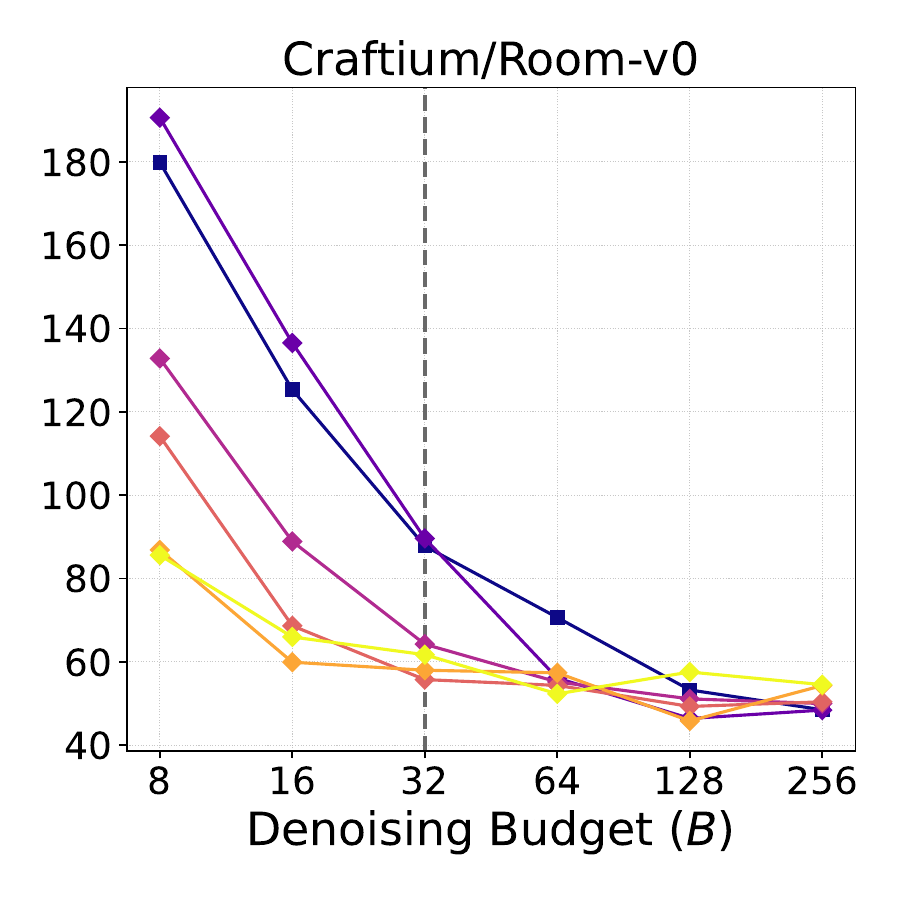}
\includegraphics[width=0.245\linewidth]{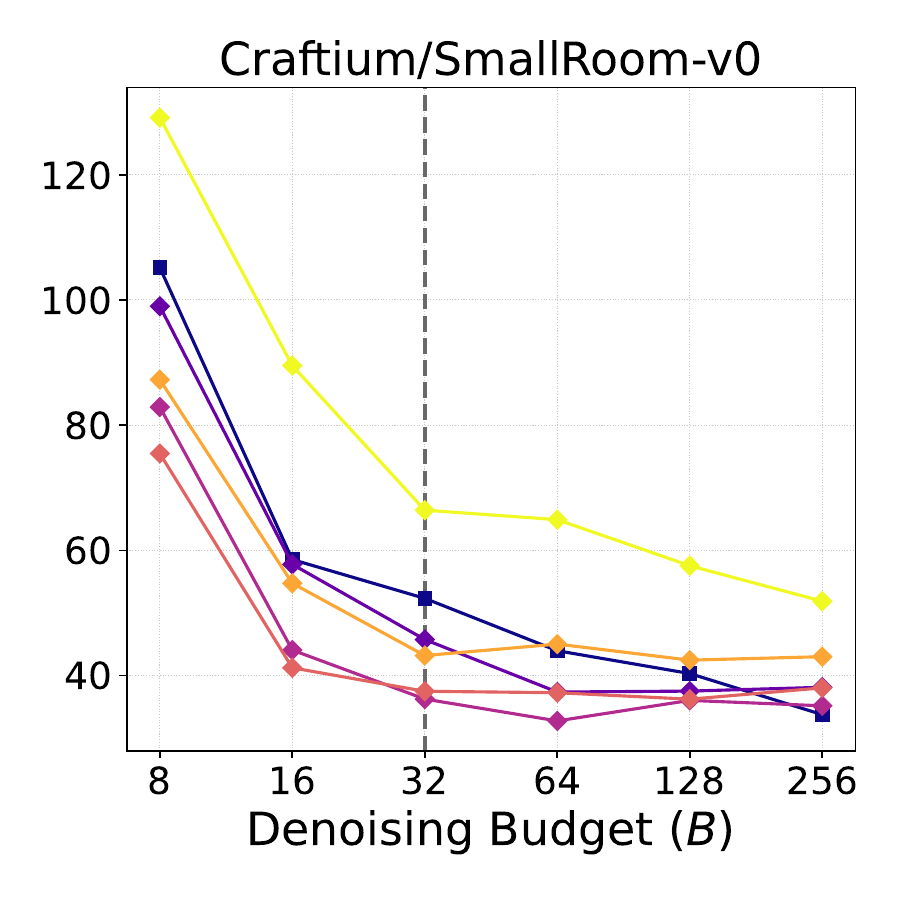}
\\
\includegraphics[width=0.245\linewidth]{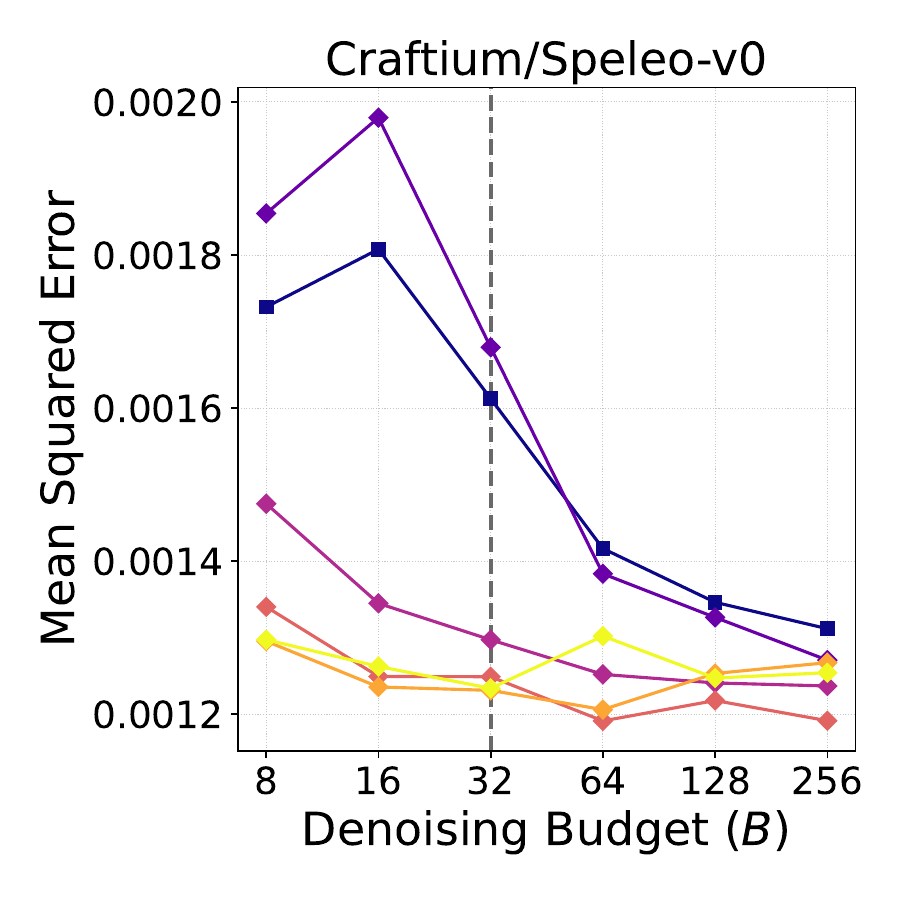}
\includegraphics[width=0.245\linewidth]{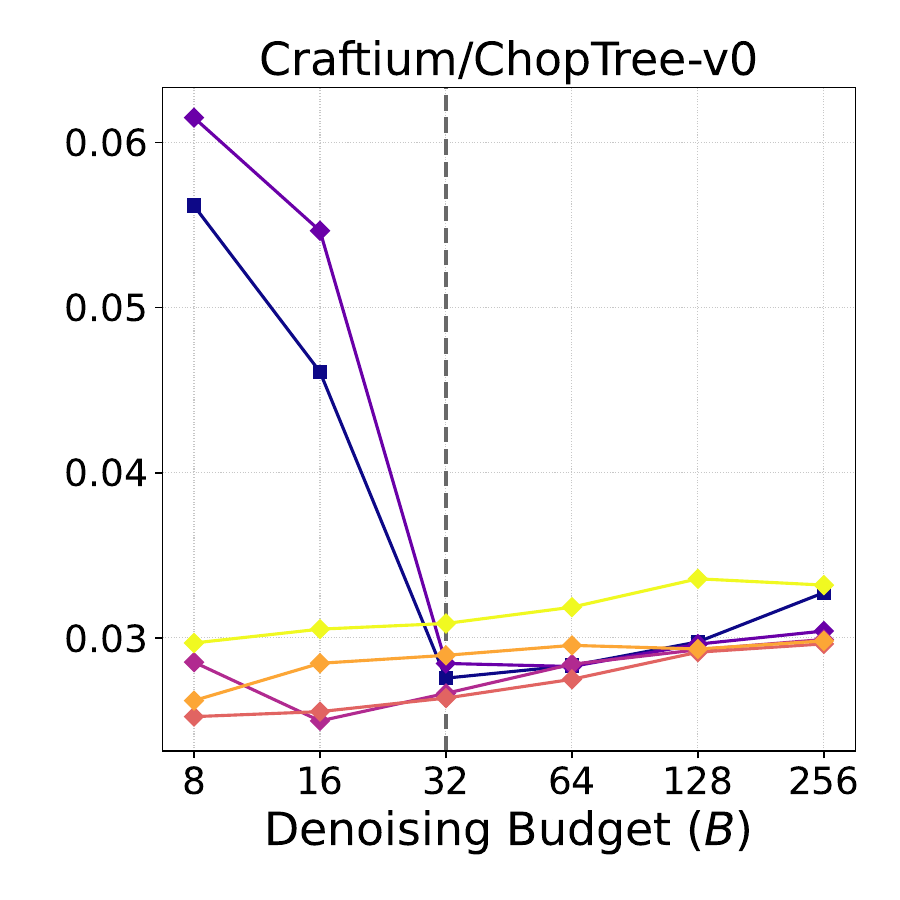}
\includegraphics[width=0.245\linewidth]{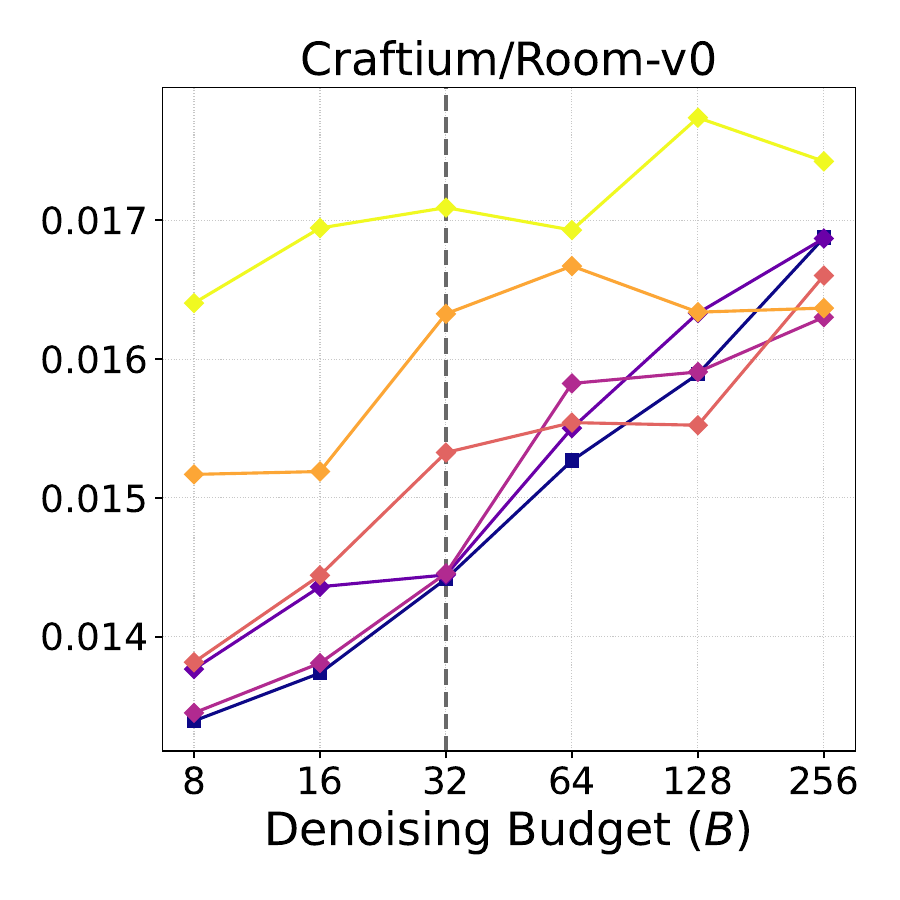}
\includegraphics[width=0.245\linewidth]{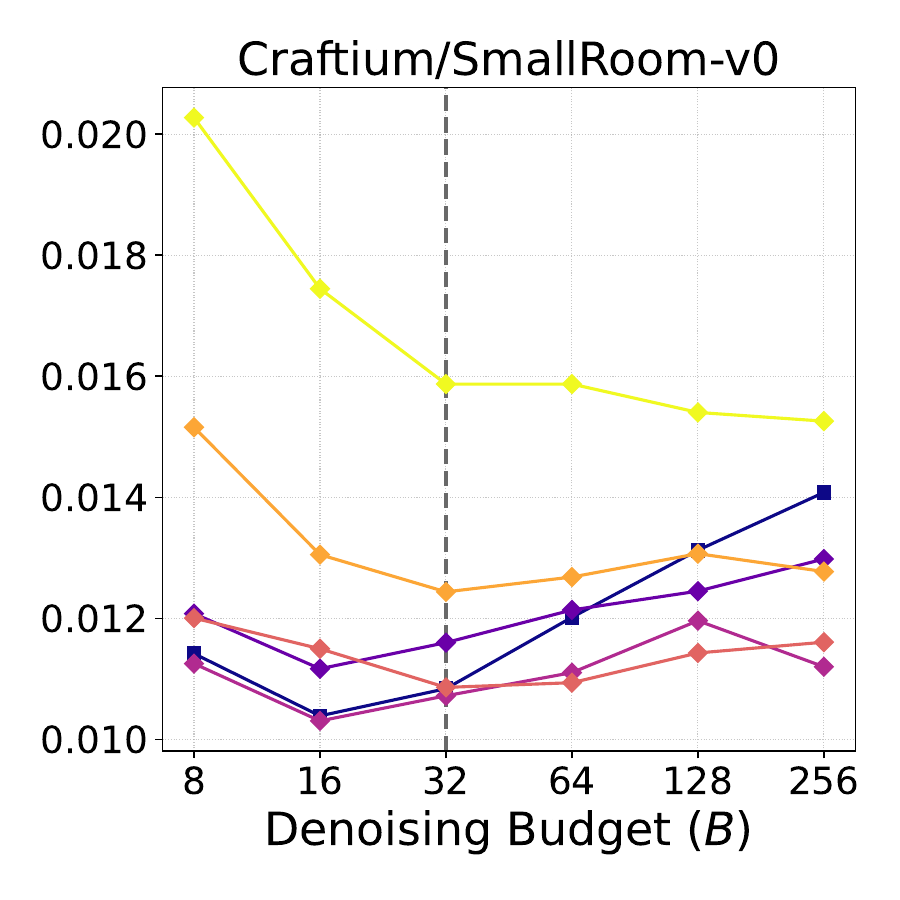}
\end{center}
\caption{World model generation quality versus denoising steps budget. Each point shows the average FVD/MSE over 512 sampled 33-frame segments, where the first frame was given as context and the last 32 were generated conditioned on the recorded actions. A dashed vertical line indicates the transition out of sub-frame budgets.}
\label{fig:parallel-vs-sequantial-gen-quality}
\end{figure}

\section{Limitations}
While our control results consistently highlight the advantage of $\scheduleSlope=4$, we did not extend our study to additional configurations. 
Evaluating each baseline requires running full experiments across all environments with five seeds per environment, which is computationally prohibitive. 
Consequently, we had to restrict our study to eight environments to keep the evaluations tractable.
Future work could explore additional configurations.

\section{Conclusions}
In this work, we addressed the computational challenge of on-policy imagination in diffusion world models with discrete actions, motivated by the need to train lightweight policies for deployment.  
We introduced Horizon Imagination (HI), a parallel multi-step imagination procedure that incorporates two novel mechanisms: stable discrete action sampling and the Horizon schedule.
Our analysis shows that stable sampling ensures consistent action selection, thereby stabilizing imagination, while the widely applicable Horizon schedule delivers sub-frame performance and preserves robust decay horizons across budgets.
Together, these advances allow HI to sustain strong performance even under sub-frame budgets and at half the computational cost, making diffusion world models markedly more practical for training deployable policies.





\newpage




\input{main.bbl}
\bibliographystyle{iclr2026_conference}

\appendix
\newpage

\section{Experimental Setup}
\label{sec:appendix-experimental-setup}

\subsection{Model Architectures}

\subsubsection{Representation Model (Tokenizer)}
\label{sec:appendix-repr-model}
We use the continuous image autoencoder of \citet{agarwal2025cosmos} as the representation model of our agent.
We modify the output of the encoder network by adding a final $\tanh$ activation to enforce values in $[-1, 1]$.
Empirically, we observed no loss of performance due to this modification.
In addition, we adjust the configuration to fit our settings and computational budget. 
Further details on the architecture and hyperparameters can be found in Tables \ref{table:tokenizer-arch} and \ref{table:tokenizer-hyperparams}, and in our open-source code.

We use a standard mean squared error (MSE) regression objective for the reconstruction loss instead of the original $L_1$ objective, as we empirically found it to perform better in our settings.
The full optimization objective is the sum of the reconstruction loss and a perceptual loss $\mathcal{L}_{\text{perceptual}}$ \citep{agarwal2025cosmos} (Eq. 3, p. 15).
Formally, the loss is given by
\begin{equation}
\label{eq:tokenizer-loss}
    \mathcal{L}_{\text{tokenizer}} = \Vert \hat{o} - o \Vert^2 + \mathcal{L}_{\text{perceptual}},
\end{equation}
where 
\begin{equation*}
    \mathcal{L}_{\text{perceptual}} = \frac{1}{L} \sum_{l=1}^{L} \alpha_l \Vert \texttt{VGG}_l(\hat{o}) - \texttt{VGG}_l(o)  \Vert_{1},
\end{equation*}
is the perceptual loss, $\hat{o} = \varphi_{\text{dec}}(\varphi_{\text{enc}}(o))$ is the reconstructed observation produced by the autoencoder, $\texttt{VGG}_l$ are the features from the $l$-th layer of a pretrained VGG-19 network with $L$ layers, and $\alpha_l$ is a weight associated with the $l$-th loss term. We use the same hyperparameter values as \citet{agarwal2025cosmos}.

\begin{table}[ht]
\caption{The encoder and decoder architectures of the representation model. ``Conv2d(a,b,c)" represents a 2d-convolutional layer with kernel size $a \times a$, stride of $b$ and padding $c$. ``GN" represents a GroupNorm operator with $32$ groups, $\epsilon=1e-6$ and learnable per-channel affine parameters. 
``Upsample" represents a repeat-interleave operation that doubles the spatial dimensions, followed by a Conv2d(3,1,1) layer.
}
\label{table:tokenizer-arch}
\vskip 0.15in
\begin{center}
\begin{small}
\begin{tabular}{lcr}
\toprule
Module & Output Shape \\
\midrule

Encoder \\
\midrule

Input & $3 \times 64 \times 64$ \\
Conv2d(3, 1, 1) & $64 \times 64 \times 64$ \\

ResnetBlock & $128 \times 64 \times 64$ \\
ResnetBlock & $128 \times 64 \times 64$ \\
Conv2d(3, 2, 0) & $128 \times 32 \times 32$ \\

ResnetBlock & $256 \times 32 \times 32$ \\
ResnetBlock & $256 \times 32 \times 32$ \\
Conv2d(3, 2, 0) & $256 \times 16 \times 16$ \\

ResnetBlock & $256 \times 16 \times 16$ \\
AttnBlock   & $256 \times 16 \times 16$ \\
ResnetBlock & $256 \times 16 \times 16$ \\
AttnBlock   & $256 \times 16 \times 16$ \\
Conv2d(3, 2, 0) & $256 \times 8 \times 8$ \\

ResnetBlock & $256 \times 8 \times 8$ \\
AttnBlock   & $256 \times 8 \times 8$ \\
ResnetBlock & $256 \times 8 \times 8$ \\

GN            & $256 \times 8 \times 8$ \\
Conv2d(3, 1, 1) & $16 \times 8 \times 8$ \\
Conv2d(1, 1, 0) & $16 \times 8 \times 8$ \\
$\tanh$         & $16 \times 8 \times 8$ \\

\bottomrule
\end{tabular}
\hspace{1cm} 
\begin{tabular}{lcr}
\toprule
Module & Output Shape \\
\midrule

Decoder \\
\midrule

Input & $16 \times 8 \times 8$ \\
Conv2d(1, 1, 0) & $16 \times 8 \times 8$ \\
Conv2d(3, 1, 1) & $256 \times 8 \times 8$ \\

ResnetBlock & $256 \times 8 \times 8$ \\
AttnBlock   & $256 \times 8 \times 8$ \\
ResnetBlock & $256 \times 8 \times 8$ \\

ResnetBlock & $256 \times 8 \times 8$ \\
ResnetBlock & $256 \times 8 \times 8$ \\
ResnetBlock & $256 \times 8 \times 8$ \\
Upsample    & $256 \times 16 \times 16$ \\

ResnetBlock & $256 \times 16 \times 16$ \\
AttnBlock   & $256 \times 16 \times 16$ \\
ResnetBlock & $256 \times 16 \times 16$ \\
AttnBlock   & $256 \times 16 \times 16$ \\
ResnetBlock & $256 \times 16 \times 16$ \\
AttnBlock   & $256 \times 16 \times 16$ \\
Upsample    & $256 \times 32 \times 32$ \\

ResnetBlock & $128 \times 32 \times 32$ \\
ResnetBlock & $128 \times 32 \times 32$ \\
ResnetBlock & $128 \times 32 \times 32$ \\
Upsample    & $128 \times 64 \times 64$ \\

GN            & $128 \times 64 \times 64$ \\
Conv2d(3, 1, 1) & $3 \times 64 \times 64$ \\



\bottomrule
\end{tabular}

\end{small}
\end{center}
\vskip -0.1in
\end{table}

\subsubsection{World Model}
\label{sec:appendix-wm-desc}
Our world model consists of two components: the denoiser $\RFVF$ and a separate reward–termination predictor network.
We view the dynamics model as a large-scale, general module that captures the complexity of the environment, whereas reward–termination predictors should remain lightweight, task-specific, and operate on the dynamics outputs. 
This design disentangles their dependencies: dynamics models can be scaled and pre-trained in advance, while downstream tasks require learning only small, efficient reward–termination predictors upon receiving a new task.

For the DiT-based denoiser $\RFVF$ \citep{Peebles_2023_ICCV}, we build on the Cosmos implementation \citep{agarwal2025cosmos} with several adaptations to our setting. 
As in \citet{agarwal2025cosmos}, inputs are patchified with a $2\times 2$ spatial patch layer, and a corresponding reverse patchify layer is applied at the output. 
We also adopt 3D RoPE positional embeddings \citep{SU2024RoPE, agarwal2025cosmos}, but in contrast to \citet{agarwal2025cosmos}, we omit any additional positional encodings.

Since no textual conditioning is required, we remove the cross-attention operator, yielding a vanilla DiT block structure. 
Attention is modified with a frame-level causal mask: tokens can attend to all other tokens in the same frame as well as to tokens from preceding frames. 
To condition on actions, we use a learned embedding table mapping discrete actions to vectors, which are summed with the denoising-time embeddings to form the conditioning input to the AdaLN layers.

Note that we only used the architecture implementation of \citet{agarwal2025cosmos}. 
Specifically, our work is based on the rectified flow diffusion framework \citep{liu2023flow}, which is different from the EDM \citep{karras2022EDM} framework used by \citet{agarwal2025cosmos}.

The reward–termination network processes latent observation sequences $\obsLatent_{1}, \ldots, \obsLatent_{t}$, analogous to $\RFVF$.
Each latent is first encoded by a compact CNN consisting of a single ResNet block with 256 channels, followed by a $1{\times}1$ convolution that reduces the dimensionality to 64 channels.
The resulting features are flattened and passed through a linear layer with a Sigmoid Linear Unit (SiLU) activation \cite{hendrycks2017bridging, ramachandran2018searching}, yielding a vector representation for each 3D latent observation.
Finally, the sequence of vectors is processed by an LSTM \citep{hochreiter1997lstm} followed by two linear output heads that predict rewards and terminations, respectively.

The reward--termination model is trained on the same clean trajectory segments used for optimizing $\RFVF$.
To accommodate reward signals that vary widely in scale and sparsity, we model rewards in symlog space.
Formally, the reward objective is
\begin{equation*}
    \Vert\, \reward_{\phi}(\obsLatent_{\leq t+1}, \actions_{\leq t})
    - \symlog(\reward_t) \,\Vert^2 ,
\end{equation*}
where $\reward_{\phi}(\obsLatent_{\leq t+1}, \actions_{\leq t})$ denotes the raw prediction and
$\symlog(x) = \mathrm{sign}(x)\,\log(|x| + 1)$ is the symlog transform \citep{Hafner2025dreamerV3}.
At inference time, the reward estimate is recovered via
\begin{equation*}
    \hat{\reward}_t = 
    \symexp\!\left(\reward_{\phi}(\obsLatent_{\leq t+1}, \actions_{\leq t})\right),
    \qquad
    \symexp(x) = \mathrm{sign}(x)\,\big(\exp(|x|) - 1\big),
\end{equation*}
where $\symexp$ is the inverse of the symlog transform.

The termination predictor is trained using a standard cross-entropy loss.

\subsubsection{Actor-Critic}
\label{sec:appendix-actor-critic}
The critic $\critic$ is trained exclusively on fully denoised outputs via the following regression objective in symlog space:
\begin{equation*}
    L_{\critic} =   \frac{1}{\horizon} \sum_{t=1}^{\horizon} \Vert  \symlog(\RLReturns_t) - \critic(\obsLatent_{\leq t}, \actions_{<t})  \Vert^{2}  ,
\end{equation*}
where $\symlog(x) = \sign(x) \log(|x| + 1)$ is the symlog function proposed by \citet{Hafner2025dreamerV3} for robust handling of values across scales and $G_{t}$ is the $\lambda$-return at step $t$:
\begin{equation*}
    G_{t} = \begin{cases}
        \bar{\reward}_{t} + \discountF (1-\doneSgnl_t)( (1-\lambda) \symexp(\critic_{t+1}) + \lambda G_{t+1} ) & t<\horizon , \\
        \symexp(\critic_{\horizon}) & t=\horizon .
    \end{cases} 
\end{equation*}
Here, $\symexp(x) = \sign(x) (\exp(|x|) - 1)$ is the symlog inverse,  $\critic_{t} = \critic(\obsLatent_{\leq t}, \actions_{< t})$, and $\bar{r}_{t}$ is the reward generated by the learned reward model.
We highlight that we found the above simple regression objective in symlog space to perform robustly across rewards from dense to sparse and across reward scales, as can be seen in Figure \ref{fig:actor-critic-performance}.

Following \citet{Hafner2025dreamerV3}, we scale the advantage terms using an exponential moving average (EMA):
\begin{equation}
A_t = \stopGrad\!\left(\frac{G_t - \symexp(\critic_{t})}{\max(1, S)}\right),
\end{equation}
where $\stopGrad$ denotes the stop-gradient operator and 
\[
S = \EMA(\quantile(\mG, 95) - \quantile(\mG, 5), 0.005)
\] 
is the difference between the 95th and 5th return quantiles within the current batch, smoothed by an EMA with coefficient $0.005$.


\subsection{Hyperparameters}
For the actor-critic control performance evaluation experiments, we use the same agent hyperparameters across all environments and benchmarks.
High-level agent training parameters are detailed in Table \ref{table:agent-hyperparams}.
The  hyperparameters for the  representation model (tokenizer), world model, and actor-critic, are presented in Tables \ref{table:tokenizer-hyperparams}, \ref{table:wm-hyperparams}, and \ref{table:actor-critic-hyperparams}, respectively.

\begin{table}[h]
\caption{Agent hyperparameters.}
\label{table:agent-hyperparams}
\vskip 0.15in
\begin{center}
\begin{small}
\begin{tabular}{lcr}
\toprule
Description  & Value \\
\midrule

Number of epochs  & 500 \\
Data collection steps per epoch  & 200 \\
Tokenizer optimization steps per epoch  & 300 \\
Tokenizer train from epoch  & 10 \\
World model optimization steps per epoch   & 300 \\
World model train from epoch   & 25 \\
Actor-critic optimization steps per epoch   & 50 \\
Actor-critic train from epoch   & 40 \\

\bottomrule
\end{tabular}
\end{small}
\end{center}
\vskip -0.1in
\end{table}

\begin{table}[h]
\caption{Representation model (tokenizer) hyperparameters.}
\label{table:tokenizer-hyperparams}
\vskip 0.15in
\begin{center}
\begin{small}
\begin{tabular}{lcr}
\toprule
Description  & Value \\
\midrule

Optimizer &   AdamW \\
Learning rate &   2e-4 \\
AdamW weight decay &   0.05 \\
AdamW ($\beta_1, \beta_2)  $ &   (0.9, 0.95) \\
Batch size  &  32  \\
Max. grad norm   &  1 \\
Patch size   &  1 \\

\bottomrule
\end{tabular}
\end{small}
\end{center}
\vskip -0.1in
\end{table}

\begin{table}[h]
\caption{World model hyperparameters.}
\label{table:wm-hyperparams}
\vskip 0.15in
\begin{center}
\begin{small}
\begin{tabular}{lcr}
\toprule
Description  & Value \\
\midrule

Optimizer   & AdamW \\
Learning rate  & 2e-4 \\
AdamW weight decay  & 0.01 \\
AdamW ($\beta_1, \beta_2)  $   & (0.9, 0.99) \\
AdamW $\epsilon$    &  1e-6 \\
Max. grad norm   &  1 \\
Batch size  &  8  \\
Training horizon (generation length)   & 32  \\

\midrule
Denoiser ($\RFVF$) Hyperparameters: \\
\midrule
Number of Transformer layers  &  12 \\
Number of Attention heads &    8  \\
Attention head dimension  &  64  \\
Embedding dimension  &  $512 = 64 \times 8$  \\
Spatial patch size  &  $2 \times 2$  \\

\midrule

Reward-Termination Model Hyperparameters: \\
\midrule
CNN hidden channels  & 256  \\
CNN output channels  & 64  \\
LSTM hidden dimension  & 512  \\ 

\bottomrule
\end{tabular}
\end{small}
\end{center}
\vskip -0.1in
\end{table}

\FloatBarrier

\begin{table}[H]
\caption{Actor-critic hyperparameters.}
\label{table:actor-critic-hyperparams}
\vskip 0.15in
\begin{center}
\begin{small}
\begin{tabular}{lcr}
\toprule
Description  & Value \\
\midrule

Optimizer  & AdamW \\
Learning rate  & 2e-4 \\
AdamW weight decay  & 0.01 \\
AdamW ($\beta_1, \beta_2)  $   & (0.9, 0.99) \\
Max. grad norm   &  1 \\

GAE $\lambda$  &  0.95 \\
GAE $\gamma$ (discount factor)  & 0.99  \\
Entropy weight   &  0.001 \\

Imagination actor-critic context length   & 20  \\
Imagination world model context length   & 1  \\
Imagination batch size  &  30 \\
Imagination generation horizon  & 32  \\

\bottomrule
\end{tabular}
\end{small}
\end{center}
\vskip -0.1in
\end{table}


For Atari, we follow the standard default setting with one exception: sticky actions are replaced by the max-initial no-ops wrapper for randomness, as in prior work \citep{micheli2022transformers, cohen2025uncovering, cohen2024improving, Alonso2024DIAMOND}.
Due to computational constraints, we were limited to a single evaluation across the benchmarks. 
Moreover, our agent’s architecture was untested in a sample-efficiency setting, with no prior evidence to support its performance. 
We therefore adopted a well-established setting with a proven track record to mitigate risk.
In hindsight, however, given the observed results, this precaution may not have been necessary.
Nonetheless, unlike most prior works, we did not use sign rewards or termination on life loss.
The Atari hyperparameters are presented in Table \ref{table:atari-hyperparams}.

\begin{table}[H]
\caption{Atari 100K hyperparameters.}
\label{table:atari-hyperparams}
\vskip 0.15in
\begin{center}
\begin{small}
\begin{tabular}{lcr}
\toprule
Description  & Value \\
\midrule

Frame resolution  & $64 \times 64$ \\
Frame color space (RGB / grayscale)   & RGB \\
Frame Skip   & 4 \\
Max random initial no-ops    & 30 \\
Sticky actions probability  & 0.0 \\
Terminate on live loss    & No \\
Sign rewards   & No \\

\bottomrule
\end{tabular}
\end{small}
\end{center}
\vskip -0.1in
\end{table}

\begin{table}[H]
\caption{Craftium hyperparameters.}
\label{table:craftium-hyperparams}
\vskip 0.15in
\begin{center}
\begin{small}
\begin{tabular}{lcr}
\toprule
Description  & Value \\
\midrule

Frame resolution   & $64 \times 64$ \\
Frame Skip  & 4 \\
time speed   & 0 (fixed time of day) \\
Sync mode  & True \\
FPS max. (all excl. ChopTree-v0)    & 200 \\
FPS max. (ChopTree-v0)    & 10 \\

\bottomrule
\end{tabular}
\end{small}
\end{center}
\vskip -0.1in
\end{table}

\FloatBarrier

\subsection{Implementation Details}
\paragraph{Actor Initialization}
Since exploration difficulty is not central to our evaluation, we initialize the actor network’s biases with task-specific priors in select Craftium environments to encourage more effective data collection in the early stages, before actor–critic training begins.
In \texttt{Craftium/Speleo-v0}, we set the forward action bias to 1 and all others to 0.
In \texttt{Craftium/ChopTree-v0}, we set the dig (chop) action bias to 1 and the rest to 0.

\paragraph{Replay Buffer Sampling}
Under uniform sampling, as training progresses and the replay buffer grows, the probability of sampling recently collected data decreases. 
To mitigate this, we sample 70\% of the batch uniformly, while the remaining 30\% are sampled from a $\text{Beta}(3,1)$ distribution (sample index is $i=\lfloor x\rfloor, \ x\sim \text{Beta}(3,1)$).

\clearpage

\section{Runtime Analysis}
\label{sec:appendix-runtime-analysis}
Table \ref{tab:appendix-runtime-analysis} provides a detailed runtime analysis for the online training of the agent. 
Table \ref{tab:appendix-throughput-analysis} includes imagination throughput details.

\begin{table}[!ht]
\centering
\caption{Breakdown of training stage durations (in milliseconds) for budgets $B=16$ and $B=32$ measured on an RTX 4090 GPU.}
\label{tab:appendix-runtime-analysis}
\begin{tabular}{lcc}
\toprule
\textbf{Stage (Time in ms)} & $\mathbf{B = 16}$ & $\mathbf{B = 32}$ \\
\midrule
Tokenizer training step & 72.5 & 72.5 \\
World model training step & 133.2 & 133.2 \\
Controller training step & 1071 & 1871 \\
\hspace{1em} $\llcorner$ Imagination & 673 & 1326.5 \\
\hspace{2em} $\llcorner$ Reward prediction & 5.18 & 5.18 \\
\hspace{2em} $\llcorner$ Denoising total ($B \times$ steps) & 652.8 & 1305.6 \\
\hspace{3em} $\llcorner$ Single step duration & 40.8 & 40.8 \\
\hspace{4em} $\llcorner$ Action computation & 6.5 & 6.5 \\
\hspace{4em} $\llcorner$ Denoiser forward & 34.3 & 34.3 \\
\midrule
\textbf{Epoch Total Time (sec)} & & \\
\midrule
Tokenizer (300 steps) & 21.75 & 21.75 \\
World model (300 steps) & 39.96 & 39.96 \\
Controller imagination (50 steps) & 53.55 & 93.55 \\
\bottomrule
\end{tabular}
\end{table}

\begin{table}[!ht]
\centering
\caption{Imagination throughput for budgets $B=16$ and $B=32$ under the configuration detailed in Appendix~\ref{sec:appendix-experimental-setup}, measured on an RTX 4090 GPU.}
\label{tab:appendix-throughput-analysis}
\begin{tabular}{lcc}
\toprule
 & $\mathbf{B = 16}$ & $\mathbf{B = 32}$ \\
\midrule
Frame denoising steps per second & 23529.4 & 23529.4 \\
Frame generations per second & 1470.6 & 735.3 \\
32-frame segments per second & 46 & 23 \\
\bottomrule
\end{tabular}
\end{table}

\vfill
\clearpage

\section{Algorithm Pseudocode}
\label{sec:appendix-pseudocode}

\begin{algorithm}[h]
   \caption{Agent Training Outline}
   \label{alg:overview-alg}
\begin{algorithmic}[1]
\Procedure{\texttt{training\_loop}}{}
    \Repeat
   \State $\texttt{collect\_experience(\texttt{num\_steps\_to\_collect})}$
   \For{representation model update steps}
        \State $\texttt{train\_representation\_model()}$
    \EndFor
    \For{world model update steps}
        \State $\texttt{train\_world\_model()}$ 
    \EndFor
    \For{controller update steps}
        \State $\texttt{train\_controller()}$ (Alg. )
    \EndFor

   \Until{stopping criterion is met} (e.g., target number of epochs)
\EndProcedure

\Statex \hrulefill

\Procedure{\texttt{collect\_experience}}{$n$}
    \State Initialize controller state with latest context (real env.)
    \For{$t=0$ to $n$}
        \State Sample action $\action_t \sim \policy(\action_t | \obsLatent_{\leq t})$
        \State Apply the action in the environment: $\obs_{t+1}, \reward_{t}, \doneSgnl_{t} \leftarrow \texttt{env.step}(a_t)$
        \State Store experience in replay buffer
        \If{$\doneSgnl == 1$}
            \State Reset environment and model context.
        \EndIf
    \EndFor

\EndProcedure

\Statex \hrulefill

\Procedure{\texttt{train\_representation\_model}}{}
    \State Sample a batch of (independent) observations (frames) $\obs$ from the replay buffer
    \State Compute encoder outputs $\obsLatent = \varphi_{\text{enc}}(\obs)$
    \State Compute decoder outputs (reconstructions) $\hat{o} = \varphi_{\text{dec}}(\obsLatent)$
    \State Compute loss (Eq. \ref{eq:tokenizer-loss}, Section \ref{sec:appendix-repr-model})
    \State Update model weights

\EndProcedure

\Statex \hrulefill

\Procedure{\texttt{train\_world\_model}}{}
    \State Sample a batch of trajectory segments from the replay buffer
    \State Compute latent observation representations $\obsLatent_t = \varphi_{\text{enc}}(\obs_t)$ for all observations
    \State Sample corresponding noise sequences $\obsLatent^{0}$ and denoising times $\RFTime$
    \State Compute noisy sequences $\obsLatent^{\RFTime} = \RFTime \, \obsLatent^1 + (1-\RFTime)\, \obsLatent^0$
    \State Compute denoiser outputs $\RFVF(\obsLatent^\RFTime, \RFTime, \action)$ (single forward pass)
    \State Compute denoiser loss (Eq. \ref{eq:wm-objective})
    \State Compute reward-termination loss (Section \ref{sec:appendix-wm-desc})
    \State Update models weights

\EndProcedure

\Statex \hrulefill

\Procedure{\texttt{train\_controller}}{}
    \State Sample a batch of trajectory segments from the replay buffer (context)
    \State Initialize controller state and set world model context
    \State Generate future trajectory using Horizon Imagination (Alg. \ref{alg:horizon-imagination})
    \State Compute control objectives (Sections \ref{sec:actor-critic-training} and \ref{sec:appendix-actor-critic})
    \State Update controller weights

\EndProcedure

\end{algorithmic}
\end{algorithm}

\vfill
\clearpage

\section{Additional Results}
\label{sec:appendix-additional-results}

\paragraph{Parallel vs. Sequential Generation Quality in Atari}
Figure \ref{fig:parallel-vs-sequantial-gen-quality-atari} presents additional generation quality evaluations for the Atari benchmark.
Notably, we observe the same overall trends while FVD values are generally lower across configurations due to the visually simpler observation space and dynamics.
The MSE values are orders of magnitude smaller, which aligns with the fact that the majority of pixels remain fixed between frames in Atari.

\begin{figure}[h]
\begin{center}
\includegraphics[width=0.6\linewidth]{figures/parallel-vs-seq/legend.pdf} 
\\
\includegraphics[width=0.245\linewidth]{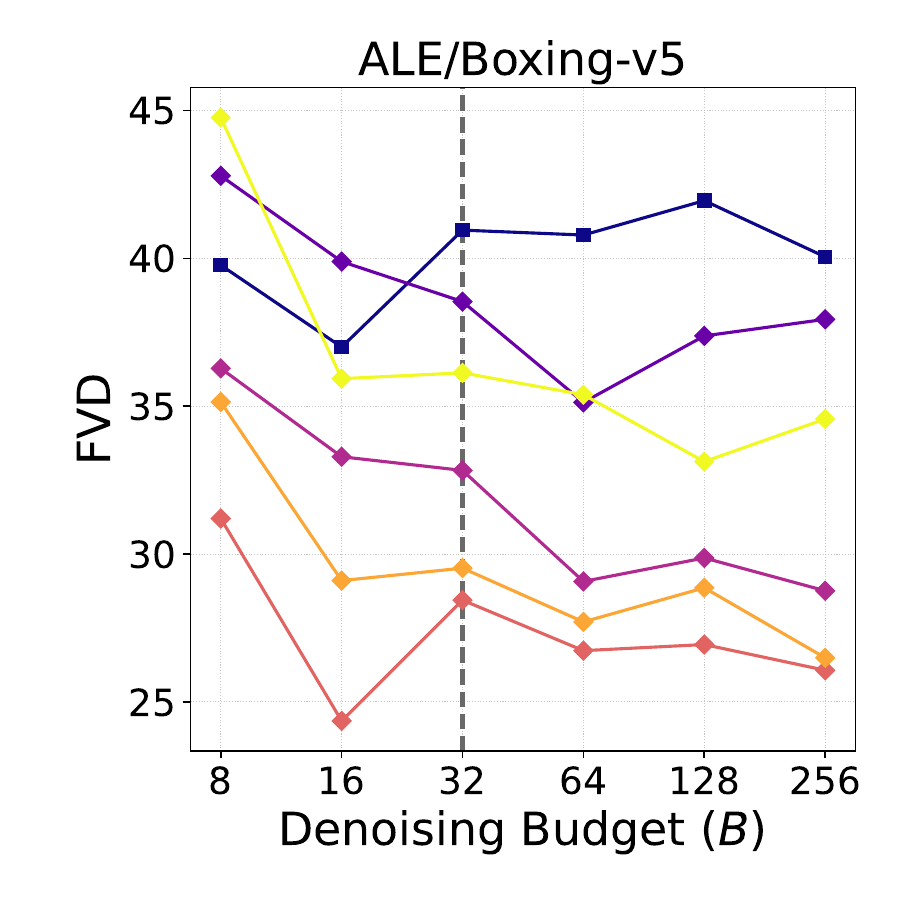}
\includegraphics[width=0.245\linewidth]{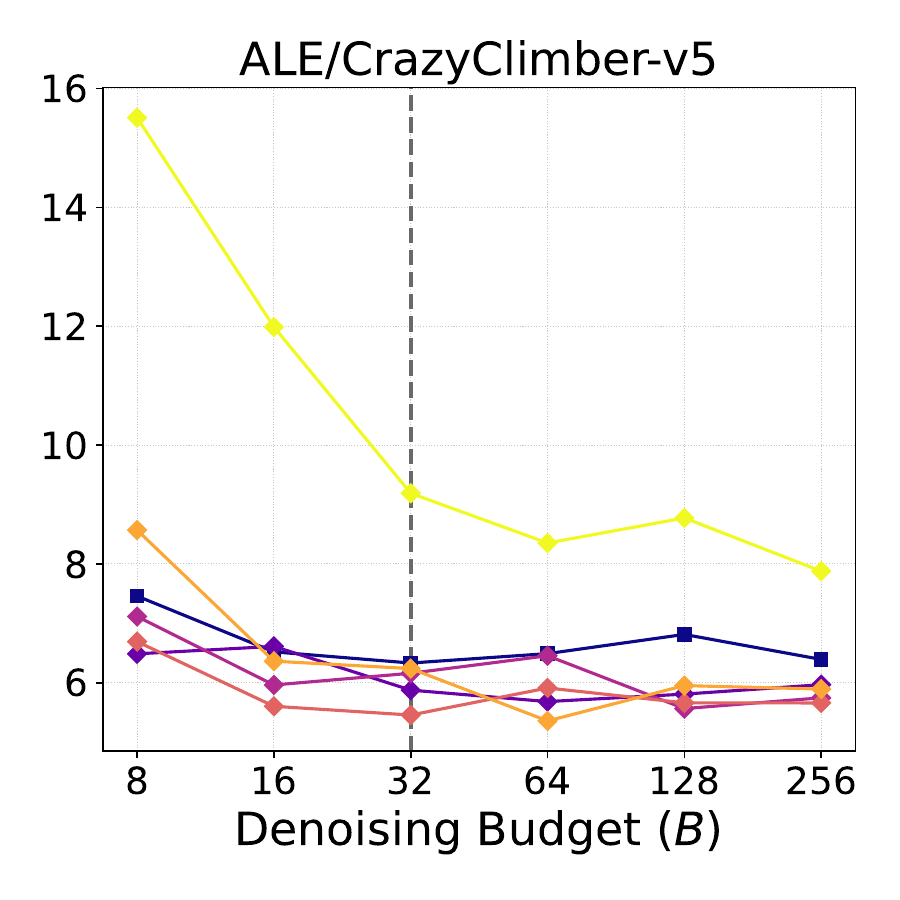}
\includegraphics[width=0.245\linewidth]{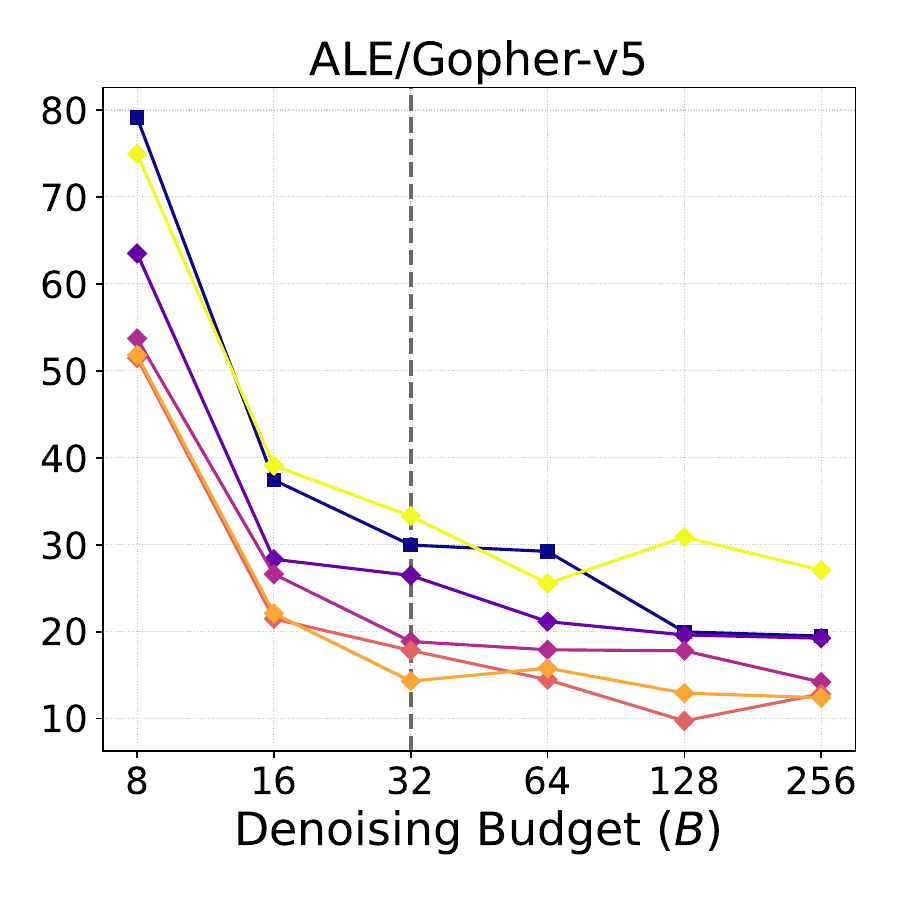}
\includegraphics[width=0.245\linewidth]{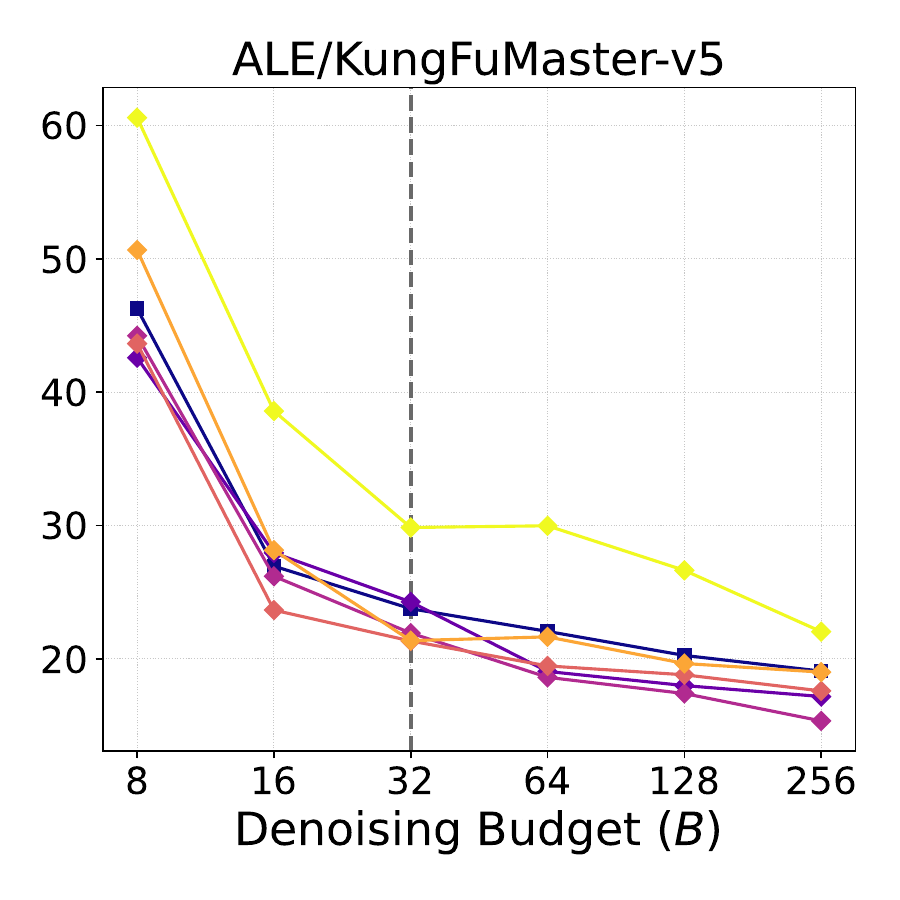}
\\
\includegraphics[width=0.245\linewidth]{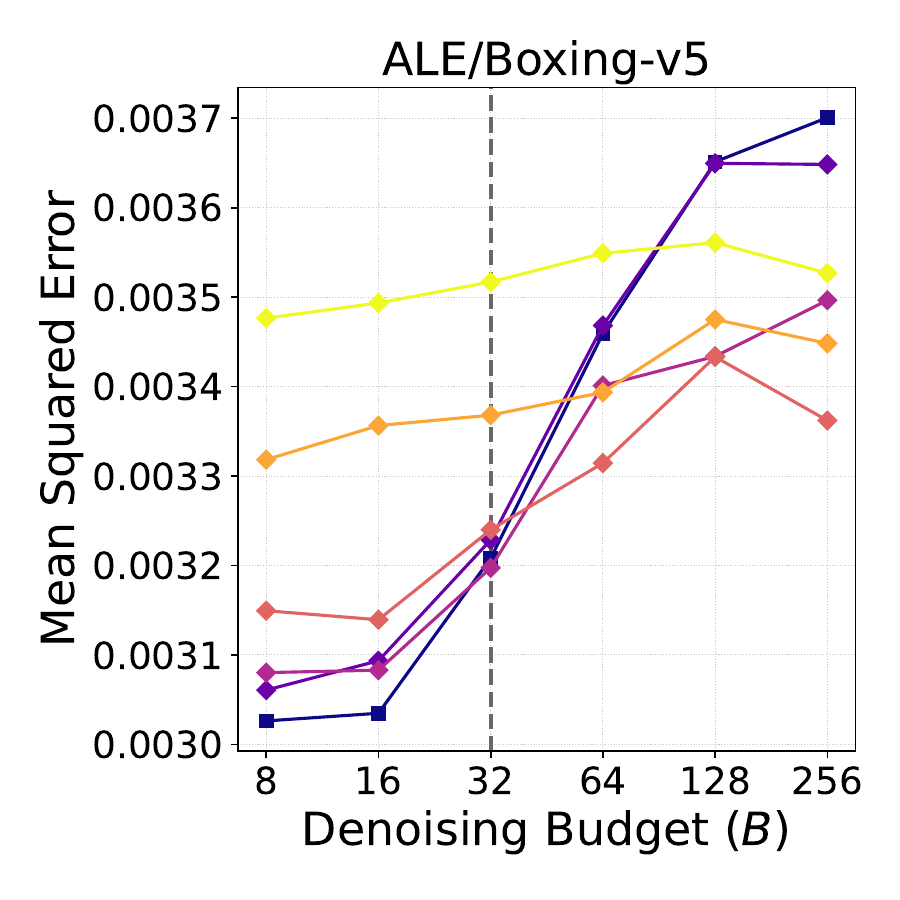}
\includegraphics[width=0.245\linewidth]{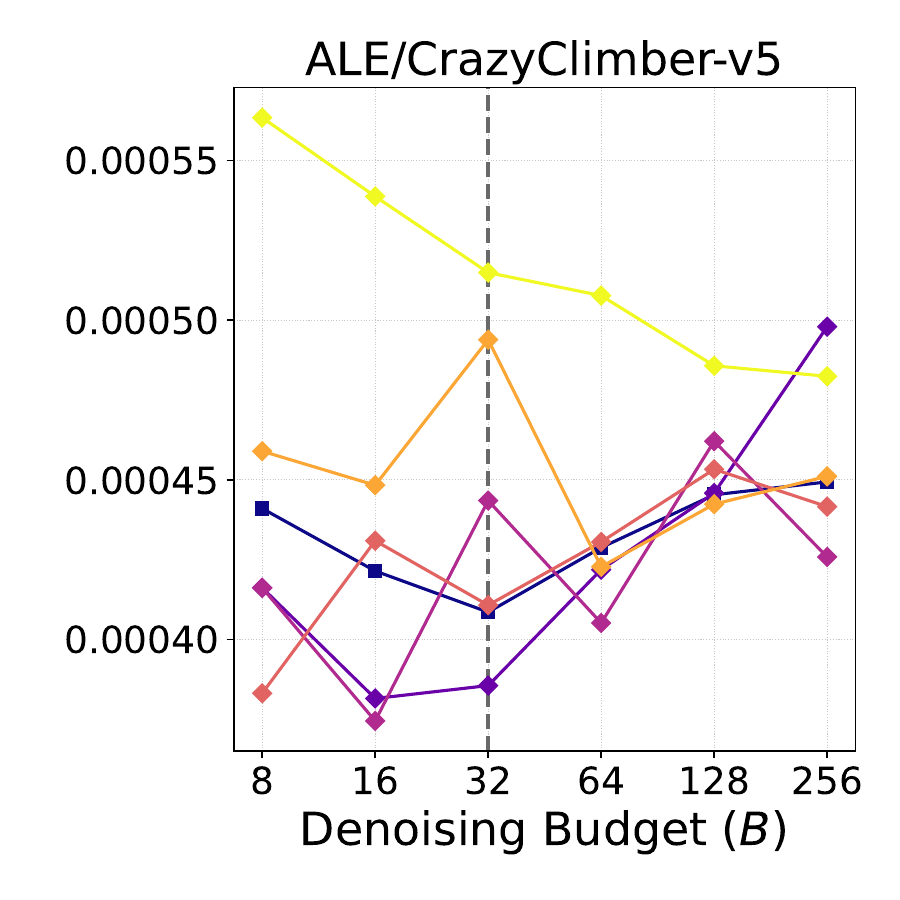}
\includegraphics[width=0.245\linewidth]{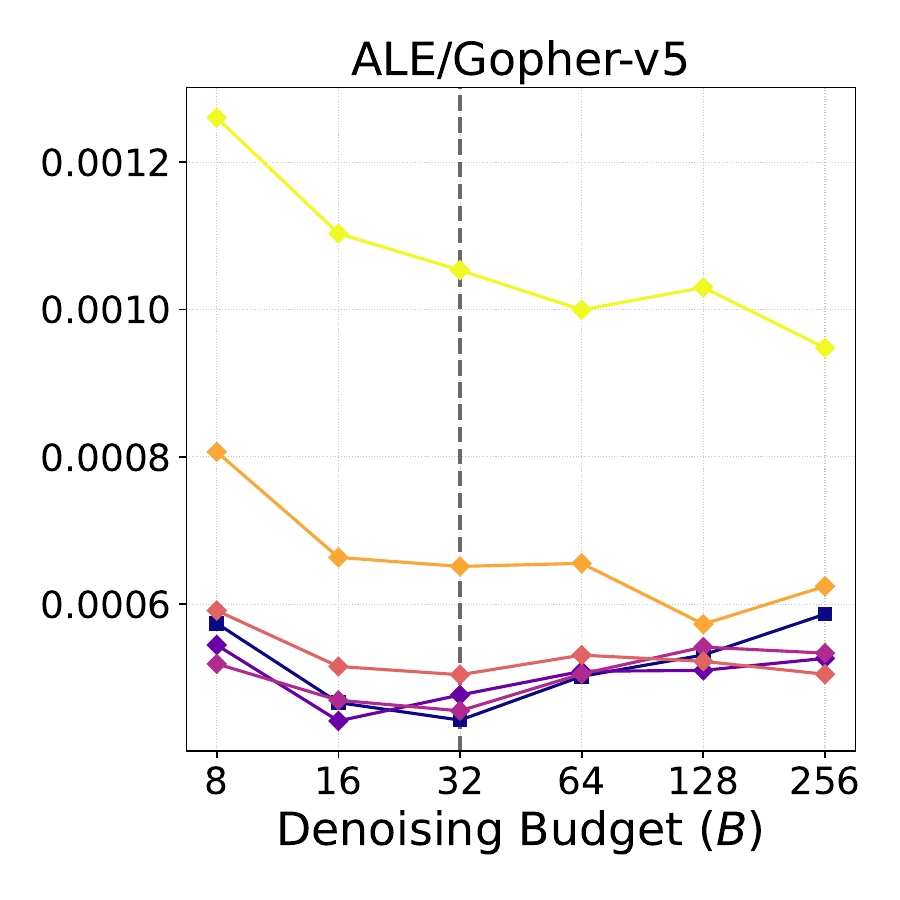}
\includegraphics[width=0.245\linewidth]{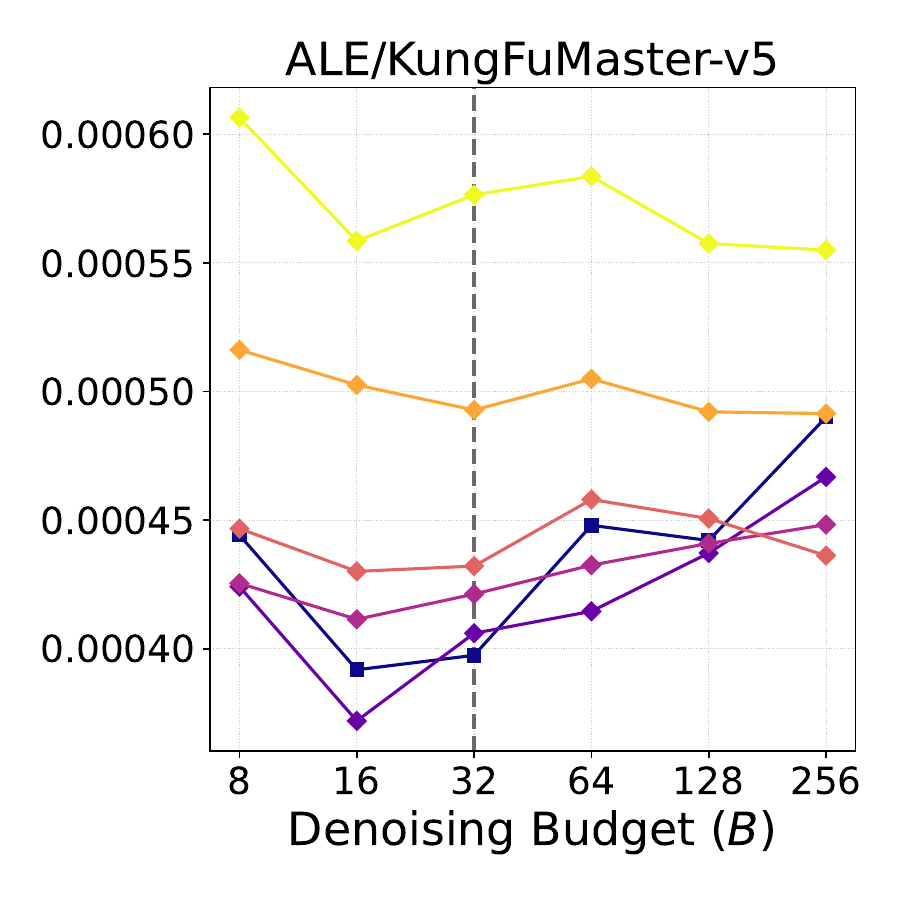}
\end{center}
\caption{World model generation quality versus denoising steps budget for the subset of Atari games. Each point shows the average FVD/MSE over 512 sampled 33-frame segments, where the first frame was given as context and the last 32 were generated conditioned on the recorded actions. A dashed vertical line indicates the transition out of sub-frame budgets.}
\label{fig:parallel-vs-sequantial-gen-quality-atari}
\end{figure}

\FloatBarrier

\paragraph{Pyramidal Schedule (Diffusion Forcing) Performance Degradation}
Figure \ref{fig:parallel-vs-sequantial-gen-quality-diffusion-forcing-schedule} reports results under the same experimental setup as Section \ref{sec:parallel-vs-seq-generation-quality-study}, except using the Pyramidal schedule of \citet{chen2024diffusionForcing}.
Our findings reveal significant degradation at high budgets, driven by the coupling of budget and decay horizon.
Instead of improving with more computation, generation quality collapses as the budget grows.
Furthermore, unlike our schedule, this approach lacks support for sub-frame budgets.

\begin{figure}[h]
\begin{center}
\includegraphics[width=0.6\linewidth]{figures/parallel-vs-seq/legend.pdf} 
\\
\includegraphics[width=0.245\linewidth]{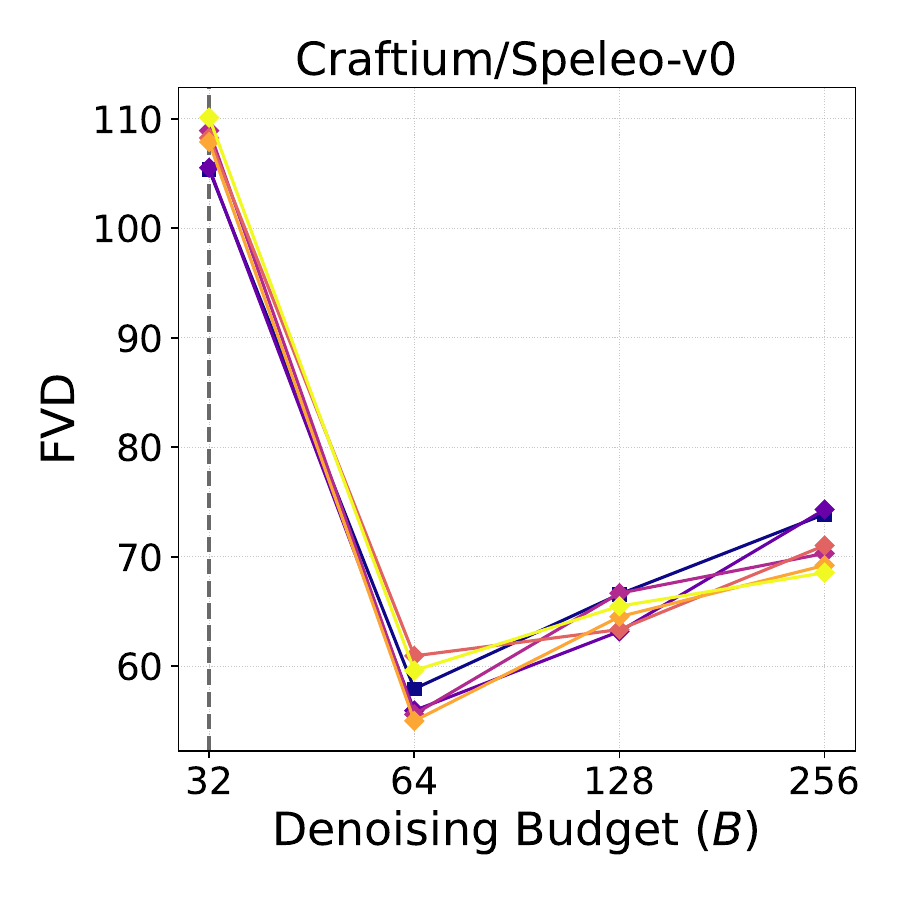}
\includegraphics[width=0.245\linewidth]{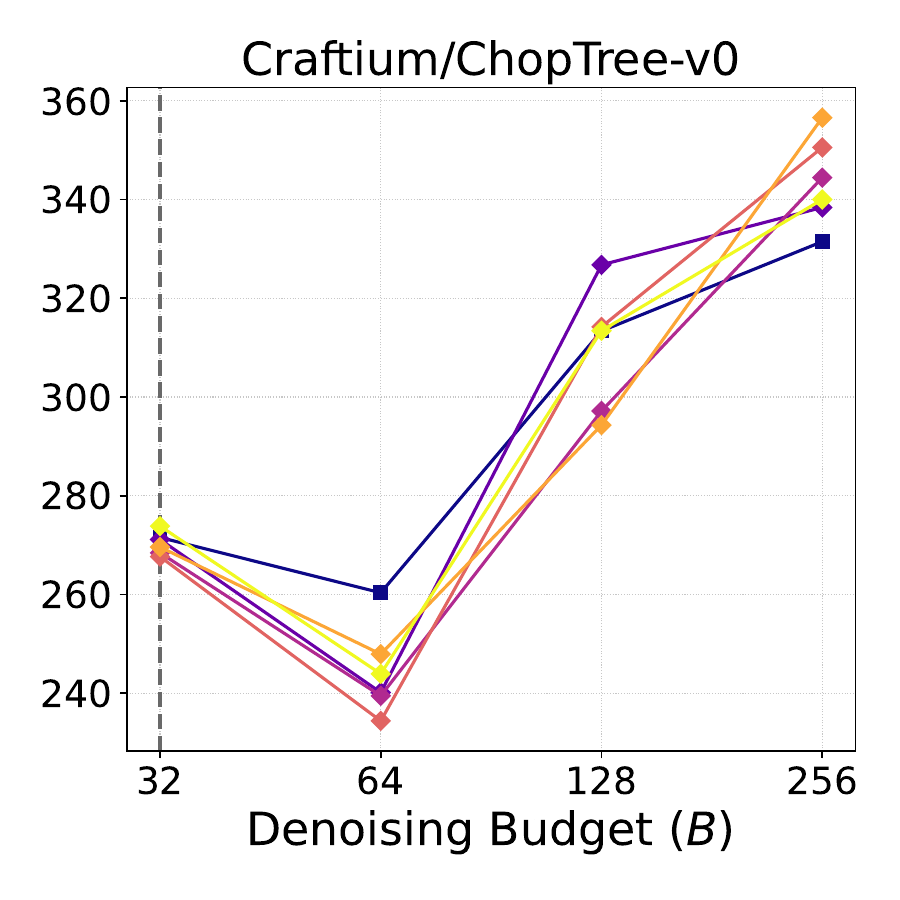}
\includegraphics[width=0.245\linewidth]{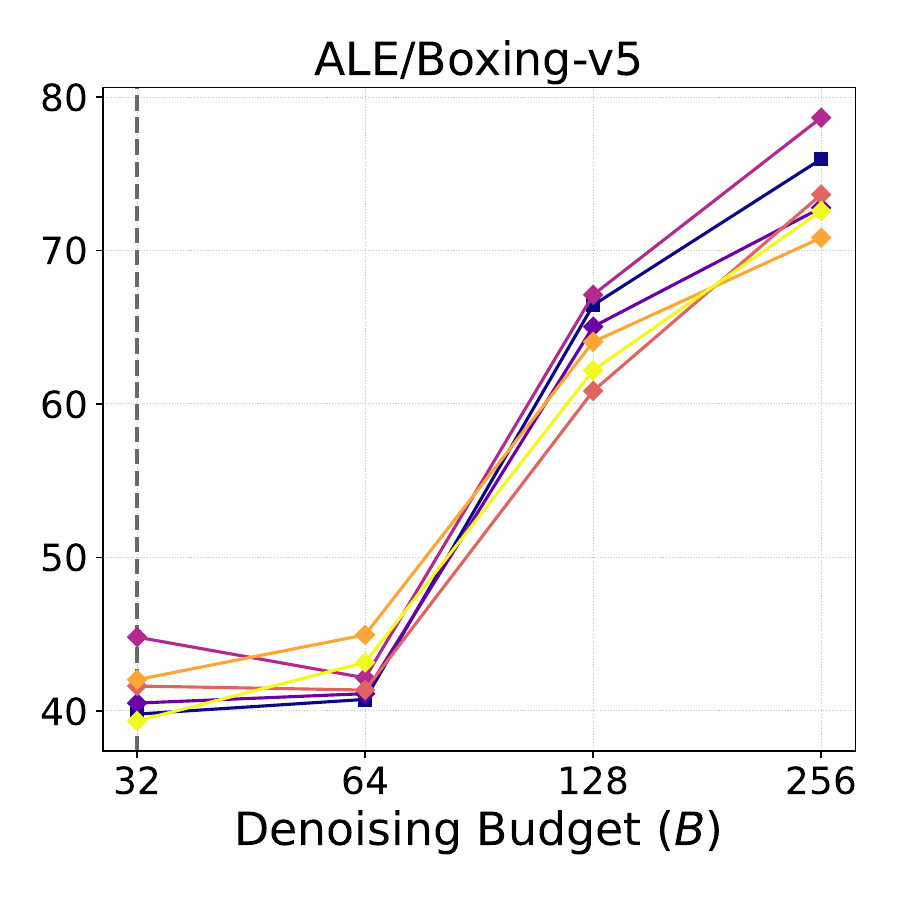}
\includegraphics[width=0.245\linewidth]{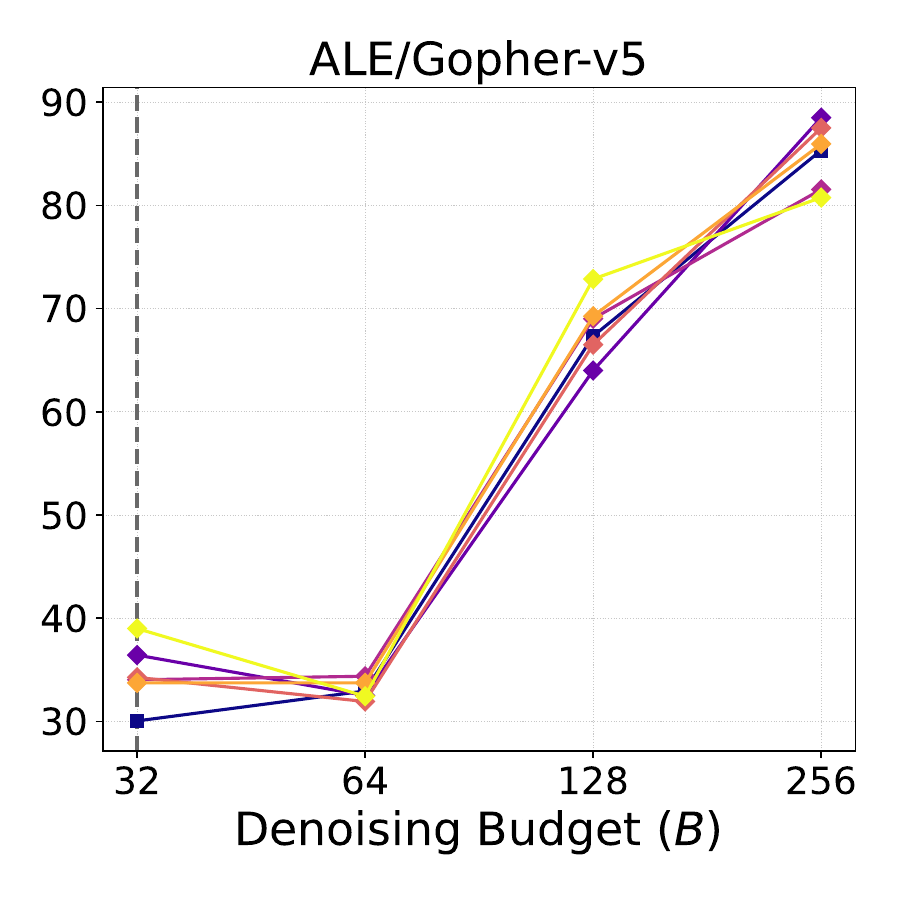}
\\
\includegraphics[width=0.245\linewidth]{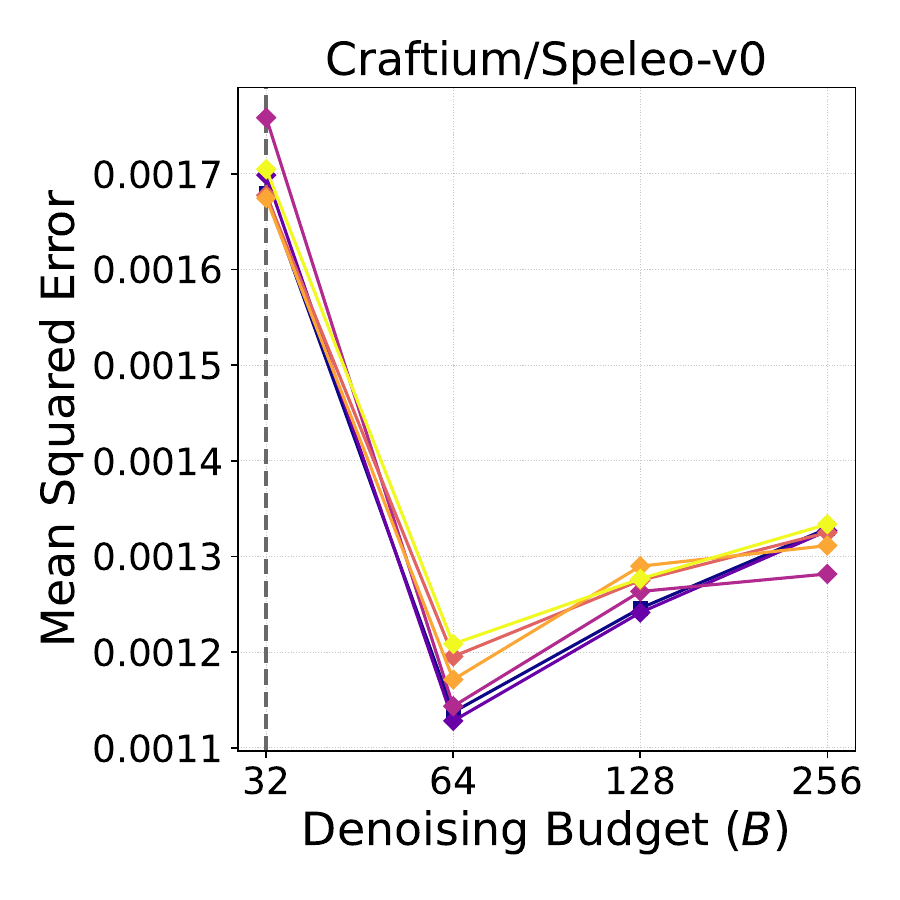}
\includegraphics[width=0.245\linewidth]{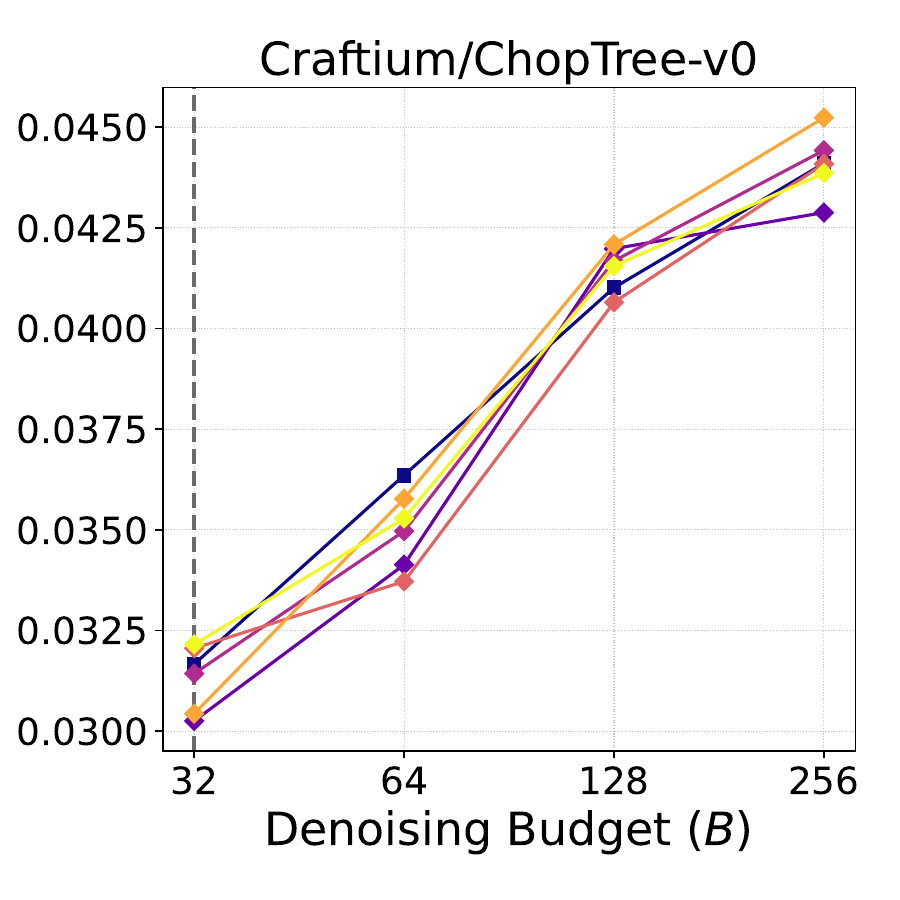}
\includegraphics[width=0.245\linewidth]{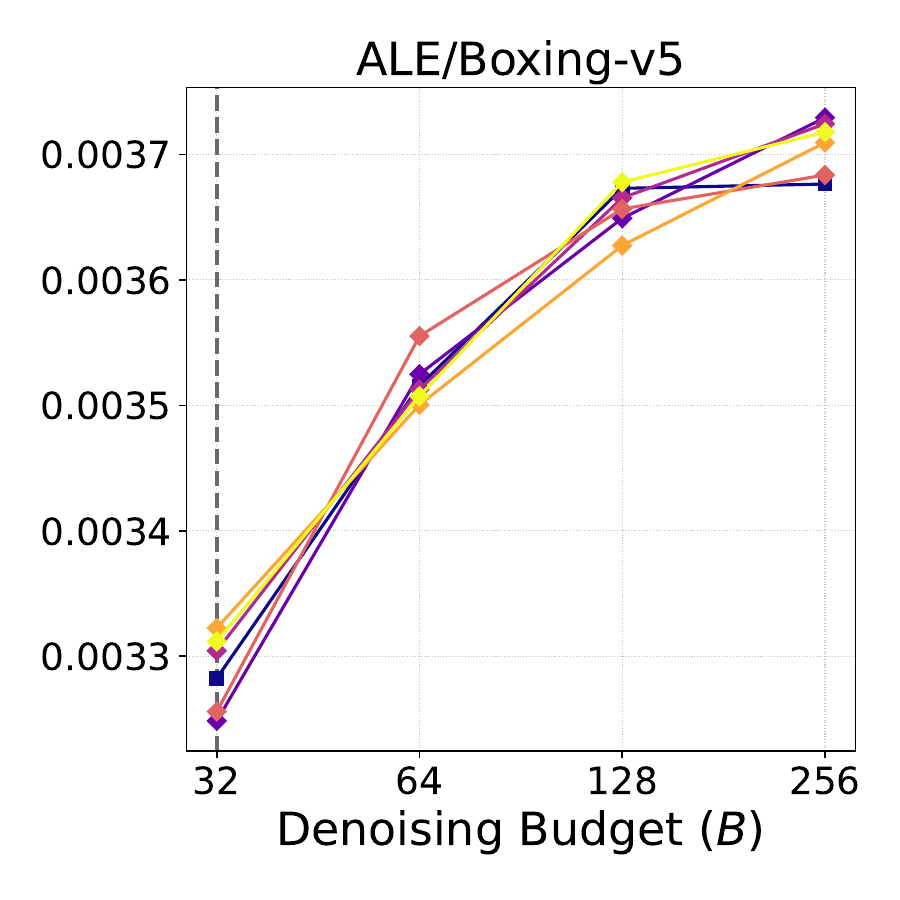}
\includegraphics[width=0.245\linewidth]{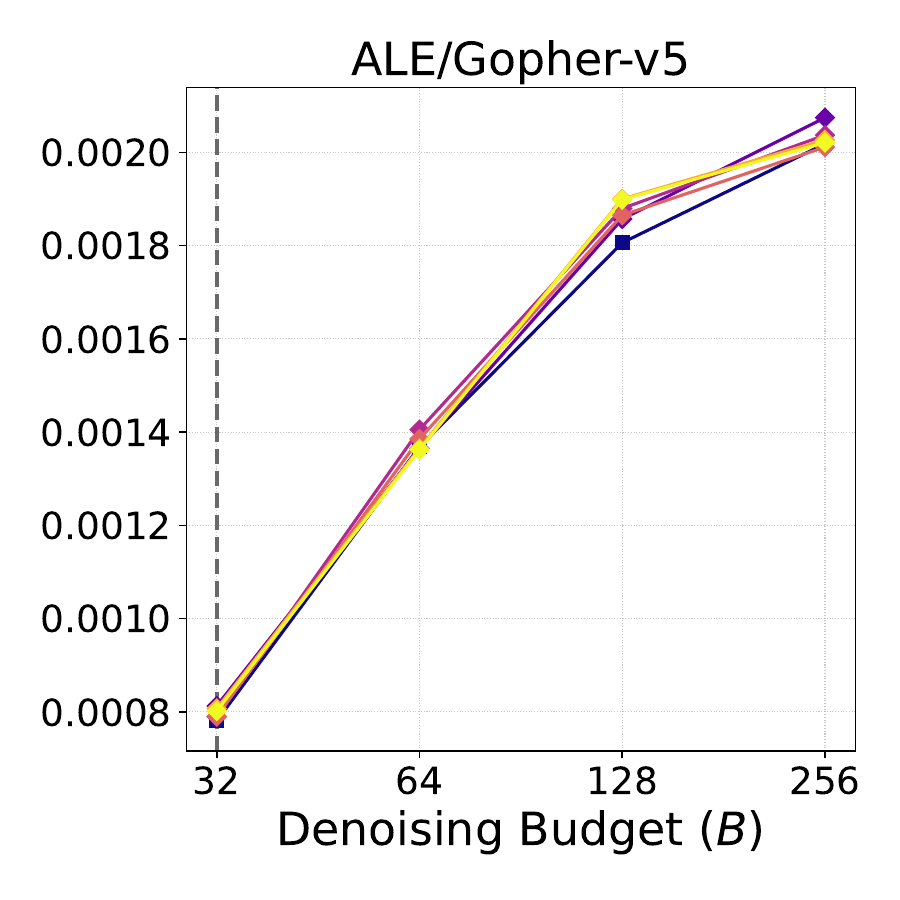}
\end{center}
\caption{World model generation quality versus denoising steps budget when using the Pyramidal schedule of \citet{chen2024diffusionForcing}. Each point shows the average FVD/MSE over 512 sampled 33-frame segments, where the first frame was given as context and the last 32 were generated conditioned on the recorded actions. A dashed vertical line indicates the transition out of sub-frame budgets.}
\label{fig:parallel-vs-sequantial-gen-quality-diffusion-forcing-schedule}
\end{figure}

\FloatBarrier

\paragraph{Qualitative Visualizations of Horizon Schedule Configurations}
To complement the quantitative results in Section~\ref{sec:parallel-vs-seq-generation-quality-study}, we present qualitative visualizations of the generated sequences used in that analysis, across different configurations of the proposed Horizon schedule.
From the large set of possible configurations and samples, we select a subset that highlights the most important cases, reflects overall trends, and also provides a visual impression of the environment and the task.
We further show ground-truth frames (GT) and their tokenizer reconstructions (Rec) to evaluate representation quality in isolation from the world model.
This enables us to disentangle errors due to imperfect representations from those caused by dynamics modeling.
Representative examples are provided in Figures~\ref{fig:vis-chop-tree}-\ref{fig:vis-crazyclimber}.

\begin{figure}[h]
\begin{center}
\includegraphics[width=\linewidth]{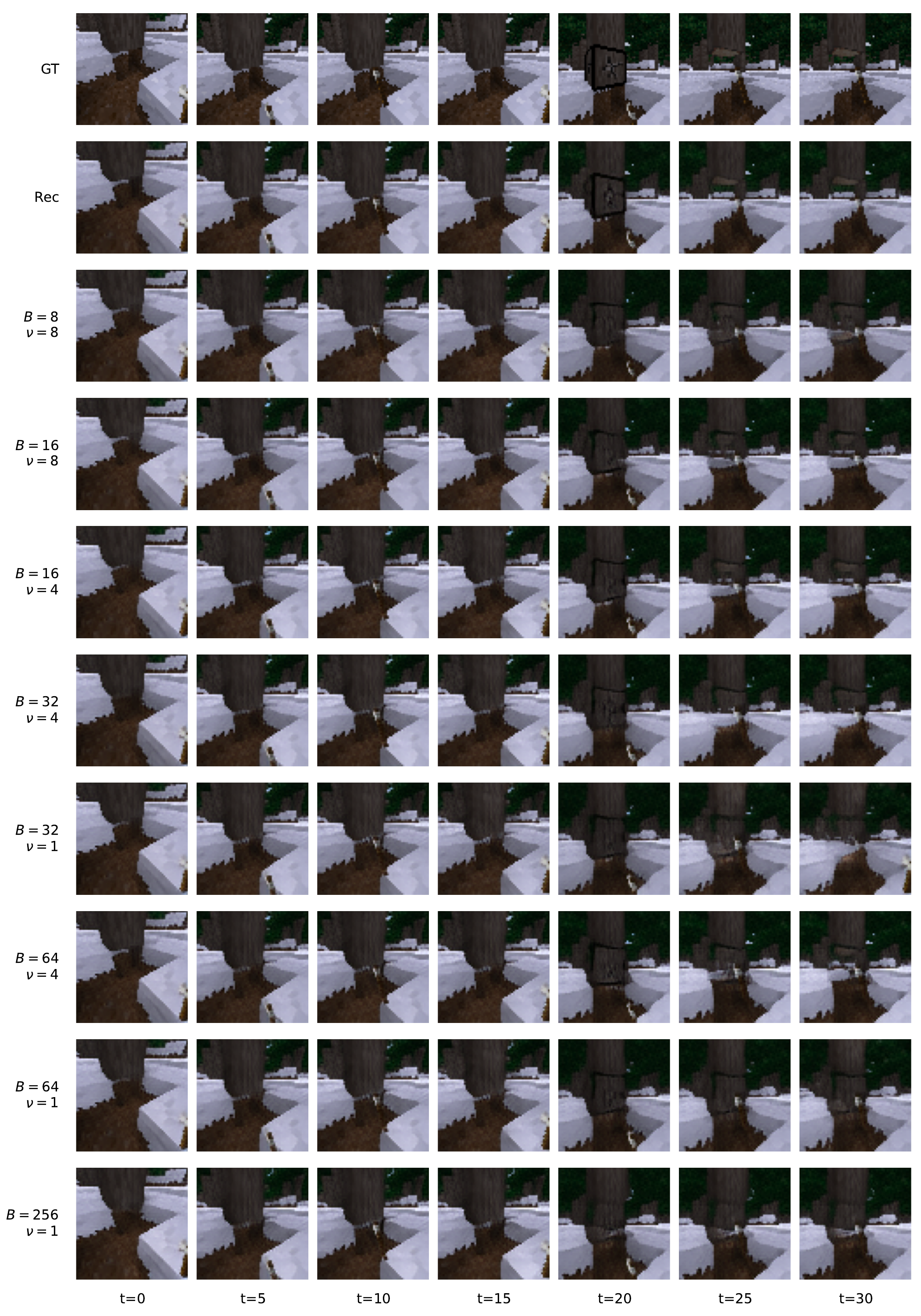} 
\end{center}
\caption{Qualitative visualization of generated sequences in \texttt{Craftium/ChopTree-v0} under different Horizon schedule configurations $(B,\nu)$. Ground-truth frames (GT) and their tokenizer reconstructions (Rec) are provided for reference. Configurations with $\nu=1$ correspond to the autoregressive baseline.}
\label{fig:vis-chop-tree}
\end{figure}

\begin{figure}[h]
\begin{center}
\includegraphics[width=\linewidth]{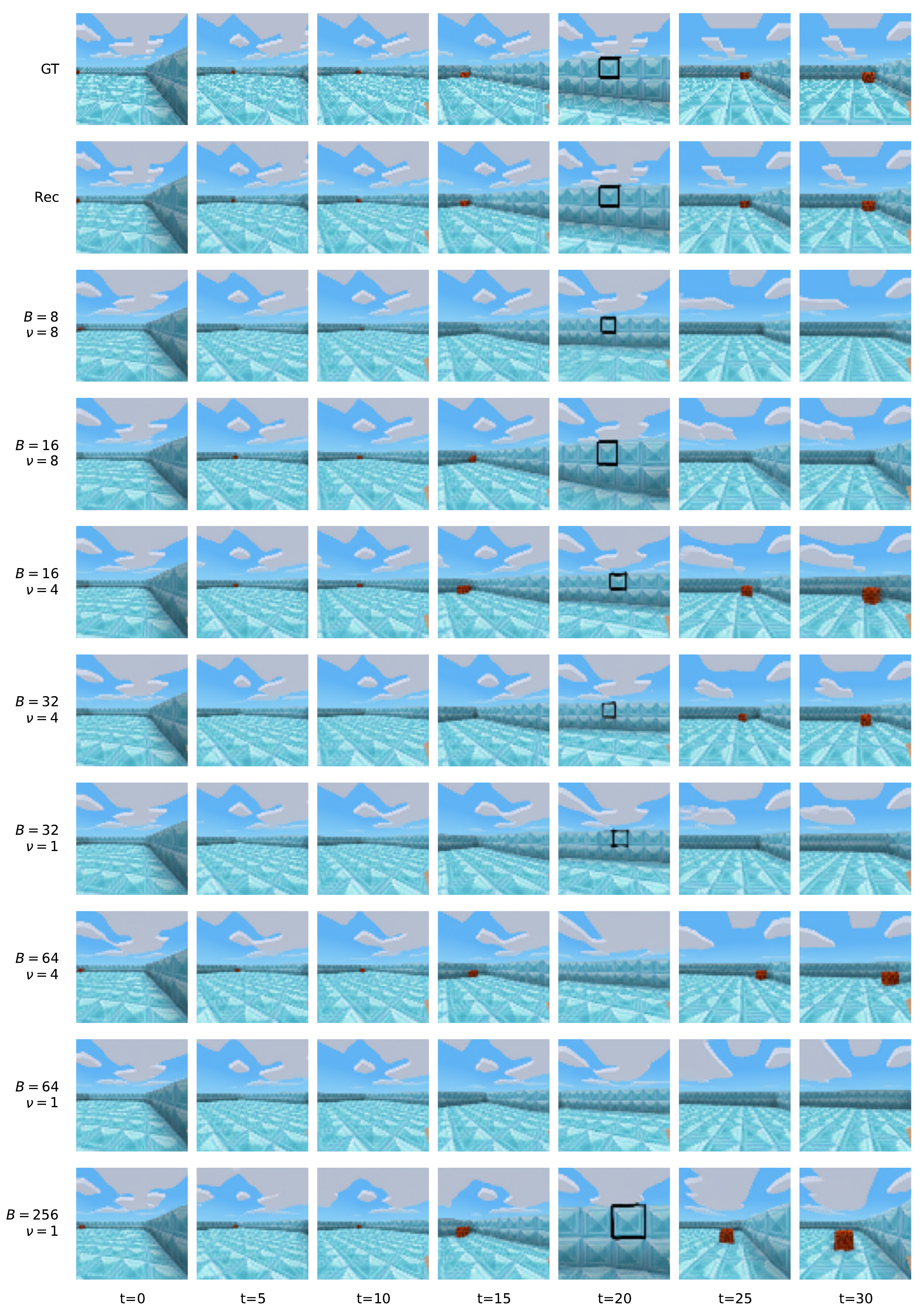} 
\end{center}
\caption{Qualitative visualization of generated sequences in \texttt{Craftium/Room-v0} under different Horizon schedule configurations $(B,\nu)$. Ground-truth frames (GT) and their tokenizer reconstructions (Rec) are provided for reference. Configurations with $\nu=1$ correspond to the autoregressive baseline.}
\label{fig:vis-room}
\end{figure}

\begin{figure}[h]
\begin{center}
\includegraphics[width=\linewidth]{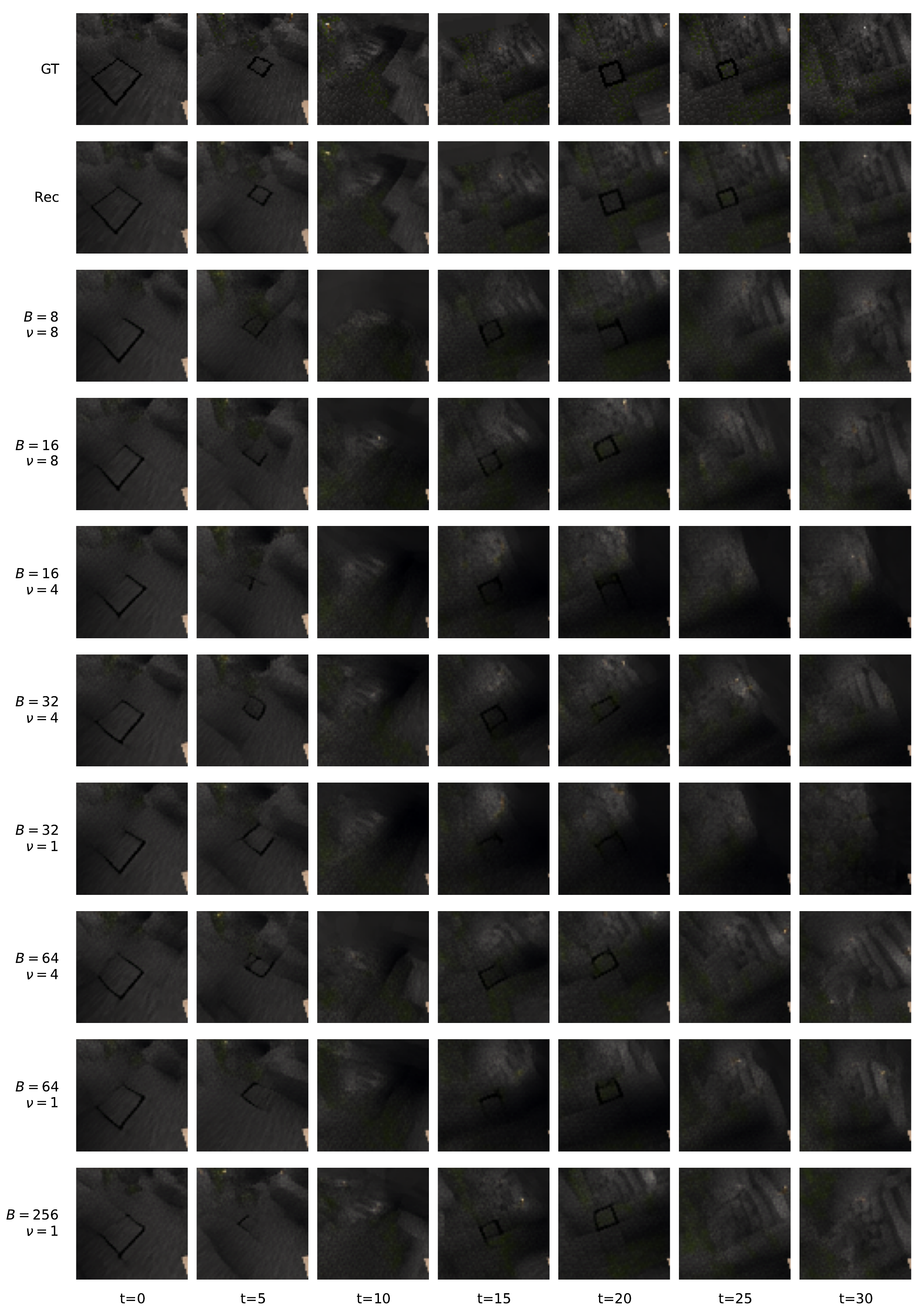} 
\end{center}
\caption{Qualitative visualization of generated sequences in \texttt{Craftium/Speleo-v0} under different Horizon schedule configurations $(B,\nu)$. Ground-truth frames (GT) and their tokenizer reconstructions (Rec) are provided for reference. Configurations with $\nu=1$ correspond to the autoregressive baseline.}
\label{fig:vis-speleo}
\end{figure}

\begin{figure}[h]
\begin{center}
\includegraphics[width=\linewidth]{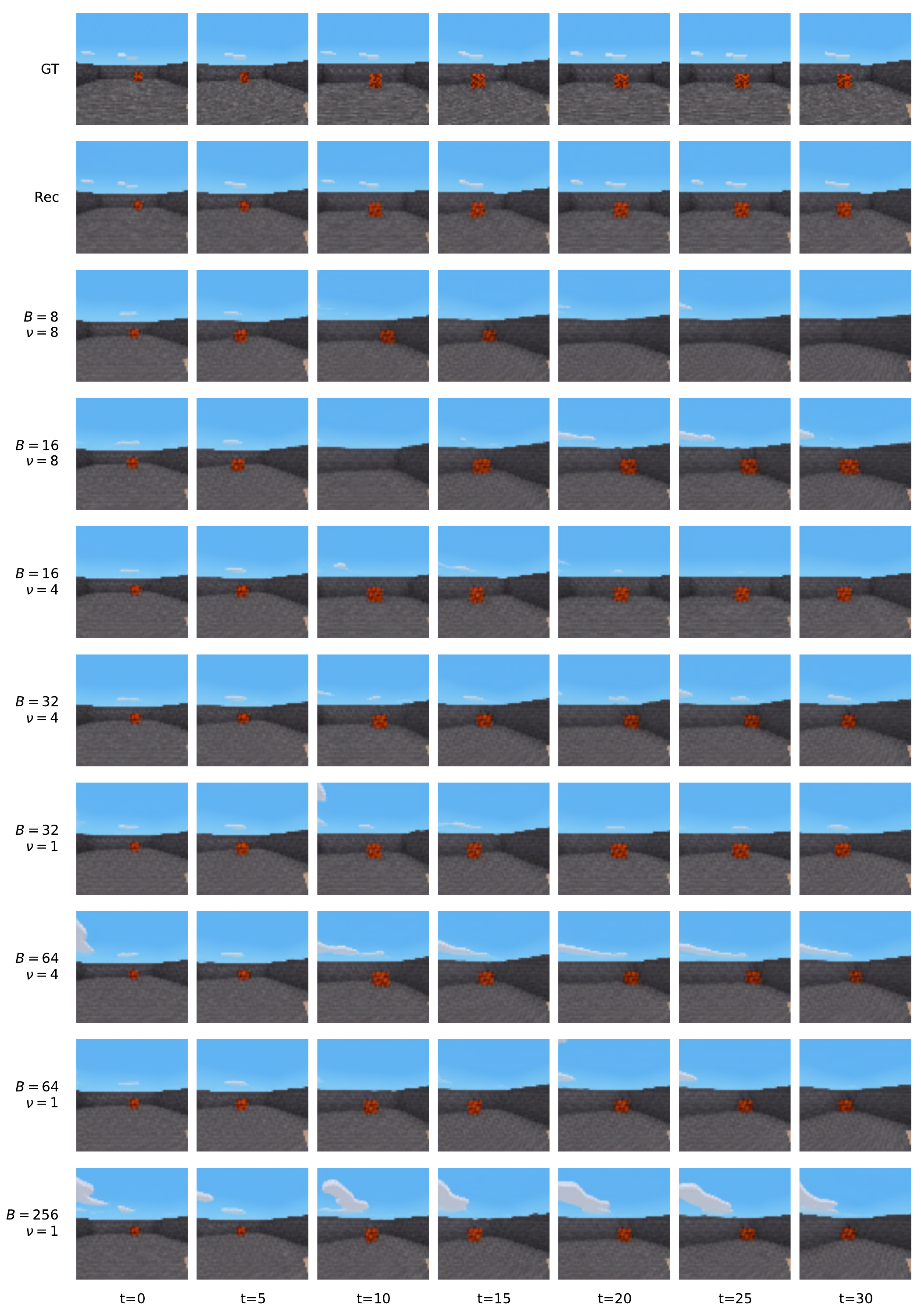} 
\end{center}
\caption{Qualitative visualization of generated sequences in \texttt{Craftium/SmallRoom-v0} under different Horizon schedule configurations $(B,\nu)$. Ground-truth frames (GT) and their tokenizer reconstructions (Rec) are provided for reference. Configurations with $\nu=1$ correspond to the autoregressive baseline.}
\label{fig:vis-small-room}
\end{figure}

\begin{figure}[h]
\begin{center}
\includegraphics[width=\linewidth]{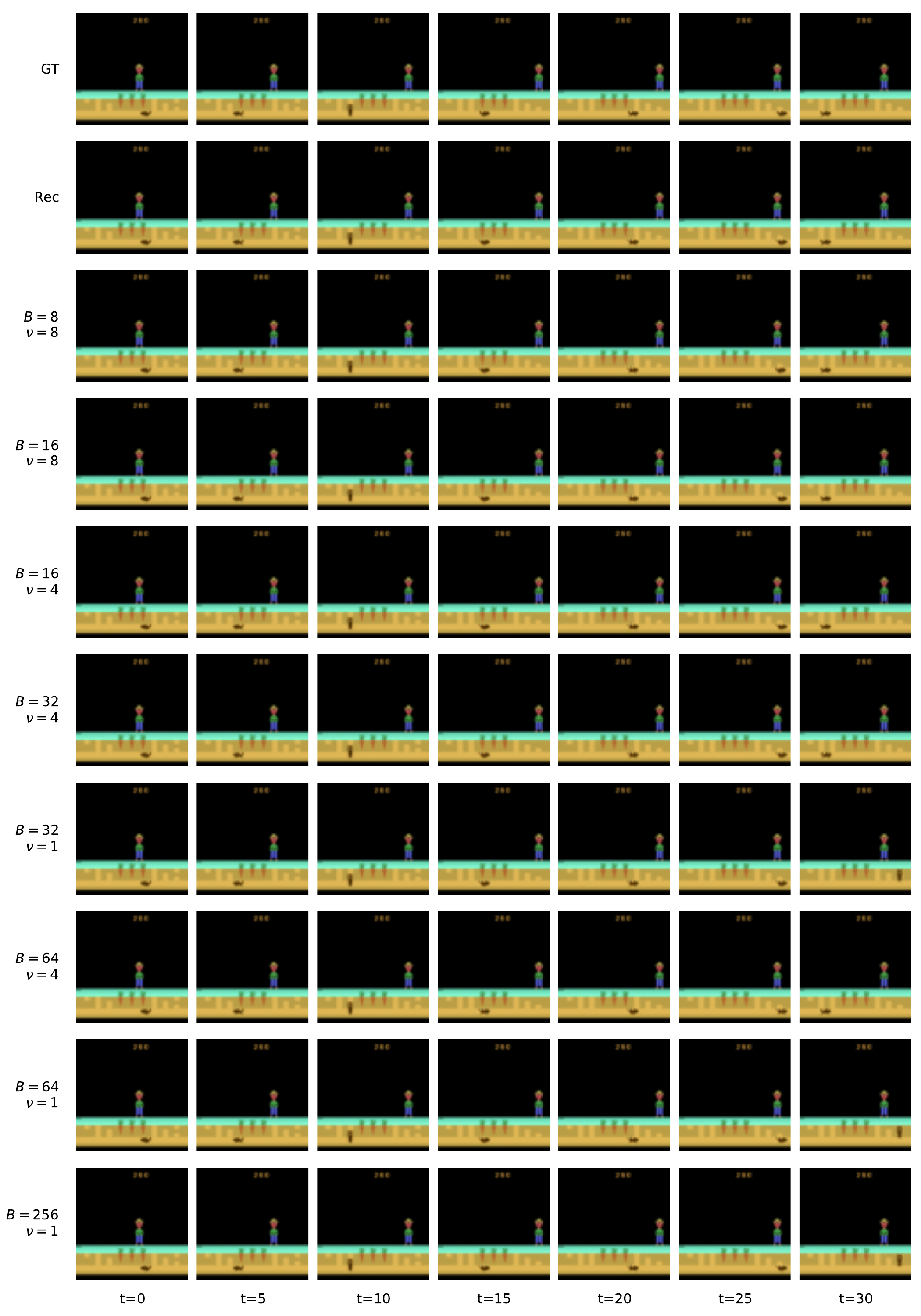} 
\end{center}
\caption{Qualitative visualization of generated sequences in \texttt{ALE/Gopher-v5} under different Horizon schedule configurations $(B,\nu)$. Ground-truth frames (GT) and their tokenizer reconstructions (Rec) are provided for reference. Configurations with $\nu=1$ correspond to the autoregressive baseline.}
\label{fig:vis-gopher}
\end{figure}

\begin{figure}[h]
\begin{center}
\includegraphics[width=\linewidth]{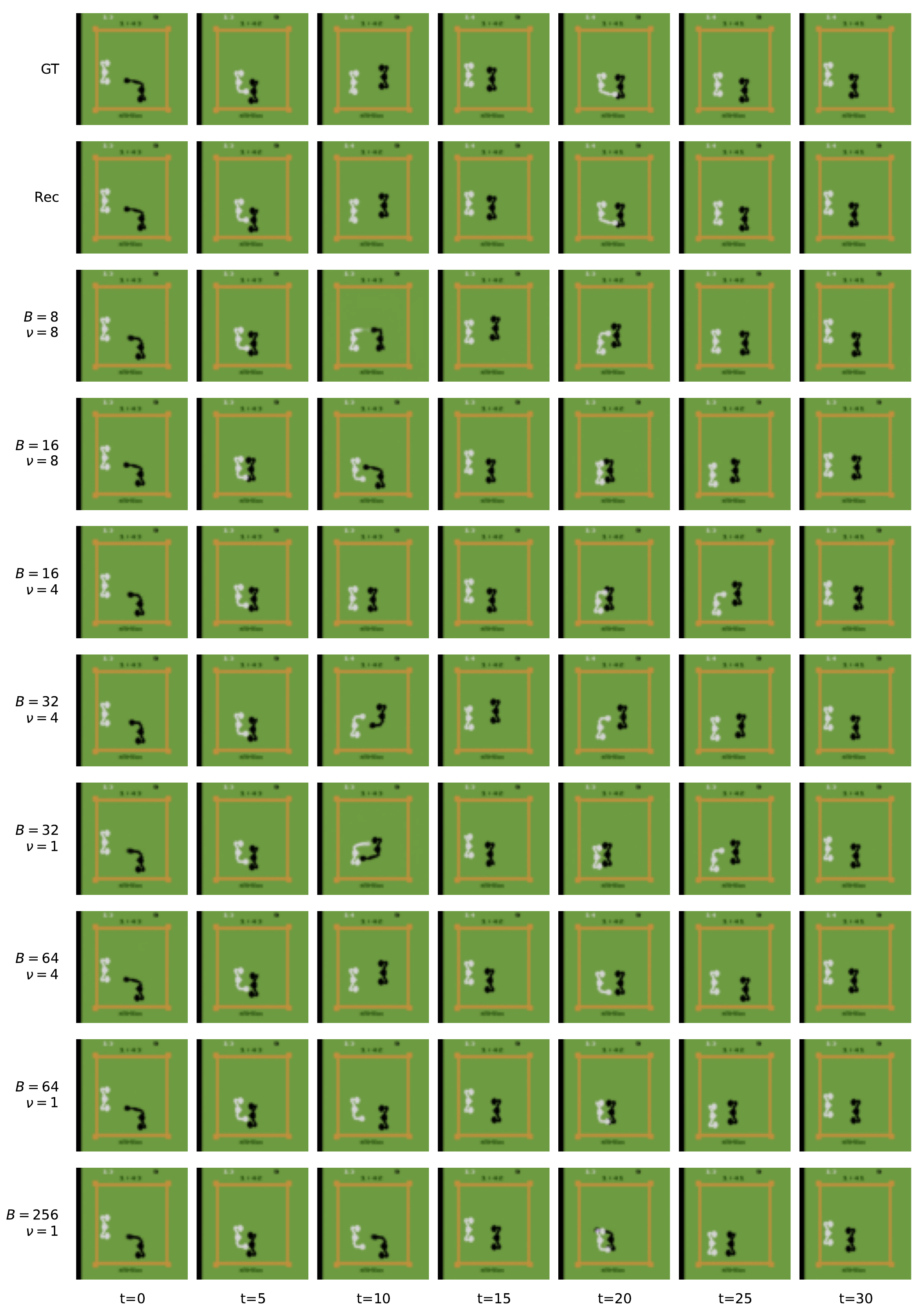} 
\end{center}
\caption{Qualitative visualization of generated sequences in \texttt{ALE/Boxing-v5} under different Horizon schedule configurations $(B,\nu)$. Ground-truth frames (GT) and their tokenizer reconstructions (Rec) are provided for reference. Configurations with $\nu=1$ correspond to the autoregressive baseline.}
\label{fig:vis-boxing}
\end{figure}

\begin{figure}[h]
\begin{center}
\includegraphics[width=\linewidth]{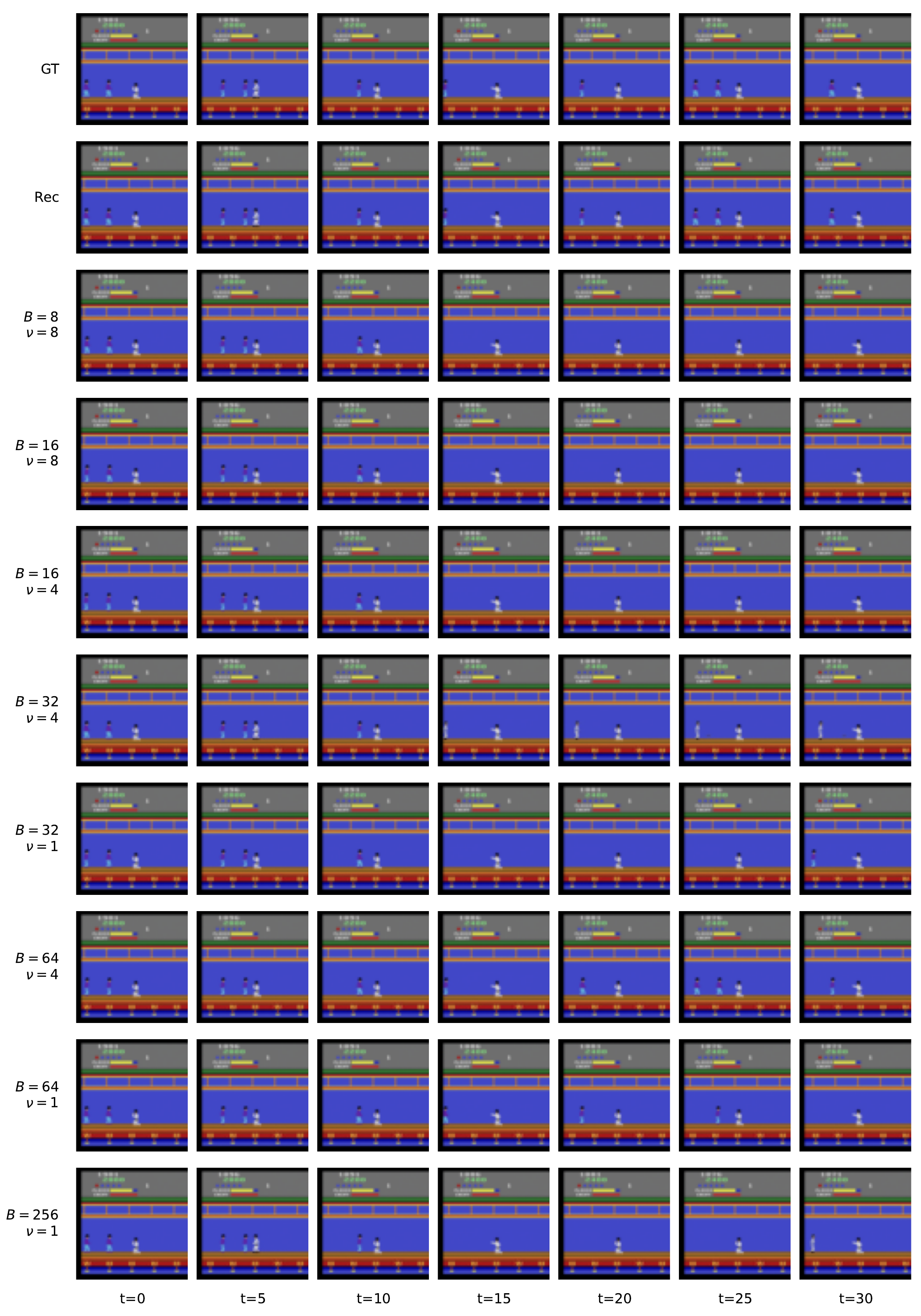} 
\end{center}
\caption{Qualitative visualization of generated sequences in \texttt{ALE/KungFuMaster-v5} under different Horizon schedule configurations $(B,\nu)$. Ground-truth frames (GT) and their tokenizer reconstructions (Rec) are provided for reference. Configurations with $\nu=1$ correspond to the autoregressive baseline.}
\label{fig:vis-kungfumaster}
\end{figure}

\begin{figure}[h]
\begin{center}
\includegraphics[width=\linewidth]{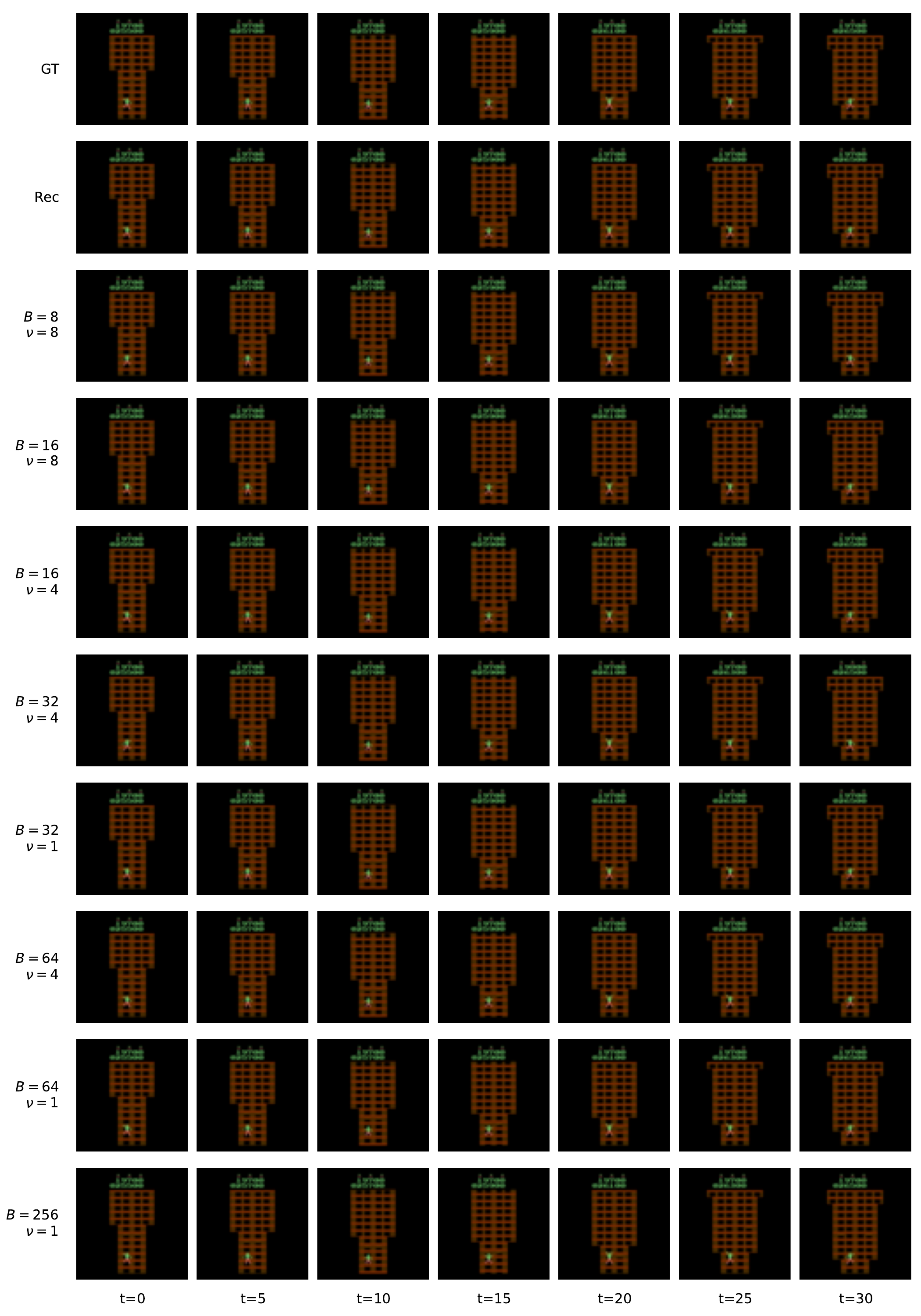} 
\end{center}
\caption{Qualitative visualization of generated sequences in \texttt{ALE/CrazyClimber-v5} under different Horizon schedule configurations $(B,\nu)$. Ground-truth frames (GT) and their tokenizer reconstructions (Rec) are provided for reference. Configurations with $\nu=1$ correspond to the autoregressive baseline.}
\label{fig:vis-crazyclimber}
\end{figure}

\FloatBarrier

\section{Proof for Proposition \ref{prop:action-sampling-scheme}}
\label{sec:appendix-proofs}



\subsection{Proof to Proposition \ref{prop:action-sampling-scheme}.1}
\begin{proof}
    Let $\vp \in \Delta^{\numActions -1}$.
Without loss of generality, let $\actionPermutation = (1, 2, \ldots, \numActions)$ so that $\actionPermutation(i) = i$, which simplifies notations.

Let $1 \leq i < \numActions$.
If $S_i(\vp) \coloneqq \sum_{j=i}^{\numActions} p_j = 0$, then by the definition of $\actionThreshold_{i}$ it holds that 
\[
    \Pr(\actionThreshold(\vp, \cdot)=i) = \Pr(\omega_i < \actionThreshold_i(\vp)) = \Pr(\omega_i < p_i ) = p_i .
\]

Otherwise, $S_i(\vp) > 0$. In that case,

\begin{align*}
    \Pr(\action(\vp, \cdot) = i) & = \Pr(\omega_i < \actionThreshold_i(\vp)) \  \prod_{j=1}^{i-1} \Pr(\omega_j \geq \actionThreshold_j(\vp)) \\
    & = \frac{p_i}{\sum_{k=i}^{\numActions} p_k} \prod_{j=1}^{i-1} \left( 1 - \actionThreshold_j(\vp)  \right) \\
    & = \frac{p_i}{\sum_{k=i}^{\numActions} p_k} \prod_{j=1}^{i-1} \left( 1 - \frac{p_j}{\sum_{k=j}^{\numActions} p_k} \right) \\
    & = \frac{p_i}{\sum_{k=i}^{\numActions} p_k} \prod_{j=1}^{i-1} \left( \frac{\sum_{k=j+1}^{\numActions} p_k}{\sum_{k=j}^{\numActions} p_k} \right) \\
    & = \frac{p_i}{\sum_{k=i}^{\numActions} p_k} \left( \frac{\sum_{k=2}^{\numActions} p_k}{\sum_{k=1}^{\numActions} p_k} \right)    \left( \frac{\sum_{k=3}^{\numActions} p_k}{\sum_{k=2}^{\numActions} p_k} \right)  \cdots  \left( \frac{\sum_{k=i}^{\numActions} p_k}{\sum_{k=i-1}^{\numActions} p_k} \right) \\
    & = p_i .
\end{align*}
In the last equality, we used the fact that $\sum_{k=1}^{\numActions} p_k = 1$.
The probability of the last action follows from the fact that the probability mass sums to $1$.
\end{proof}

\subsection{Proof to Proposition \ref{prop:action-sampling-scheme}.2}
\begin{proof}

The first inequality 
$$
\frac{1}{2}\Vert \vp - \vq \Vert_1 \leq \Pr(A) = \Pr(\action(\vp, \omega) \neq \action(\vq, \omega)) \quad \forall \omega \in [0,1)^{N-1}
$$ 
follows from the coupling lemma \citep{springer1983random} (Lemma 3.6, p.249) and from Proposition \ref{prop:action-sampling-scheme}.1.

For the second inequality $\Pr(A) \leq  \Vert \alpha(\vp) - \alpha(\vq) \Vert_1$, let
\begin{align*}
    \vu & = \max(\alpha(\vp), \alpha(\vq)) = \left( \max(\alpha_1(\vp), \alpha_1(\vq)), \ldots, \max(\alpha_{\numActions-1}(\vp), \alpha_{\numActions-1}(\vq)) \right), \\
    \vl & = \min(\alpha(\vp), \alpha(\vq)) = \left( \min(\alpha_1(\vp), \alpha_1(\vq)), \ldots, \min(\alpha_{\numActions-1}(\vp), \alpha_{\numActions-1}(\vq)) \right). \\
\end{align*}
Without loss of generality, let $\actionPermutation = (1, 2, \ldots, \numActions)$ so that $\actionPermutation(i) = i$, which simplifies notations.
Recall that 
\[
A \;=\; \{\  \action(p, \omega) \neq \action(q, \omega)\}, \quad \omega \sim \mathcal{U}([0,1))^{\numActions-1} .
\]
Consider the following partition of $[0, 1)^{\numActions-1}$:
\begin{gather*}
    [0, 1)^{\numActions-1} = \left[ \bigcup_{i=1}^{\numActions - 1} (L_{i} \cup C_{i}) \right] \cup U , \\
    L_{i} = \{ \omega \  | \  \forall j< i: \omega_j \geq u_j \ \text{and} \ \omega_i < l_i \} , \\
    C_{i} = \{ \omega \ | \ \forall j< i: \omega_j \geq u_j \ \text{and} \ \omega_i \in [l_i, u_i) \} , \\
    U = \{ \omega \  | \ \forall 1 \leq i \leq \numActions - 1: \omega_i \geq u_i \}.
\end{gather*}
Notice that by the definition of $\action(\cdot , \cdot)$, it holds that $\action(\vp, \omega) = \action(\vq, \omega)$ for all $\omega \in \left[ \bigcup_{i=1}^{\numActions-1}L_{i} \right] \cup U$.
Similarly,  $\Pr(A) = \Pr(\omega \in \bigcup_{i=1}^{\numActions -1}C_i)$.
Hence,
\begin{align*}
    \Pr(A) & = \sum_{i=1}^{\numActions-1} \Pr(C_i) \\
    & = \sum_{i=1}^{\numActions-1} (u_i - l_i) \Pi_{j=1}^{i-1}(1 - u_j) \\
    & = \sum_{i=1}^{\numActions-1} \left\vert \alpha_i(\vp) - \alpha_i(\vq) \right\vert \Pi_{j=1}^{i-1}(1 - u_j) \\
    & \leq \sum_{i=1}^{\numActions-1} \left\vert \alpha_i(\vp) - \alpha_i(\vq) \right\vert  \\
    & = \Vert \alpha(\vp) - \alpha(\vq) \Vert_1
\end{align*}

\end{proof}


\newpage

\section{Potential Extension to Continuous Action Spaces}
Although the stable action sampling mechanism in Horizon Imagination is tailored to discrete action spaces, we propose a direct extension to continuous domains that we leave for future investigation.
Specifically, consider a Gaussian policy with continuous outputs 
\begin{equation*}
    \mu_t, \sigma_t = \pi(\cdot | \obsLatent_{\leq t}, \actions_{<t}) .
\end{equation*}
To adapt the method to the continuous case, we consider two modifications.
First, the initial sample distribution is drawn from a standard normal, $\omega \sim \mathcal{N}(\mathbf{0}, \mathbf{I})$.
Second, using the reparameterization trick, we define the action mapping as
\begin{equation*}
    \action(\policy, \omega) = \mu + \omega \, \sigma .
\end{equation*}
This construction would minimize action changes when the policy distribution parameters $\mu, \sigma$ remain fixed, reduce action differences when they vary, and ensure that $\action(\pi, \omega)$ follows the policy’s distribution, as in the discrete case.
Notably, such an extension may also avoid the computational overhead associated with classifier-guidance methods used in prior work.


\vfill
\newpage

\section{Use of Large Language Models (LLMs)}
In our work, we used LLMs for the following tasks:
\begin{enumerate}
    \item Aid and polish writing. This includes revising original drafts of sentences or short paragraphs for improved phrasing.
    \item Writing python scripts that produce matplotlib plots from raw results. This includes manipulating existing results in raw data formats (e.g., ".csv", ".json", images, videos), and quickly drawing high quality plots using matplotlib.
\end{enumerate}

\end{document}

%% file: math_commands.tex
\usepackage{amsmath,amsfonts,bm}

\def\eqref#1{equation~\ref{#1}}

\def\1{\bm{1}}

\def\rk{{\textnormal{k}}}

\def\rva{{\mathbf{a}}}

\def\rvo{{\mathbf{o}}}

\def\rvx{{\mathbf{x}}}

\def\rvz{{\mathbf{z}}}

\def\erva{{\textnormal{a}}}

\def\vl{{\bm{l}}}

\def\vp{{\bm{p}}}
\def\vq{{\bm{q}}}

\def\vu{{\bm{u}}}

\def\mG{{\bm{G}}}

\def\mK{{\bm{K}}}

\DeclareMathAlphabet{\mathsfit}{\encodingdefault}{\sfdefault}{m}{sl}
\SetMathAlphabet{\mathsfit}{bold}{\encodingdefault}{\sfdefault}{bx}{n}

\def\emK{{K}}

\newcommand{\E}{\mathbb{E}}

\DeclareMathOperator{\sign}{sign}